\newcommand{\onedot}{.}
\def\eg{\emph{e.g}\onedot} 
\def\ie{\emph{i.e}\onedot} 
\def\cf{\emph{cf}\onedot} 
\def\etc{\emph{etc}\onedot}
\def\etal{\emph{et al}\onedot}
\theoremstyle{plain}
\newtheorem{theorem}{Theorem}
\newtheorem{theoremappendix}{Theorem}
\theoremstyle{definition}
\newtheorem{definition}{Definition}
\theoremstyle{remark}
\begin{document}

\newcommand{\mytitle}{Active Negative Loss: A Robust Framework for Learning with Noisy Labels}

\title{\mytitle}

\author{Xichen Ye,
        Yifan Wu, 
        Yiqi Wang,
        Xiaoqiang Li,~\IEEEmembership{Member,~IEEE},\\
        Weizhong Zhang,
        and~Yifan Chen,~\IEEEmembership{Member,~IEEE} 
\thanks{Xichen Ye, Yiqi Wang, and Xiaoqiang Li are with the School of Computer Engineering and Science, Shanghai University, Shanghai, China (e-mail: yexichen0930@shu.edu.cn; wangyq2004@shu.edu.cn; xqli@shu.edu.cn)}
\thanks{Yifan Wu is with the School of Computer Science, Fudan University, Shanghai, China (e-mail: victorwu001219@gmail.com).}
\thanks{Weizhong Zhang is with the School of Data Science, Fudan University, Shanghai, China, and the Shanghai Key Laboratory of Intelligent Information Processing (e-mail: weizhongzhang@fudan.edu.cn).}
\thanks{Yifan Chen is with the Departments of Computer Science and Math, Hong Kong Baptist University, Hong Kong, China (e-mail: yifanc@hkbu.edu.hk).}
\thanks{Xichen Ye and Yifan Wu contributed equally to this work.}
\thanks{Corresponding authors: Xiaoqiang Li and Yifan Chen.}
}

\markboth{Journal of \LaTeX\ Class Files,~Vol.~14, No.~8, August~2021}%
{Shell \MakeLowercase{\textit{et al.}}: A Sample Article Using IEEEtran.cls for IEEE Journals}

\IEEEpubid{0000--0000/00\$00.00~\copyright~2021 IEEE}
\maketitle

\begin{abstract}
Deep supervised learning has achieved remarkable success across a wide range of tasks, yet it remains susceptible to overfitting when confronted with noisy labels. 
To address this issue, noise-robust loss functions offer an effective solution for enhancing learning in the presence of label noise.
In this work, we systematically investigate the limitation of the recently proposed Active Passive Loss (APL), which employs Mean Absolute Error (MAE) as its passive loss function. 
Despite the robustness brought by MAE, one of its key drawbacks is that it pays equal attention to clean and noisy samples;
this feature slows down convergence and potentially makes training difficult, particularly in large-scale datasets. 
To overcome these challenges, we introduce a novel loss function class, termed \emph{Normalized Negative Loss Functions} (NNLFs), which serve as passive loss functions within the APL framework. 
NNLFs effectively address the limitations of MAE by concentrating more on memorized clean samples. 
By replacing MAE in APL with our proposed NNLFs, we enhance APL and present a new framework called \emph{Active Negative Loss} (ANL).
Moreover, in non-symmetric noise scenarios, we propose an entropy-based regularization technique to mitigate the vulnerability to the label imbalance.
Our extensive experiments across various types of label noise, including symmetric, asymmetric, instance-dependent, and real-world noise, demonstrate that the new loss functions adopted by our ANL framework can achieve better or comparative performance compared with the state-of-the-art methods.
In addition, we investigate the application of our ANL framework in image segmentation, further showcasing its superior performance.
The source code of our method is available at: \href{https://github.com/Virusdoll/Active-Negative-Loss}{https://github.com/Virusdoll/Active-Negative-Loss}.
\end{abstract}

\begin{IEEEkeywords}
Classiﬁcation, deep supervised learning, noise-tolerant learning, image segmentation.
\end{IEEEkeywords}


\section{Introduction}
\label{sec:intro}

\IEEEPARstart{R}{ecently}, deep neural networks (DNNs) have made significant strides in a variety of supervised learning tasks, ranging from simple tasks such as image classification~\cite{krizhevsky2009learning, imagenet, he2016deep} to more complex tasks such as image segmentation~\cite{long2015fullySemanticSegmentation, miccai2015unet, he2017maskrcnn}. 
This remarkable achievement is mainly driven by the proficiency of DNNs in capturing subtle and intricate representations within images. 
However, such performance heavily relies on large-scale datasets with precise annotations (often referred to as ``clean'' labels), which are both expensive and time-consuming to obtain. 
Furthermore, the labeling process for large-scale datasets is prone to errors, inevitably leading to ``noisy'' (mislabeled) samples due to human bias or lack of expert knowledge in annotation \cite{han2020survey}. 
Additionally, previous studies have shown that over-parameterized DNNs possess the capacity to fit even completely random labels~\cite{zhang2017understanding, arpit2017closer, jiang2018mentornet}, which suggests that their evaluation performance tends to degrade significantly when trained on noisy data. 
As a result, learning with noisy labels has garnered considerable attention.

To address this challenge, researchers have proposed various strategies to mitigate the adverse effects of noisy labels, and one popular research line is to design noise-robust loss functions, which is the main focus of this paper as well.
\cite{ghosh2017robust}~has theoretically proved that, \textit{symmetric} loss functions (\cf~\Cref{def:sym}) such as Mean Absolute Error (MAE), are inherently robust to noise, while commonly used loss functions like Cross Entropy (CE) are not.
However,
MAE treats all samples equally, generating identical loss gradients with respect to the model’s output probabilities, and thus overly weighs a sample with a high-confidence prediction in training.
This limitation hinders effective learning and suggests that MAE may not be suitable for training DNNs on
complicated datasets~\cite{zhang2018generalized}. 
In response to these drawbacks, several works proposed \emph{partially robust loss functions}, including Generalized Cross Entropy~\cite[GCE]{zhang2018generalized}, a generalized mixture of CE and MAE, and Symmetric Cross Entropy (SCE) \cite{wang2019symmetric}, a combination of CE and a scaled MAE, Reverse Cross Entropy (RCE).

\textbf{\IEEEpubidadjcol}
Recently, Active Passive Loss \cite[APL]{ma2020normalized} framework has been proposed to create fully robust loss functions,
which
shows that any loss function can be made robust to noisy labels by a simple normalization operation.
Specifically, to address the underfitting problem of normalized loss functions, APL first categorizes existing loss functions into two types and then combines them.
The two types are: \ding{182} ``Active'' loss, which only corresponds to the explicit maximization of the probability that a sample is in the labeled class, and \ding{183} ``Passive'' loss, which explicitly minimizes the probabilities of being in other classes.

However, after investigating several robust loss functions under the APL framework, we find that their passive loss functions within are \textbf{always} scaled versions of MAE,
which has been considered training-unfriendly in the previous discussion.
In summary, while APL intended to address underfitting by combining active loss functions with MAE, it inadvertently inherits the training inefficiencies associated with MAE, which may hinder
the model performance and make training more challenging.

In this paper, we propose a new class of passive loss functions beyond MAE, called \emph{Negative Loss Functions} (NLFs).
In constructing NLFs, we combine \ding{182} complementary label learning~\cite{ishida2017learning, yu2018learning} and \ding{183} a simple ``vertical flipping'' operation, and consequently any active loss function can be converted into a passive loss function.
Moreover, to make it theoretically robust to noisy labels, we further apply the normalization operation to NLFs for obtaining \emph{Normalized Negative Loss Functions} (NNLFs).
Through replacing the MAE in APL with NNLF, we coin a novel framework called \emph{Active Negative Loss} (ANL).
ANL combines a normalized active loss functions and an NNLF to build a new set of noise-robust loss functions, which can be seen as an improvement of APL.
We show that under our proposed ANL framework, several commonly-used loss functions can be made robust to noisy labels while ensuring sufficient fitting ability to achieve state-of-the-art performance for training DNNs with noisy datasets.


Moreover, by examining the marginal probabilities for each class in non-symmetric label noise scenarios, we identify the problem of class imbalance.
Specifically, under non-symmetric label noise, each class or sample is assigned non-uniform probabilities of being mislabeled as other classes, resulting in a dataset with imbalanced class distributions.
This imbalance leads to a biased model and degrades its performance.
To mitigate the drawback, we propose an entropy-based regularization approach that steers the model output marginal probabilities to be more balanced.
Experimental results in \Cref{experiments} demonstrate the effectiveness of this technique.

Before delving into details, we highlight the main contributions of this paper from four aspects:
\begin{itemize}
    \item We present an approach to build a new class of robust passive loss functions called \emph{Normalized Negative Loss Function}s (NNLFs).
    By replacing the MAE in APL with our proposed NNLFs, we propose a new loss framework \emph{Active Negative Loss} (ANL).
    \item We analyze label imbalance in non-symmetric label noise scenarios and propose an entropy-based regularization technique to alleviate it.
    \item We theoretically prove that our proposed NNLFs and ANL are robust to noisy labels and discuss the reason why replacing the MAE in APL with our NNLFs will improve performance in noisy label learning.
    \item We conduct extensive experiments on various datasets under various label noise scenarios and empirically demonstrate that the new set of loss functions, developed under our proposed ANL framework, outperforms state-of-the-art methods in both image classification and segmentation tasks.
\end{itemize}
A short version of this manuscript was presented in Neurips 2023 \cite{ye2023active}, where we introduced Active Negative Loss framework for classiﬁcation with noisy labels. 
This paper encloses a comprehensive extension of the conference version;
specifically, \ding{182}~we enhance our approach by introducing an entropy-based regularization technique to address label imbalance;
\ding{183}~additionally, we expand our experiments to cover a broader range of label noise scenarios, including instance-dependent and real-world label noise.
\ding{184}~Furthermore, we extend our evaluation to the image segmentation task, demonstrating the effectiveness of our approach beyond the conventional classification tasks.

The rest of this paper is organized as follows. 
Section~\ref{related} briefly reviews the related work of learning with noisy labels in deep learning. 
Section~\ref{preliminaries} details the preliminaries of this work, including its theoretical basis and motivation.
In Section~\ref{anlf}, we present our proposed ANL framework and NNLFs, along with their theoretical justification.
We validate the efficiency of our method over a range of experiments in Section~\ref{experiments}. 
Section~\ref{conclusion} concludes the paper.

\section{Related Work} 
\label{related}

In this section, we review the related work of this paper, including the works on robust loss function, sample selection, and label correction.

\subsection{Robust Loss Function and Symmetry}
\label{sec:related-robust loss}

Ghosh \etal \cite{ghosh2017robust} have theoretically proved that Cross Entropy (CE) and some other popular loss functions are not robust to label noise because they are \textbf{asymmetric}.
On the other hand, symmetric loss functions like Mean Absolute Error (MAE) are robust to noise, but it takes a relatively longer time to converge.

In order to combine the robustness of MAE and the fast convergence speed of CE, several partially robust loss functions are proposed.
Notable examples include Generalized Cross Entropy (GCE)\cite{zhang2018generalized}, a generalization of MAE and CE, and Symmetric Cross Entropy (SCE)\cite{Wang-2019-SCE}, a combination of scaled MAE and CE.
PHuber-CE\cite{Menon-2020-CanGradClip} further introduces composite-loss gradient clipping, while OGC\cite{DBLP:conf/aaai/OGC} improves this idea by learning an optimized clipping threshold, all of which can be viewed as hybrids that interpolate between CE and MAE.
Furthermore, Feng \etal \cite{Feng-2020-TaylorCE} leveraged the Taylor series to derive an alternative formulation that interpolates between CE and MAE, resulting in the Taylor Cross Entropy (TCE).

Recently, Ma \etal \cite{Ma-2020-APL} proposed Active Passive Loss (\textbf{APL}) framework to create fully robust loss functions,
which categorizes existing loss functions into two types, `Active' type and `Passive' type, and then combines them.
Zhou \etal \cite{Zhou-2023-asymmetric} investigated the effectiveness of asymmetric loss in addressing label noise and proposed Asymmetric Loss Functions (ALFs).
There are also methods enlightened by complementary label learning, such as NLNL\cite{Kim-2019-NLNL} and JNPL\cite{Kim-2021-JNPL}, which use complementary labels to reduce the exposure of the model under wrong information.

\subsection{Other Methods}

We review two other important techniques: 
(i) sample selection and label correction, and 
(ii) Co-teaching frameworks.

\subsubsection{Sample Selection and Label Correction}
Another line of research focuses on identifying or correcting noisy samples. 
Sample-selection methods typically perform sample re-weighting to down-weight unreliable instances\cite{Song-2022-noisy-survey,song2019selfie,Wang-2018-impor-rewei,Chang-2017-active-bias}. 
This category also includes data pruning\cite{cho2025dual}, which filters out potentially noisy samples, though its pruning ratio often requires scenario-specific tuning and can be inefficient in practice. 
Representative selection approaches include MentorNet~\cite{jiang2018mentornet}, which learns a curriculum to guide the student model. 
Label-correction methods instead aim to revise noisy annotations directly~\cite{yu2024fatal,tanaka2018joint}, such as Meta Label Correction~\cite{zheng2017metaLC}, noise-channel models~\cite{Bekker2016unreliable,Liu2024temporal,Misra2016humancentr} or influence-based methods~\cite{ye2025towards}. 

\subsubsection{Co-teacher Framework}
Co-teaching–based methods mitigate the confirmation bias caused by noisy labels by training multiple peer networks that exchange potential knowledge during learning\cite{han2018co}. 
This mutual-teaching mechanism helps each model avoid overfitting to the specific noise patterns memorized by the other. 
Recent advances propose asymmetric variants to further improve robustness, such as ACT~\cite{sheng2024enhancing}, CA2C~\cite{CA2C}, \etc
Although effective, these approaches generally require training and maintaining at least two models simultaneously, resulting in increased computational cost.


\section{Preliminaries} 
\label{preliminaries}

In this section, we first present the formal definition of the classification problem under label noise in Section~\ref{sec: def_RM}. 
We then introduce the label noise model in Section~\ref{sec: label_noise_model}, discussing different types of noise, including symmetric, asymmetric, and instance-dependent noise.
Lastly, we review the recent state-of-the-art Active Negative Loss Functions in Section~\ref{sec: APL}.

\subsection{Risk Minimization under label noise}
\label{sec: def_RM}

Consider a typical K-class classification problem.
Let $\mathcal{X} \subset \mathbb{R}^d$ be the $d$-dimensional feature space from which the samples are drawn, and $\mathcal{Y} = [k] = \{1,\cdots,K\}$ be the label space.
Given a clean training dataset, $\mathcal{S} = \{(\boldsymbol{x}_n,y_n)\}_{n=1}^N$, where each $(\boldsymbol{x}_n,y_n)$ is drawn \emph{i.i.d.} from an unknown distribution, $\mathcal{D}$, over $\mathcal{X} \times \mathcal{Y}$.
We denote the ground-truth distribution over different labels for sample $\boldsymbol{x}$ by $\boldsymbol{q}(k|\boldsymbol{x})$, and $\sum_{k=1}^K \boldsymbol{q}(k|\boldsymbol{x})=1$.
The distribution is singular: since there is only one corresponding label $y$ for a $\boldsymbol{x}$, we have $\boldsymbol{q}(y|\boldsymbol{x})=1$ and $\boldsymbol{q}(k \ne y|\boldsymbol{x}) = 0$.

A classifier is defined as $h(\boldsymbol{x})= \arg \max_i f(\boldsymbol{x})_i$, where $f: \mathcal{X} \to \mathcal{C}$, $\mathcal{C} \subseteq [0, 1]^K$ is a simplex (i.e., $\forall \boldsymbol{c} \in \mathcal{C}$, $\boldsymbol{1}^T \boldsymbol{c} = 1$), mapping feature space to label space.
In this work, $f$ is a DNN with a softmax output layer.
For each sample $\boldsymbol{x}$, $f(\boldsymbol{x})$ computes its probability estimation $\boldsymbol{p}(k|\boldsymbol{x})$ for each class $k \in \{1,\cdots,K\}$, and $\sum_{k=1}^K \boldsymbol{p}(k|\boldsymbol{x})=1$.
Throughout this paper, we refer to $f(\cdot)$ itself as the classifier.
Training a classifier $f(\cdot)$ is to find an optimal classifier $f^*(\cdot)$ that minimizes the empirical risk: $\sum_{n=1}^N \mathcal{L}(f(\boldsymbol{x}_n), y_n)$, in which $\mathcal{L}: \mathcal{C} \times \mathcal{Y} \to \mathbb{R}^+$ is a loss function, and $\mathcal{L}(f(\boldsymbol{x}), k)$ is the loss of $f(\boldsymbol{x})$ with respect to label $k$.

When label noise exists, the model can only access a corrupted dataset $\mathcal{S}_\eta = \{(\boldsymbol{x}_n,\hat{y}_n)\}^N_{n=1}$, where each sample is drawn \emph{i.i.d.} from an unknown distribution $\mathcal{D}\eta$.
In this context, minimizing the empirical risk $\sum_{n=1}^N \mathcal{L}(f(\boldsymbol{x}_n), \hat{y}_n)$ produces a classifier \(\hat{f}^*(\cdot)\) optimized on the corrupted data.
Our goal in noisy label learning is to find a loss function \(\mathcal{L}\) that minimizes the discrepancy between the risks of \(f^*(\cdot)\) and \(\hat{f}^*(\cdot)\).

\subsection{Label Noise Modeling}
\label{sec: label_noise_model}

To model the label noise, we follows convention and formulates the noisy label $\hat{y}$ as ($y$ is the ground-truth label):
\begin{equation}
    \hat{y} = \left\{
    \begin{aligned}
    & y & \text{with probability} & \ (1 - \eta_y), \\
    & j, j \in [k], j \ne y & \text{with probability} & \ \eta_{yj},
    \end{aligned}
    \right.
\end{equation}
where $\eta_{yj}$ denotes the probability that true label $y$ is corrupted into label $j$, and $\eta_y=\sum_{j \ne y}\eta_{yj}$ denotes the noise rate of label $y$.
This model follows the \emph{class-dependent} label noise assumption, where the noise transition probability depends on the sample class or target class.
Additionally, under this class-dependent assumption, we can distinguish two specific types of label noise: symmetric and asymmetric.

For \emph{symmetric} label noise \cite{natarajan2013learning}, the corruption rate is uniform, meaning $\eta_{ij}=\frac{\eta_i}{K-1}, \forall j \ne y$ and $\eta_i = \eta, \forall i \in [k]$, where the whole modeling is decided by a constant $\eta$. 
In this case, each label has an equal chance of being corrupted into any other label.

For \emph{asymmetric} label noise \cite{ma2020normalized, zhou2021asymmetric}, the noise rate $\eta_{ij}$ depends on both the true label $i$ and the corrupted label $j$. 
This typically reflects more realistic scenarios where certain classes are more likely to be confused with others (\eg, automobiles and trucks).

On top of that, we also consider the \emph{instance-dependent} label noise assumption, which has gained increasing attention in recent years, as it better simulates real-world scenarios.
This assumption extends the previous setting by positing that the corruption process depends not only on the true label \(y\) and corrupted label \(\hat{y}\), but also on the input sample \(\bm{x}\) \cite{DBLP:conf/nips/XiaL0WGL0TS20:PDN}.

\subsection{Active Passive Loss Functions}
\label{sec: APL}

Ma \etal \cite{ma2020normalized} proposed the concept of normalized loss functions, where a given loss function $\mathcal{L}$ can be normalized as follows:
\begin{equation}
  \mathcal{L}_{\text{norm}}(f(\bm x), y; \mathcal{L}) = \frac{\mathcal{L}(f(\boldsymbol{x}),y)}{\sum^K_{k=1}\mathcal{L}(f(\boldsymbol{x}),k)}.  
\end{equation}
This simple normalization operation can make any loss function robust to noisy labels.
For instance, the Normalized Cross Entropy (NCE) is defined as (note CE amounts to the negative log likelihood):
\begin{equation}
    \mathcal{L}_\text{NCE}(f(\bm x),y) = \frac{\sum_{k=1}^K \boldsymbol{q}(k|\boldsymbol{x})(- \log \boldsymbol{p}(k|\boldsymbol{x}))}{\sum_{j=1}^K \sum_{k=1}^K \boldsymbol{q}(y=j|\boldsymbol{x}) \log \boldsymbol{p}(k|\boldsymbol{x})}.
\end{equation}
Similarly, Focal Loss (FL), Mean Absolute Error (MAE), and Reverse Cross Entropy (RCE) can be normalized to obtain Normalized Focal Loss (NFL), Normalized Mean Absolute Error (NMAE), and Normalized Reverse Cross Entropy (NRCE), respectively.

However, a normalized loss function alone suffers from underfitting.
To address this issue, Ma \etal \cite{ma2020normalized} categorized loss functions into two types: \emph{Active} and \emph{Passive}.

They first rewrite the loss function $\mathcal{L}(f(\boldsymbol{x}), y)$ in an expectation form, such that $\mathcal{L}(f(\boldsymbol{x}), y) = \sum_{k=1}^K \ell(f(\boldsymbol{x}), k)$ (\eg, let $\mathcal{L}$ be CE, then $\mathcal{L}(f(\boldsymbol{x}),y)$ $=$ $\sum_{k=1}^K \boldsymbol{q}(k|\boldsymbol{x})(- \log \boldsymbol{p}(k|\boldsymbol{x}))$, and $\ell (f(\boldsymbol{x}),k) = \boldsymbol{q}(k|\boldsymbol{x})(- \log \boldsymbol{p}(k|\boldsymbol{x}))$). Based on this, the two types of loss functions are defined as follows:
\begin{definition}[Active loss function]
\label{def:active}
    $\mathcal{L}_\text{Active}$ is an active loss function if $\forall (\boldsymbol{x}, y) \in \mathcal{D}, \forall k \ne y, \ell(f(\boldsymbol{x}), k) = 0$.
\end{definition}
\begin{definition}[Passive loss function]
\label{def:passive}
    $\mathcal{L}_\text{Passive}$ is a passive loss function if $\forall (\boldsymbol{x}, y) \in \mathcal{D}, \exists k \ne y, \ell(f(\boldsymbol{x}),k) \ne 0$.
\end{definition}
Active loss functions focus solely on explicitly maximizing the posterior probability estimation $\boldsymbol{p}(y|\boldsymbol{x})$ for the label $y$, since for the other labels the corresponding gradient remains zero. 
In contrast, passive loss functions not only maximize the probability at $k = y$ but also substantially minimize the probabilities at $k \neq y$.
Accordingly, active loss functions include CE, FL, NCE, and NFL, while passive loss functions include MAE, RCE, NMAE, and NRCE. 

These two types of loss functions can complement each other to alleviate underfitting.
Therefore, Ma \etal~\cite{ma2020normalized} proposed the Active Passive Loss (APL):
\begin{equation}
    \mathcal{L}_{\text{APL}} = \alpha  \mathcal{L}_{\text{Active}} + \beta  \mathcal{L}_{\text{Passive}},
\end{equation}
where $\alpha, \beta >0$ are hyperparameters.
Specifically, by combining NCE and RCE, Ma \etal ~introduced NCE+RCE, which is considered one of the state-of-the-art methods.

\section{Our Approach}
\label{anlf}

In this section, we detail the proposed Active Negative Loss (ANL) framework for classification with noisy labels. 
First, we present a limitation of APL in section~\ref{sec: limitation_APL} to offer insights. 
Next, we introduce the proposed ANL framework which integrates the general active loss functions with the proposed Normalized Negative Loss Functions (NNLFs) in Section~\ref{sec: detail_ANLF}. 
Following that, we propose entropy regularization to address the issue of imbalanced label under non-symmetric label noise
in Section~\ref{sec: entropy_regularize}.

\subsection{Limitations of APL with MAE}
\label{sec: limitation_APL}

As discussed in Section~\ref{sec: APL}, there are four specific passive loss functions remodeled in APL, namely MAE, RCE, NMAE, and NRCE. 
Nevertheless,
all of these passive loss functions are a multiple of MAE, 
and APL can thus be rewritten as:
\begin{equation}
    \mathcal{L}_{\text{APL}} = \alpha  \mathcal{L}_{\text{Active}} + \beta  \mathcal{L}_\text{MAE}.
\end{equation}
This indicates that MAE is an essential feature of APL. 

However, we recall \Cref{sec:related-robust loss} stated MAE tends to require longer training time and even makes learning difficult.
As a result, while APL relies on passive loss functions to address the underfitting of active part, MAE's training inefficiency may hinder APL's overall performance. 
This motivates us to explore new robust passive loss functions.

\subsection{Active Negative Loss Framework}
\label{sec: detail_ANLF}

In response to the limitations of APL, we propose the Active Negative Loss (ANL) framework, which retains the active loss functions from APL but replaces the original passive loss functions with Normalized Negative Loss Functions (NNLFs).

\subsubsection{Normalized Negative Loss Functions}
\label{sec: NNLFs}

The design of the Normalized Negative Loss Functions is guided by three essential objectives:

\begin{itemize}
    \item The loss function should optimize the classifier's output probability for at least one class position that is not specified by the given label.
    \item The loss function should focus on minimizing the classifier’s output probabilities for incorrect labels, rather than maximizing for the correct label alone.
    \item The loss function must be robust to noisy labels, ensuring stable learning despite label corruption.
\end{itemize}

To achieve these objectives, we introduce a method to derive NNLFs from existing active loss functions. This method incorporates three key components: 
\ding{182}~complementary label learning, 
\ding{183}~``vertical flipping'', and 
\ding{184}~normalization. Each of these components addresses the corresponding objectives, ensuring that NNLFs effectively mitigate the challenges posed by noisy labels.
We illustrate these components one by one as follows.

\ding{182}~Complementary label learning \cite{ishida2017learning, yu2018learning} is an indirect learning method for training CNNs. 
In this method, each input is assigned a complementary label, indicating a class that the input does not belong to. 
Unlike conventional training approaches, complementary label learning emphasizes the classifier's predictions for complementary labels, making it naturally suited for passive loss functions in which the loss $\ell(f(\boldsymbol{x}),k)$ for the complementary label $k$ is non-zero (\cf~\Cref{def:passive}). 
In this context, we adopt the core concept of this method by focusing the loss function on all classes $\{1, \cdots, K\}$ except the true labeled class $y$.

\begin{figure}[t]
\vskip -0.0in
\begin{center}
\centerline{\includegraphics[width=0.37\textwidth]{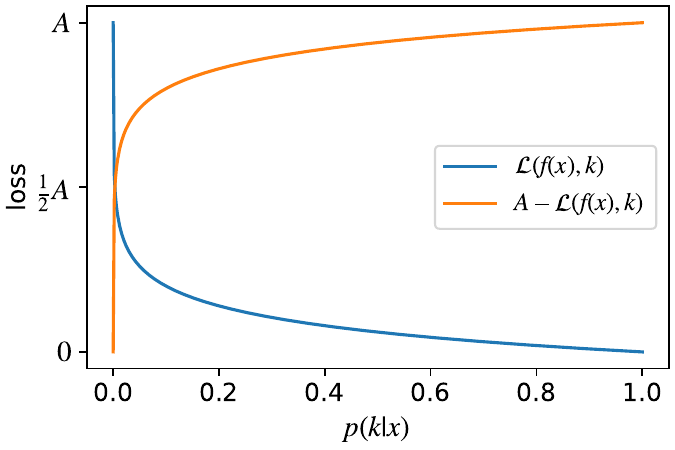}}
\vskip -0.0in
\caption{``Vertical flipping'' operation. $\mathcal{L}(f(\boldsymbol{x}),k)$ is an active loss function, \eg, CE. Loss function $A - \mathcal{L}(f(\boldsymbol{x}),k)$ is obtained by flipping $\mathcal{L}(f(\boldsymbol{x}),k)$ vertically with axis $\text{loss}=\frac{1}{2}A$.}
\label{vertical-flipping}
\end{center}
\vspace{-3mm}
\end{figure}

\ding{183}~``Vertical flipping'' is a straightforward operation that converts a loss function from ``maximizing'' to ``minimizing''.
As shown in the Figure~\ref{vertical-flipping}, given an active loss function $\mathcal{L}(f(\boldsymbol{x}),k)$, the new loss function $A - \mathcal{L}(f(\boldsymbol{x}),k)$ is obtained by flipping $\mathcal{L}(f(\boldsymbol{x}),k)$ vertically with axis $\text{loss}=\frac{1}{2}A$.
Notably, $A - \mathcal{L}(f(\boldsymbol{x}),k)$ is the opposite of $\mathcal{L}(f(\boldsymbol{x}),k)$, effectively shifting the focus toward minimizing the output probability $\boldsymbol{p}(k|\boldsymbol{x})$ to $0$.

Based on these two components (the incorporation of \ding{184} will be deferred to \hyperlink{norm}{the paragraph} two paragraphs down), given an active loss function \(\mathcal{L}\), we can already propose Negative Loss Functions (NLFs), an intermediate form, as follows (note $\boldsymbol{q}(k|\boldsymbol{x})$ is the ground-truth probability and $\boldsymbol{q}(y|\boldsymbol{x}) = 1$):
\begin{equation}
    \mathcal{L}_\text{neg} (f(\boldsymbol{x}),y;\mathcal{L}) = \sum_{k=1}^K \left(1-\boldsymbol{q}(k|\boldsymbol{x})\right) \big(A - \mathcal{L}(f(\boldsymbol{x}),k)\big). 
    \label{eq6}
\end{equation}
Here, $A$ is a constant representing the maximum loss value of $\mathcal{L}$, which ensures that the loss $\mathcal{L}_\text{neg}$ does not fall below zero (for the sake of sensible normalization).
However, this can lead to computational issues during implementation.
For instance, if $\mathcal{L}$ is the cross-entropy loss (CE), then $A = -\log 0 = +\infty$.
To address this, we clip the minimum probability to $p_\text{min} = 1 \times 10^{-7}$ in practice.

Our proposed NLF can make any active loss function into a passive loss function, where a) $1-\boldsymbol{q}(k|\boldsymbol{x})$ ensures that the loss function focuses on classes $\{1,\cdots,K\} \setminus \{y\}$, and b) $A-\mathcal{L}(f(\boldsymbol{x}),k)$ ensures that the loss function minimizes the output probability $\boldsymbol{p}(k|\boldsymbol{x})$.
\hypertarget{norm}{~}

\ding{184}~Next, in order to make our proposed passive loss functions robust to noisy labels, we perform the normalization operation on NLFs.
Given an active loss function $\mathcal{L}$, we propose Normalized Negative Loss Functions (NNLFs) as follows:
\begin{equation}
\label{eq:nnlf}
    \mathcal{L}_\text{nn} (f(\boldsymbol{x}),y; \mathcal{L}) = 1 - \frac{A-\mathcal{L}(f(\boldsymbol{x}),y)}{\sum_{k=1}^K A - \mathcal{L} (f(\boldsymbol{x}),k)},
\end{equation}
where $A$ has the same definition as Eq.~\eqref{eq6}.
The detailed derivation of NNLFs can be found in Appendix~\ref{appendix: loss_functions}.
Also, NNLFs have this property: $\mathcal{L}_\text{nn} \in [0, 1]$.
Accordingly, we can create NNLFs from active loss functions as follows.

The Normalized Negative Cross Entropy (NNCE) is:
\begin{equation}
    \mathcal{L}_\text{NNCE}(f(\bm x), y)
    = 1 - \frac{
    A + \log{\boldsymbol{p}(y|\boldsymbol{x})}
    }{
    \sum^K_{k=1} A + \log{\boldsymbol{p}(k|\boldsymbol{x})}
    };
\end{equation}
the Normalized Negative Focal Loss (NNFL) is:
\fontsize{9.2pt}{12pt}
\begin{align}
    \mathcal{L}_\text{NNFL}(f(\bm x), y)
    = 1 - \frac{
    A + (1 - \boldsymbol{p}(y|\boldsymbol{x}))^\gamma \log{\boldsymbol{p}(y|\boldsymbol{x})}
    }{
    \sum_{k=1}^K A + (1 - \boldsymbol{p}(k|\boldsymbol{x}))^\gamma \log{\boldsymbol{p}(k|\boldsymbol{x})}
    }.
\end{align}
\normalsize
As a closing remark, we note compared to the intermediate form, the unnormalized NLFs, the loss functions proposed above avoid the access to the ground-truth probability $\boldsymbol{q}$. 

\subsubsection{ANL Framework}

We can now create new robust loss functions by replacing the MAE in APL with our proposed NNLF.
Given an active loss function $\mathcal{L}$, we propose Active Negative Loss (ANL) functions as follows:
\begin{equation}
    \mathcal{L}_{\text{ANL}}= \alpha  \mathcal{L}_\text{norm} + \beta  \mathcal{L}_\text{nn}.
\end{equation}

Here, $\alpha, \beta > 0$ are parameters, $\mathcal{L}_\text{norm}$ denotes the normalized $\mathcal{L}$ and $\mathcal{L}_\text{nn}$ denotes the Normalized Negative Loss Function corresponding to $\mathcal{L}$. Accordingly, we can create ANL from the two mentioned active loss functions as follows.

For Cross Entropy (CE), we have ANL-CE:
\begin{equation}
\label{eq:anl-ce}
    \mathcal{L}_{\text{ANL-CE}}=\alpha  \mathcal{L}_{\text{NCE}} + \beta  \mathcal{L}_{\text{NNCE}}.
\end{equation}

For Focal Loss (FL), we have ANL-FL:
\begin{equation}
    \mathcal{L}_{\text{ANL-FL}}=\alpha  \mathcal{L}_{\text{NFL}} + \beta  \mathcal{L}_{\text{NNFL}}.
\end{equation}

\subsection{Entropy Regularization Against Imbalanced Label}
\label{sec: entropy_regularize}

\begin{figure}[t]
\begin{center}
\centerline{\includegraphics[width=0.49\textwidth]{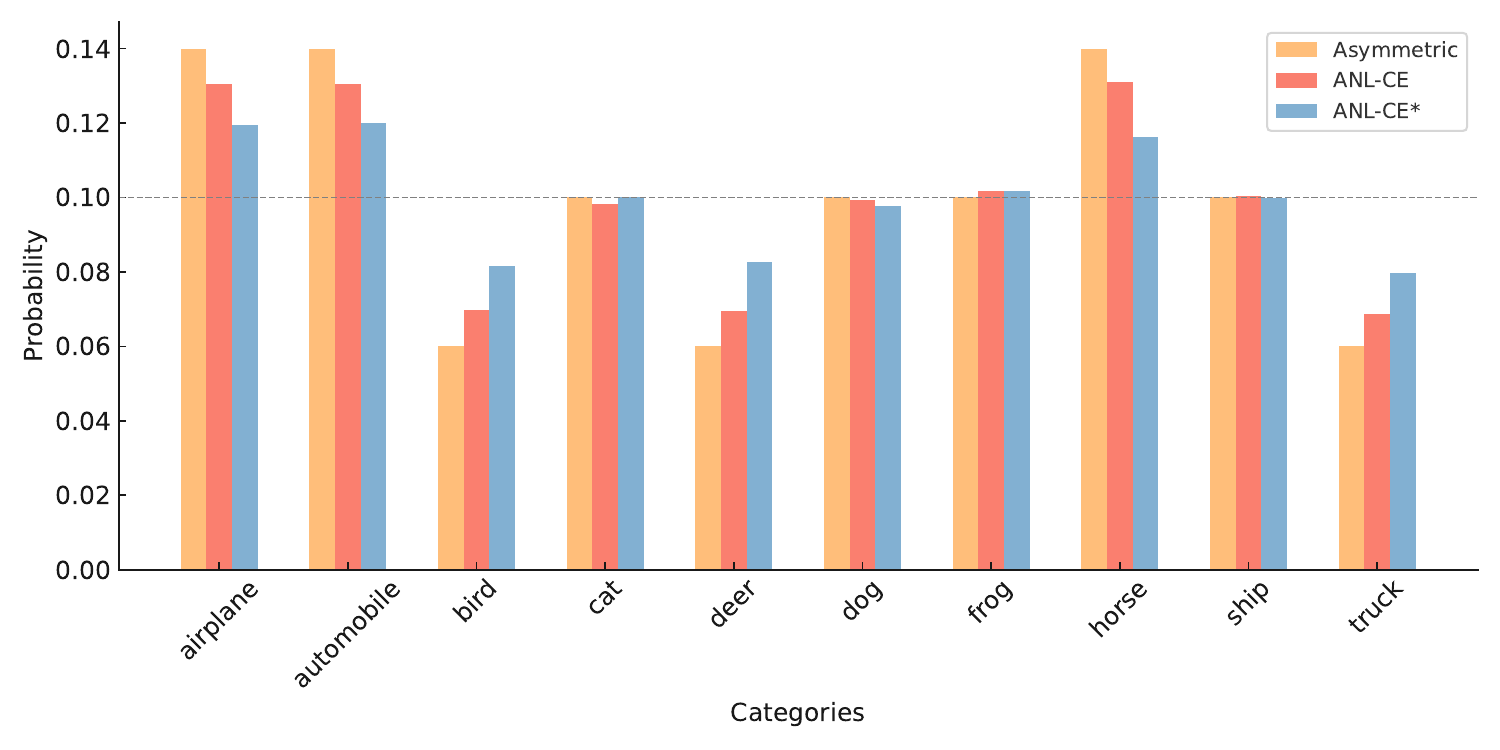}}
\caption{
    Marginal probability of each class on the CIFAR-10 training set with 40\% asymmetric label noise, along with model predictions using different approaches.
    ANL-CE/ANL-CE* is introduced in \Cref{eq:anl-ce}/\Cref{eq:anl-ce-star}, respectively.
}
\label{unbalanced label}
\end{center}
\vspace{-3mm}
\end{figure}

Although our proposed ANL framework has demonstrated outstanding performance in most label noise scenarios, it still faces challenges from asymmetric label noise at extremely high noise rates.
We attribute this issue to \textit{label imbalance} caused by the non-symmetric label noise.

To better understand this issue, we calculate the marginal probability of each class on CIFAR-10 training set under 40\% asymmetric label noise and compare it to the predictions of the model trained with ANL-CE, as shown in Figure~\ref{unbalanced label}.
It can be observed that the label distribution in the training data, corrupted by asymmetric label noise, is imbalanced.
Furthermore, our proposed ANL framework is vulnerable to this imbalance.

To this end, inspired by previous works \cite{li2021contrastive}, we introduce the following entropy regularization to mitigate the issue of imbalanced label:
\begin{equation}
    \mathcal{L}_\text{reg}(\bm \pi) = \log K + \sum_{k=1}^K \bm{\pi}(k) \log \bm{\pi}(k),
\end{equation}
where \(\bm{\pi}(k)\) represents the average probability that the model categorizes a sample as class \(k\) and is given by \(\bm{\pi}(k) = \frac{1}{N} \sum_{n=1}^N \bm{p}(k|\bm{x}_n)\), with \(N\) being the total number of samples.
In practice, we estimate \(\bm{\pi}(k)\) for each training batch as \(\bm{\pi}(k) = \frac{1}{B} \sum_{n=1}^B \bm{p}(k|\bm{x}_n)\), where \(B\) represents the batch size.
Following the principle of maximum entropy, this regularization term encourages the marginal probability for each class to be as uniform as possible, and particularly \(\mathcal{L}_\text{reg} = 0\) when \(\bm{\pi}(k) = \frac{1}{K}\) for all \(k\).

Finally, we integrate this regularization term into our ANL framework as follows:
\begin{equation}
\label{eq:anl-ce-star}
    \mathcal{L}_{\text{ANL*}}= \alpha  \mathcal{L}_\text{norm} + \beta  \mathcal{L}_\text{nn} + \lambda  \mathcal{L}_\text{reg},
\end{equation}
where \(\lambda\) is the weighting factor for the regularization term, and along this paper * indicates the ANL loss enhanced with this entropy regularizer.

\section{Theoretical Justification}

In this section, we theoretically investigate our proposed loss functions. First, we introduce the general properties of Active Negative Loss (ANL) functions, including their symmetric nature and noise tolerance, in Section~\ref{sec: properties_ANL}. 
Then in Section~\ref{sec: analysis_NNLFs} we provide a theoretical exploration and analysis of the gradients in using our Normalized Negative Loss Functions (compared to MAE).
The detailed proofs for theorems and corollaries can be found in the technical appendix, available in the online supplemental material.

\subsection{Proporties of Active Negative Loss}
\label{sec: properties_ANL}

In the following, we will prove that our proposed Normalized Negative Loss Functions (NNLFs) and Active Negative Loss (ANL) functions are robust to both \emph{symmetric} and \emph{asymmetric} label noise, theoretically enjoying the same guarantees as in \cite{ghosh2017robust, ma2020normalized}. 

\subsubsection{NNLFs are symmetric}

To ease the statement in the following proofs of noise robustness, we first prove that our proposed Normalized Negative Loss Functions (NNLFs) are symmetric.
We first review the definition of symmetric loss functions.
\begin{definition}[Symmetric loss function]
\label{def:sym}
A loss function $\mathcal{L}$ is called symmetric if it satisfies, for some constant $C$, $\sum_{k=1}^K \mathcal{L}(f(\boldsymbol{x}),k)=C$, $\forall \boldsymbol{x} \in \mathcal{X}$, $\forall f$.
\end{definition}
We then have a guarantee:
\begin{theorem}
    \label{thm:symmetric}
    Normalized negative loss function $\mathcal{L}_\text{nn}$, as specified in \Cref{eq:nnlf}, is symmetric.
\end{theorem}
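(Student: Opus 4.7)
The plan is to verify the symmetry condition directly from the closed-form expression of $\mathcal{L}_\text{nn}$ given in \Cref{eq:nnlf}. The key observation is that the denominator $\sum_{k=1}^K (A - \mathcal{L}(f(\boldsymbol{x}),k))$ depends only on $f$ and $\boldsymbol{x}$, not on the label argument $y$. Hence when we sum $\mathcal{L}_\text{nn}(f(\boldsymbol{x}),y)$ over $y \in \{1,\ldots,K\}$, the denominator pulls out as a common factor and the numerator becomes exactly that same denominator, producing a clean cancellation.

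Concretely, I would compute
\begin{equation*}
\sum_{y=1}^K \mathcal{L}_\text{nn}(f(\boldsymbol{x}),y;\mathcal{L})
= \sum_{y=1}^K \left(1 - \frac{A - \mathcal{L}(f(\boldsymbol{x}),y)}{\sum_{k=1}^K \bigl(A - \mathcal{L}(f(\boldsymbol{x}),k)\bigr)}\right),
\end{equation*}
split the sum into the constant piece $\sum_{y=1}^K 1 = K$ and the fractional piece, and then observe that
\begin{equation*}
\sum_{y=1}^K \bigl(A - \mathcal{L}(f(\boldsymbol{x}),y)\bigr) = \sum_{k=1}^K \bigl(A - \mathcal{L}(f(\boldsymbol{x}),k)\bigr),
\end{equation*}
so the fractional piece collapses to $1$. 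This leaves $C = K - 1$, which is independent of $f$ and $\boldsymbol{x}$, matching \Cref{def:sym}.

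The only nontrivial issue is to ensure the denominator is strictly positive so that the expression is well-defined. Since $A$ is chosen as the maximum value of $\mathcal{L}$, each term $A - \mathcal{L}(f(\boldsymbol{x}),k) \geq 0$; moreover because $f(\boldsymbol{x})$ lies in the simplex, the probabilities cannot all simultaneously equal the minimizer of $\mathcal{L}$, so at least one term in the sum is strictly positive. (In practice, as noted after Eq.~\eqref{eq6}, clipping the probability to $p_\text{min}=10^{-7}$ keeps $A$ finite and the denominator bounded away from zero.) I would include a brief sentence on this well-definedness, since it is the only step that could be mistaken as circular. Overall, the proof is essentially a one-line algebraic manipulation, and there is no genuinely hard obstacle beyond making the well-definedness caveat explicit.
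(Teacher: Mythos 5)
Your proposal is correct and follows essentially the same argument as the paper: sum $\mathcal{L}_\text{nn}(f(\boldsymbol{x}),k)$ over all $K$ labels, note the label-independent denominator cancels with the summed numerators, and obtain the constant $C=K-1$. The added remark on positivity of the denominator is a harmless extra precaution not present in the paper's proof.
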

Detailed proofs can be found in Appendix~\ref{appendix: noise tolerant}.
The general properties of symmetric loss functions have been well studied, and we refer reader to \cite{charoenphakdee2019symmetric} for more discussions on topics such as classification calibration and excess risk bound.

\subsubsection{NNLFs are robust to label noise}

For any loss function $\mathcal{L}$, given a classifier $f$, the $\mathcal{L}$-risk is defined as $R_\mathcal{L}(f) = \mathbb{E}_{\boldsymbol{x},y}[\mathcal{L}(f(\boldsymbol{x}),y)]$, where $\mathbb{E}$ is the expectation operator and its subscript denotes on which the expectation is taken.
Under risk minimization framework, the objective is to learn a global minimum of $R_\mathcal{L}(f)$, which is an optimal classifier $f^*(\cdot)$.
Similarly, the $\mathcal{L}$-risk under noisy data is defined as $R_\mathcal{L}^\eta(f) = \mathbb{E}_{\boldsymbol{x},\hat{y}}[\mathcal{L}(f(\boldsymbol{x}),\hat{y})]$, and we use $f^*_\eta(\cdot)$ to denote the global optimal classifier of $R_\mathcal{L}^\eta(f)$.
We then introduce the definition for being noise tolerant.
\begin{definition}[Noise tolerance~\cite{ghosh2017robust}.]
\label{def:tolerance}
A loss function $\mathcal{L}$ is \emph{noise tolerant}, if the global minimizer of $R_{\mathcal{L}}^{\eta}(f)$, $f^*_{\eta}$, is also the global minimum of $R_\mathcal{L}(f)$, $f^*$.
\end{definition}
By the definition above, for a noise tolerant loss function $\mathcal{L}$, its $f^*_\eta$ and $f^*$ will have the same misclassification probability on clean data, 
which reflects that the loss function $\mathcal{L}$ is robust to noise.
We show $\mathcal{L}_{nn}$ is noise tolerant as follows (we recall the definitions of different noise rates can be found in \Cref{sec: label_noise_model}):
\begin{theorem}
    \label{thm:sym-noise-robust}
    In a multi-class classification problem, any normalized negative loss function $\mathcal{L}_{nn}$ is noise tolerant under symmetric label noise, if noise rate $\eta< \frac{K-1}{K}$.
\end{theorem}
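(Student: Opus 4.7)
The plan is to follow the classical argument of \cite{ghosh2017robust} for symmetric losses, leveraging the symmetry of $\mathcal{L}_{nn}$ already established in \Cref{thm:symmetric}. The key observation is that under symmetric noise with rate $\eta$, the noisy risk decomposes into an affine function of the clean risk, with a strictly positive slope whenever $\eta < (K-1)/K$. Hence minimizing the two risks is equivalent, and any global minimizer of the noisy risk is a global minimizer of the clean risk.

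Concretely, first I would write out the noisy risk as an iterated expectation, conditioning on the true label $y$:
\begin{equation*}
R^{\eta}_{\mathcal{L}_{nn}}(f) = \mathbb{E}_{\bm{x},y}\,\mathbb{E}_{\hat{y}\mid y,\bm{x}}\bigl[\mathcal{L}_{nn}(f(\bm{x}),\hat{y})\bigr].
\end{equation*}
Under the symmetric-noise model of \Cref{sec: label_noise_model}, the inner expectation is $(1-\eta)\mathcal{L}_{nn}(f(\bm{x}),y) + \tfrac{\eta}{K-1}\sum_{k\neq y}\mathcal{L}_{nn}(f(\bm{x}),k)$. Next, I would invoke \Cref{thm:symmetric} to substitute $\sum_{k\neq y}\mathcal{L}_{nn}(f(\bm{x}),k) = C - \mathcal{L}_{nn}(f(\bm{x}),y)$ for the symmetry constant $C$ (which, from \Cref{eq:nnlf}, evaluates to $K-1$, but the specific value is immaterial here).

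After substitution and collecting terms, the noisy risk becomes
\begin{equation*}
R^{\eta}_{\mathcal{L}_{nn}}(f) \;=\; \frac{K-1-K\eta}{K-1}\, R_{\mathcal{L}_{nn}}(f) \;+\; \frac{\eta\, C}{K-1}.
\end{equation*}
The constant additive term is independent of $f$, and the multiplicative coefficient $(K-1-K\eta)/(K-1)$ is strictly positive precisely when $\eta < (K-1)/K$. Consequently, for any $f$, $R^{\eta}_{\mathcal{L}_{nn}}(f^{*}) - R^{\eta}_{\mathcal{L}_{nn}}(f) = \tfrac{K-1-K\eta}{K-1}\bigl(R_{\mathcal{L}_{nn}}(f^{*}) - R_{\mathcal{L}_{nn}}(f)\bigr) \le 0$ iff the clean-risk gap has the same sign, so $f^{*}$ and $f^{*}_{\eta}$ coincide as global minimizers, matching \Cref{def:tolerance}.

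There is essentially no hard obstacle once symmetry is in hand; the entire argument is a conditioning calculation plus a sign check on the slope. The only subtlety I would flag is ensuring that the symmetry constant $C$ is finite, which is why the construction in \Cref{sec: NNLFs} introduces the clipping at $p_{\min}$ and the constant $A$ in \Cref{eq6}: without a finite $A$ the normalization in \Cref{eq:nnlf} would not yield a well-defined symmetric loss, and the affine decomposition above would break down. With that caveat noted, the theorem follows immediately.
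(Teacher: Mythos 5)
Your proof is correct, but it takes a different route from the paper's. The paper proves this theorem by reduction: it invokes its symmetry result (\Cref{thm:symmetric}), observes that the scaled loss $\tfrac{1}{K-1}\mathcal{L}_{\text{nn}}$ lies in $[0,1]$ and sums to a constant over classes, and then substitutes it into Lemma~1 of the APL paper \cite{ma2020normalized}, which already asserts noise tolerance of normalized (symmetric) losses under symmetric noise for $\eta<\tfrac{K-1}{K}$. You instead unroll the content of that cited lemma and give the self-contained Ghosh-style computation: conditioning on the true label, using $\sum_{k}\mathcal{L}_{\text{nn}}(f(\bm{x}),k)=C$ to obtain the affine identity
\begin{equation*}
R^{\eta}_{\mathcal{L}_{\text{nn}}}(f)=\frac{K-1-K\eta}{K-1}\,R_{\mathcal{L}_{\text{nn}}}(f)+\frac{\eta C}{K-1},
\end{equation*}
and concluding from the positive slope that the global minimizers of the noisy and clean risks coincide. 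The two arguments are mathematically equivalent at bottom (Lemma~1 of \cite{ma2020normalized} is itself proved by exactly this decomposition), so what differs is the presentation: your version is more transparent, makes the role of the symmetry constant and the threshold $\eta<\tfrac{K-1}{K}$ explicit, and does not need the boundedness check $\tfrac{1}{K-1}\mathcal{L}_{\text{nn}}\in[0,1]$ (which is only needed for the asymmetric-noise case), whereas the paper's version is shorter and keeps the theorem aligned with the normalized-loss machinery it reuses for \Cref{thm:asym-noise-robust} and \Cref{thm:anl-noise-robust}. Your closing remark about the finiteness of $A$ (via probability clipping) is a sensible caveat and does not affect correctness.
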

\begin{theorem}
    \label{thm:asym-noise-robust}
    In a multi-class classification problem, suppose $\mathcal{L}_{nn}$ satisfies $0 \le \mathcal{L}_{nn}(f(\boldsymbol{x}),k) \le 1$, $\forall k \in \{1,\cdots,K\}$.
    If $R(f^*)=0$, then, any normalized negative loss function $\mathcal{L}_{nn}$ is noise tolerant under asymmetric label noise when noise rate $\eta_{yk}<1-\eta_y$.
\end{theorem}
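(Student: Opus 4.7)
The plan is to show, under the stated hypotheses, that the clean-data minimizer $f^*$ is also a global minimizer of the noisy risk $R^\eta_{\mathcal{L}_{nn}}$, and moreover that \emph{every} minimizer of $R^\eta_{\mathcal{L}_{nn}}$ must share its pointwise loss profile, so that it achieves $R_{\mathcal{L}_{nn}}(f)=0$ as well. The argument naturally splits into a characterization step (which uses the hypothesis $R(f^*)=0$ together with \Cref{thm:symmetric}) and a pointwise minimization step (which exploits the strict asymmetric bound $\eta_{yk}<1-\eta_y$).

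First I would extract the pointwise profile of $f^*$. Since $\mathcal{L}_{nn}\ge 0$ and $R_{\mathcal{L}_{nn}}(f^*)=0$, for almost every $(\boldsymbol{x},y)$ in the support of $\mathcal{D}$ we must have $\mathcal{L}_{nn}(f^*(\boldsymbol{x}),y)=0$. A direct calculation on \Cref{eq:nnlf} (which re-derives \Cref{thm:symmetric}) pins the symmetric constant to $\sum_{k=1}^K \mathcal{L}_{nn}(f(\boldsymbol{x}),k)=K-1$, so together with the per-class bound $\mathcal{L}_{nn}(\cdot,k)\le 1$ this forces $\mathcal{L}_{nn}(f^*(\boldsymbol{x}),k)=1$ for every $k\ne y$. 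Plugging this into the noisy-risk decomposition
\[
R^\eta_{\mathcal{L}_{nn}}(f)=\mathbb{E}_{\boldsymbol{x},y}\Big[(1-\eta_y)\,\mathcal{L}_{nn}(f(\boldsymbol{x}),y) + \sum_{k\ne y}\eta_{yk}\,\mathcal{L}_{nn}(f(\boldsymbol{x}),k)\Big]
\]
collapses $R^\eta_{\mathcal{L}_{nn}}(f^*)$ to $\mathbb{E}_{\boldsymbol{x},y}[\eta_y]$.

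The core step is then a pointwise minimization. For fixed $(\boldsymbol{x},y)$, let $a=\mathcal{L}_{nn}(f(\boldsymbol{x}),y)$ and $b_k=\mathcal{L}_{nn}(f(\boldsymbol{x}),k)$ for $k\ne y$; the feasibility constraints are $a,b_k\in[0,1]$ together with the symmetry identity $a+\sum_{k\ne y}b_k=K-1$. Substituting $a=K-1-\sum_{k\ne y}b_k$, the conditional noisy loss rewrites as
\[
(1-\eta_y)(K-1) + \sum_{k\ne y}\big[\eta_{yk}-(1-\eta_y)\big]\,b_k,
\]
whose coefficients on $b_k$ are strictly negative by the hypothesis $\eta_{yk}<1-\eta_y$. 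The minimum is therefore attained at $b_k=1$ for all $k\ne y$ (equivalently $a=0$), with value $\eta_y$. Integrating over $(\boldsymbol{x},y)$ gives $R^\eta_{\mathcal{L}_{nn}}(f)\ge\mathbb{E}[\eta_y]=R^\eta_{\mathcal{L}_{nn}}(f^*)$, so $f^*$ is indeed a global minimizer of $R^\eta_{\mathcal{L}_{nn}}$.

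Finally, \Cref{def:tolerance} asks for the reverse direction as well. This is where the \emph{strict} inequality in the hypothesis earns its keep: because every bracket above is strictly negative, the pointwise minimum is uniquely attained, so any optimizer $f^*_\eta$ of $R^\eta_{\mathcal{L}_{nn}}$ must satisfy $\mathcal{L}_{nn}(f^*_\eta(\boldsymbol{x}),y)=0$ almost everywhere on $\mathcal{D}$, which immediately yields $R_{\mathcal{L}_{nn}}(f^*_\eta)=0=R_{\mathcal{L}_{nn}}(f^*)$. I expect the main obstacle to be clerical rather than conceptual, namely tracking the ``almost every $(\boldsymbol{x},y)$'' qualifier carefully so that the pointwise uniqueness lifts to a statement about the minimizer as a function; this is routine once the strict gap $\eta_{yk}<1-\eta_y$ in the hypothesis is in hand.
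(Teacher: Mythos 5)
Your proof is correct, but it takes a genuinely different route from the paper. The paper's own argument is a two-step reduction: it notes that $\tfrac{1}{K-1}\mathcal{L}_{nn}$ is symmetric (by Theorem~1) with the boundedness $0\le\tfrac{1}{K-1}\mathcal{L}_{nn}\le\tfrac{1}{K-1}$ required of a normalized loss, and then simply invokes Lemma~2 of Ma et al.\ (the APL paper) to conclude noise tolerance under $\eta_{yk}<1-\eta_y$ and $R(f^*)=0$. You instead re-derive that lemma from scratch: you pin down the loss profile of $f^*$ (loss $0$ at the true class, $1$ at every other class, forced by the symmetric constant $K-1$ together with the bound $\mathcal{L}_{nn}\le 1$), write the class-conditional noisy risk as $\mathbb{E}_{\boldsymbol{x},y}[(1-\eta_y)\mathcal{L}_{nn}(f(\boldsymbol{x}),y)+\sum_{k\ne y}\eta_{yk}\mathcal{L}_{nn}(f(\boldsymbol{x}),k)]$, and solve the pointwise linear program over the polytope cut out by the symmetry identity and the $[0,1]$ bounds, where the strict negativity of the coefficients $\eta_{yk}-(1-\eta_y)$ yields both the lower bound $R^\eta(f)\ge\mathbb{E}[\eta_y]=R^\eta(f^*)$ and the uniqueness of the minimizing profile, hence $R(f^*_\eta)=0$. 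This is essentially Ghosh et al.'s original class-conditional argument specialized to $C=K-1$, organized as a pointwise minimization. What each approach buys: the paper's citation-based proof is shorter and leverages prior results, while yours is self-contained, makes explicit exactly where the hypotheses $\mathcal{L}_{nn}\le 1$, $R(f^*)=0$, and the strict gap $\eta_{yk}<1-\eta_y$ enter, and explicitly verifies both directions demanded by the noise-tolerance definition (that $f^*$ minimizes the noisy risk and that every noisy minimizer is clean-optimal). The only points needing care, which you already flag, are the measure-theoretic "almost everywhere" bookkeeping and the observation that relaxing the feasible set to the full polytope is harmless because it is only used for a lower bound whose value is attained by $f^*$ itself.
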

Detailed proofs are provided in Appendix~\ref{appendix: noise tolerant}.
\Cref{thm:sym-noise-robust,,thm:asym-noise-robust} show that under some mild assumptions, our proposed NNLFs are noise tolerant, which means they are robust to noise.

\subsubsection{ANL is robust to noise}

Similar to APL, we can show that our ANL is also robust to label noise. 
\begin{theorem}
    \label{thm:anl-noise-robust}
    $\forall \alpha, \beta$, the combination $\mathcal{L}_\text{ANL} = \alpha  \mathcal{L}_\text{norm} + \beta  \mathcal{L}_\text{nn}$ is noise tolerant under both symmetric and asymmetric label noise.
\end{theorem}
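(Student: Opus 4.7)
The plan is to reduce the statement for the combined loss to the noise tolerance of its two components, which are already available: $\mathcal{L}_\text{norm}$ is known to be noise tolerant from Ma \etal~\cite{ma2020normalized}, and $\mathcal{L}_\text{nn}$ has just been shown to be noise tolerant in \Cref{thm:sym-noise-robust,thm:asym-noise-robust}. Since $R^\eta_{\mathcal{L}_\text{ANL}}(f) = \alpha R^\eta_{\mathcal{L}_\text{norm}}(f) + \beta R^\eta_{\mathcal{L}_\text{nn}}(f)$ by linearity of expectation (and the same decomposition on clean data), we should be able to argue that any global minimizer $f^*$ of $R_{\mathcal{L}_\text{ANL}}$ remains the global minimizer of $R^\eta_{\mathcal{L}_\text{ANL}}$.

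\textbf{Symmetric Noise Case.} First I would observe that $\mathcal{L}_\text{norm}$ is symmetric with $\sum_k \mathcal{L}_\text{norm}(f(\boldsymbol{x}),k) = 1$ (by construction of the normalization), and $\mathcal{L}_\text{nn}$ is symmetric by \Cref{thm:symmetric} with some constant sum $C$. Hence the positive combination $\mathcal{L}_\text{ANL}$ is itself symmetric with $\sum_k \mathcal{L}_\text{ANL}(f(\boldsymbol{x}),k) = \alpha + \beta C$. Following the canonical symmetric-noise calculation of Ghosh \etal, I would rewrite
\begin{equation}
R^\eta_{\mathcal{L}_\text{ANL}}(f) = \paren{1 - \tfrac{\eta K}{K-1}} R_{\mathcal{L}_\text{ANL}}(f) + \tfrac{\eta(\alpha+\beta C)}{K-1},
\end{equation}
so that for $\eta < \frac{K-1}{K}$ the multiplicative factor is strictly positive, and the minimizers of $R^\eta_{\mathcal{L}_\text{ANL}}$ and $R_{\mathcal{L}_\text{ANL}}$ coincide.

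\textbf{Asymmetric Noise Case.} Here I would assume, as in \Cref{thm:asym-noise-robust} (and the analogous statement for $\mathcal{L}_\text{norm}$ in \cite{ma2020normalized}), that $R(f^*) = 0$ and $\eta_{yk} < 1 - \eta_y$. Because each component is individually noise tolerant under these conditions, $f^*$ minimizes both $R^\eta_{\mathcal{L}_\text{norm}}$ and $R^\eta_{\mathcal{L}_\text{nn}}$. Thus for any classifier $f$,
\begin{equation}
R^\eta_{\mathcal{L}_\text{ANL}}(f^*) = \alpha R^\eta_{\mathcal{L}_\text{norm}}(f^*) + \beta R^\eta_{\mathcal{L}_\text{nn}}(f^*) \le \alpha R^\eta_{\mathcal{L}_\text{norm}}(f) + \beta R^\eta_{\mathcal{L}_\text{nn}}(f) = R^\eta_{\mathcal{L}_\text{ANL}}(f),
\end{equation}
so $f^*$ is also a global minimizer of the noisy ANL risk, giving noise tolerance by \Cref{def:tolerance}.

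\textbf{Main Obstacle.} The symmetric part is essentially a two-line verification once \Cref{thm:symmetric} is in hand. The delicate step is the asymmetric case: both $\mathcal{L}_\text{norm}$ and $\mathcal{L}_\text{nn}$ require the hypothesis $R(f^*)=0$ together with the bound $\mathcal{L}_\text{nn}\in[0,1]$ and the componentwise noise-rate constraint, and one must be careful that these hypotheses transfer correctly to $\mathcal{L}_\text{ANL}$ (in particular, the decomposition $R_{\mathcal{L}_\text{ANL}}(f^*)=0$ forces both $R_{\mathcal{L}_\text{norm}}(f^*)=0$ and $R_{\mathcal{L}_\text{nn}}(f^*)=0$ since both are non-negative, which is exactly what is needed to invoke the two component theorems). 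Once this bookkeeping is made explicit, the argument closes by linearity as above.
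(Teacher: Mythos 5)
Your proposal is correct in spirit but follows a genuinely more self-contained route than the paper. The paper's own proof is essentially a citation: it invokes Lemma~3 of \cite{ma2020normalized} (a weighted combination $\alpha\mathcal{L}_\text{Active}+\beta\mathcal{L}_\text{Passive}$ of two noise-tolerant losses is itself noise tolerant) and then plugs in the noise tolerance of $\mathcal{L}_\text{norm}$ (Lemmas~1--2 of \cite{ma2020normalized}) and of $\mathcal{L}_\text{nn}$ (\Cref{thm:sym-noise-robust,thm:asym-noise-robust}). You instead re-derive the content of that lemma. Your symmetric-noise argument is complete and arguably cleaner than the citation: $\mathcal{L}_\text{ANL}$ is symmetric with $\sum_k\mathcal{L}_\text{ANL}(f(\boldsymbol{x}),k)=\alpha+\beta(K-1)$ (using $\sum_k\mathcal{L}_\text{norm}=1$ and \Cref{thm:symmetric}), and the affine identity you write down makes the clean and noisy minimizer sets coincide whenever $\eta<\frac{K-1}{K}$, which gives noise tolerance in both directions at once.

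The step that needs tightening is the asymmetric case. \Cref{def:tolerance} states only that a minimizer of the noisy risk is a minimizer of the clean risk; it does not, as stated, license the inference ``each component is noise tolerant, hence $f^*$ minimizes $R^\eta_{\mathcal{L}_\text{norm}}$ and $R^\eta_{\mathcal{L}_\text{nn}}$,'' which is the converse direction. What you actually need is the stronger fact proved inside the component results (the class-conditional argument of \cite{ghosh2017robust}, used in Lemma~2 of \cite{ma2020normalized} and hence in \Cref{thm:asym-noise-robust}): under $R(f^*)=0$, the boundedness condition, and $\eta_{yk}<1-\eta_y$, one has $R^\eta(f^*)\le R^\eta(f)$ for every $f$. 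Granting that, your linearity argument goes through, and your observation that $R_{\mathcal{L}_\text{ANL}}(f^*)=0$ forces both non-negative component risks to vanish at $f^*$ is exactly the right bookkeeping for transferring the hypothesis. Finally, to match \Cref{def:tolerance} you still owe the converse step: if $f^*_\eta$ minimizes $R^\eta_{\mathcal{L}_\text{ANL}}$, then $R^\eta_{\mathcal{L}_\text{ANL}}(f^*_\eta)=R^\eta_{\mathcal{L}_\text{ANL}}(f^*)$, and tracing the same inequalities backwards (as in the Ghosh-style proof) forces $R_{\mathcal{L}_\text{ANL}}(f^*_\eta)=0$, so $f^*_\eta$ is also a clean global minimizer. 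With these two patches your argument is a valid, citation-free alternative to the paper's proof; without them, the asymmetric half only shows that a clean minimizer is a noisy minimizer, which is not literally the claimed property.
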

The proofs are outlined in Appendix~\ref{appendix: noise tolerant}.
Theorem~\ref{thm:anl-noise-robust} shows that the combination of normalized loss functions and our NNLFs are robust to noise.

\subsection{Gradient Analysis of NNLFs}
\label{sec: analysis_NNLFs}


As shown in Figure~\ref{fig: overfitting}, by replacing MAE with our proposed NNCE, ANL-CE
show better performance.
This raises the question: \emph{why does NNLF outperform MAE?}
To provide insight into this, we take NNCE as an example and analyze the gradients of MAE and NNCE.
Detailed derivations and proofs can be found in Appendix~\ref{appendix: gradient}.

The gradient of the MAE with respect to the classifier's output probability can be derived as:
\begin{equation}
\label{eq:grad-mae}
    \frac{\partial \mathcal{L}_\text{MAE}}{\partial \boldsymbol{p}(j|\boldsymbol{x})} =
    \begin{cases}
    \phantom{-}1, & j \ne y, \\
    -1, & j = y.
    \end{cases}
\end{equation}
The gradient of the NNCE with respect to the classifier's output probability can be derived as:
\begin{equation}
    \frac{\partial \mathcal{L}_\text{NNCE}}{\partial \boldsymbol{p}(j|\boldsymbol{x})} =
    \begin{cases}
    \phantom{-} \frac{1}{\boldsymbol{p}(j|\boldsymbol{x})} \frac{A + \log{\boldsymbol{p}(y|\boldsymbol{x})}}{\left( \sum_{k=1}^K \left( A + \log{\boldsymbol{p}(k|\boldsymbol{x})} \right) \right)^2}, & j \ne y, \\
    - \frac{1}{\boldsymbol{p}(y|\boldsymbol{x})} \frac{\sum_{k \ne y} \left( A + \log{\boldsymbol{p}(k|\boldsymbol{x})} \right)}{\left( \sum_{k=1}^K \left( A + \log{\boldsymbol{p}(k|\boldsymbol{x})} \right) \right)^2}, & j = y.
    \end{cases}
\end{equation}
For the sake of analysis, we consider how the gradients of NNCE (for example) would differ from MAE in the following two cases: 
a) given the classifier's output probability of sample $\boldsymbol{x}$, what is the difference in gradient for each class, 
b) given the classifier's output probabilities of sample $\boldsymbol{x}_1$ and $\boldsymbol{x}_2$, what is the difference in gradient between these two samples.
\begin{theorem}
    \label{thm:gradient-class}
    Given the classifier's output probability $\boldsymbol{p}(\cdot|\boldsymbol{x})$ with respect to sample $\boldsymbol{x}$ and normalized negative cross entropy $\mathcal{L}_\text{NNCE}$.
    If $\boldsymbol{p}(j_1|\boldsymbol{x})$ $<$ $\boldsymbol{p}(j_2|\boldsymbol{x})$, $j_1 \ne j_2 \ne y$, then \[\frac{\partial \mathcal{L}_\text{NNCE}}{\partial \boldsymbol{p}(j_1|\boldsymbol{x})} > \frac{\partial \mathcal{L}_\text{NNCE}}{\partial \boldsymbol{p}(j_2|\boldsymbol{x})}.\]
\end{theorem}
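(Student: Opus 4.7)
The plan is to read the claim directly off the closed-form gradient expression displayed just before the theorem. For $j \ne y$, that expression factors as
\[
\frac{\partial \mathcal{L}_\text{NNCE}}{\partial \boldsymbol{p}(j|\boldsymbol{x})} \;=\; \frac{1}{\boldsymbol{p}(j|\boldsymbol{x})} \cdot \underbrace{\frac{A + \log \boldsymbol{p}(y|\boldsymbol{x})}{\left(\sum_{k=1}^K \big(A + \log \boldsymbol{p}(k|\boldsymbol{x})\big)\right)^2}}_{=: \, C(\boldsymbol{x})},
\]
and the crucial observation is that the second factor $C(\boldsymbol{x})$ depends only on $\boldsymbol{x}$ (and on $y$), not on the index $j$ at which we differentiate. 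So the whole comparison between $j_1$ and $j_2$ reduces to comparing the scalar multipliers $1/\boldsymbol{p}(j_1|\boldsymbol{x})$ and $1/\boldsymbol{p}(j_2|\boldsymbol{x})$, once we have argued that $C(\boldsymbol{x})$ is (strictly) positive.

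First I would verify positivity of $C(\boldsymbol{x})$. The denominator is a square and is non-zero as long as the sum inside is non-zero, which is guaranteed by the practical clipping $\boldsymbol{p}(\cdot|\boldsymbol{x}) \ge p_\text{min}$ introduced after Eq.~\eqref{eq6} (so $A + \log \boldsymbol{p}(k|\boldsymbol{x}) \ge 0$, with at least one index giving a strict inequality under the non-degenerate regime $\boldsymbol{p}(y|\boldsymbol{x}) > p_\text{min}$). The same clipping ensures $A + \log \boldsymbol{p}(y|\boldsymbol{x}) \ge 0$, and strict positivity holds whenever $\boldsymbol{p}(y|\boldsymbol{x}) > p_\text{min}$, which is the only non-trivial case to analyse. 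Hence $C(\boldsymbol{x}) > 0$.

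Next I would invoke the hypothesis $\boldsymbol{p}(j_1|\boldsymbol{x}) < \boldsymbol{p}(j_2|\boldsymbol{x})$. Since the reciprocal function is strictly decreasing on $(0,1]$, this gives $1/\boldsymbol{p}(j_1|\boldsymbol{x}) > 1/\boldsymbol{p}(j_2|\boldsymbol{x})$. Multiplying both sides by the common positive scalar $C(\boldsymbol{x})$ preserves the inequality and yields exactly
\[
\frac{\partial \mathcal{L}_\text{NNCE}}{\partial \boldsymbol{p}(j_1|\boldsymbol{x})} \;>\; \frac{\partial \mathcal{L}_\text{NNCE}}{\partial \boldsymbol{p}(j_2|\boldsymbol{x})},
\]
as required.

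The only subtle point, and the step I would be most careful about, is the non-degeneracy condition on $C(\boldsymbol{x})$: one needs to handle the corner case where $\boldsymbol{p}(y|\boldsymbol{x})$ sits at the clipping floor, in which case $C(\boldsymbol{x}) = 0$ and the strict inequality collapses to an equality. I would therefore either state this regime as an implicit assumption (consistent with the paper's convention after Eq.~\eqref{eq6}) or add a brief remark that the theorem holds strictly on the interior $\boldsymbol{p}(y|\boldsymbol{x}) > p_\text{min}$, which is the only regime of interest for a trained classifier. Everything else is mechanical once the factorisation above is made explicit.
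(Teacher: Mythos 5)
Your proposal is correct and follows essentially the same route as the paper's own proof: both read the claim off the closed-form gradient for $j \ne y$, observe that the factor $\frac{A+\log \boldsymbol{p}(y|\boldsymbol{x})}{\left(\sum_{k=1}^K \left(A+\log \boldsymbol{p}(k|\boldsymbol{x})\right)\right)^2}$ is common to both indices, and conclude from the strict monotonicity of the reciprocal. Your explicit treatment of the positivity of that common factor (and the degenerate case $\boldsymbol{p}(y|\boldsymbol{x}) = p_\text{min}$, where the strict inequality would collapse) is a small point the paper leaves implicit, but it does not change the argument.
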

\begin{theorem}
    \label{thm:gradient-sample}
    Given the classifier's output probabilities $\boldsymbol{p}(\cdot|\boldsymbol{x}_1)$ and $\boldsymbol{p}(\cdot |\boldsymbol{x}_2)$ of sample $\boldsymbol{x}_1$ and $\boldsymbol{x}_2$, 
    where $\boldsymbol{p}(y|\boldsymbol{x}_1) = \max_{k \in [K]} \boldsymbol{p}(k|\boldsymbol{x}_1)$, $\boldsymbol{p}(y|\boldsymbol{x}_2) = \max_{k\in [K]} \boldsymbol{p}(k|\boldsymbol{x}_2)$. 
    Moreover, for better comparison, we assume there also exists a certain benchmarking class $j \ne y$ so that
    $\boldsymbol{p}(j|\boldsymbol{x}_1) = \boldsymbol{p}(j|\boldsymbol{x}_2)$. 
    
    If $\boldsymbol{p}(y|\boldsymbol{x}_1)$ $>$ $\boldsymbol{p}(y|\boldsymbol{x}_2)$ and $\boldsymbol{p}(k|\boldsymbol{x}_1) \le \boldsymbol{p}(k|\boldsymbol{x}_2)$, $\forall k \in [K] \setminus \{j,y\}$, then for normalized negative cross entropy $\mathcal{L}_\text{NNCE}$:
    \[\frac{\partial \mathcal{L}_\text{NNCE}}{\partial \boldsymbol{p}(j|\boldsymbol{x}_1)} > \frac{\partial \mathcal{L}_\text{NNCE}}{\partial \boldsymbol{p}(j|\boldsymbol{x}_2)}.\]
\end{theorem}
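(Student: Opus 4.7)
The plan is to expand the gradient formula for $\mathcal{L}_\text{NNCE}$ at index $j\ne y$ and use the hypothesis $\boldsymbol{p}(j|\boldsymbol{x}_1) = \boldsymbol{p}(j|\boldsymbol{x}_2)$ to cancel the $1/\boldsymbol{p}(j|\boldsymbol{x})$ prefactor. This reduces the claim to
\[
\frac{A + \log \boldsymbol{p}(y|\boldsymbol{x}_1)}{D_1^{\,2}} \;>\; \frac{A + \log \boldsymbol{p}(y|\boldsymbol{x}_2)}{D_2^{\,2}},
\quad D_i \defeq \sum_{k=1}^K\bigl(A + \log \boldsymbol{p}(k|\boldsymbol{x}_i)\bigr).
\]
I would then prove the numerator inequality and the denominator inequality separately, both in sample~$1$'s favor. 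The numerator part is immediate: $\boldsymbol{p}(y|\boldsymbol{x}_1) > \boldsymbol{p}(y|\boldsymbol{x}_2)$ together with monotonicity of $\log$ gives $A + \log \boldsymbol{p}(y|\boldsymbol{x}_1) > A + \log \boldsymbol{p}(y|\boldsymbol{x}_2) \ge 0$, where nonnegativity comes from the choice of $A$ that makes every NLF summand nonnegative.

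For the denominator, the goal is $D_1 \le D_2$, equivalently $\sum_k \log \boldsymbol{p}(k|\boldsymbol{x}_1) \le \sum_k \log \boldsymbol{p}(k|\boldsymbol{x}_2)$. After cancelling the $j$-indexed terms, this becomes
\[
\log \boldsymbol{p}(y|\boldsymbol{x}_1) - \log \boldsymbol{p}(y|\boldsymbol{x}_2) \;\le\; \sum_{k \ne j,y}\bigl(\log \boldsymbol{p}(k|\boldsymbol{x}_2) - \log \boldsymbol{p}(k|\boldsymbol{x}_1)\bigr).
\]
The plan here is to sandwich both sides via the standard concavity estimates $1-1/t \le \log t \le t-1$ applied to the appropriate ratios: the left side is bounded above by $(\boldsymbol{p}(y|\boldsymbol{x}_1)-\boldsymbol{p}(y|\boldsymbol{x}_2))/\boldsymbol{p}(y|\boldsymbol{x}_2)$, and each summand on the right is bounded below by $(\boldsymbol{p}(k|\boldsymbol{x}_2)-\boldsymbol{p}(k|\boldsymbol{x}_1))/\boldsymbol{p}(k|\boldsymbol{x}_2)$.

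To close the chain I would invoke two ingredients. First, since $\boldsymbol{p}(j|\boldsymbol{x}_1)=\boldsymbol{p}(j|\boldsymbol{x}_2)$ and probabilities sum to $1$,
\[
\boldsymbol{p}(y|\boldsymbol{x}_1) - \boldsymbol{p}(y|\boldsymbol{x}_2) \;=\; \sum_{k \ne j,y}\bigl(\boldsymbol{p}(k|\boldsymbol{x}_2) - \boldsymbol{p}(k|\boldsymbol{x}_1)\bigr).
\]
Second, the argmax hypothesis on sample~$2$ yields $\boldsymbol{p}(y|\boldsymbol{x}_2) \ge \boldsymbol{p}(k|\boldsymbol{x}_2)$, so $1/\boldsymbol{p}(y|\boldsymbol{x}_2) \le 1/\boldsymbol{p}(k|\boldsymbol{x}_2)$ for every $k\ne y$. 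Combining these with the sandwich bounds makes the upper bound for the left side no larger than the lower bound for the right side, delivering $D_1 \le D_2$, hence $D_1^{\,2} \le D_2^{\,2}$. Multiplying the two favorable comparisons gives the desired strict inequality.

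The main obstacle is the denominator comparison: the hypotheses only provide coordinate-wise inequalities $\boldsymbol{p}(k|\boldsymbol{x}_1) \le \boldsymbol{p}(k|\boldsymbol{x}_2)$ for $k\ne j,y$ rather than a full majorization between the two distributions, so a direct Schur-concavity argument is unavailable. The crucial non-trivial step is recognizing that the argmax property of $y$ for sample~$2$ is exactly what is needed to align the per-coordinate concavity bounds with the global conservation identity, and this is the point where the logic could easily break down without careful accounting of signs.
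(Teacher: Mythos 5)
Your proposal is correct, and it reaches the conclusion by a genuinely different route from the paper. Both arguments cancel the common factor $1/\boldsymbol{p}(j|\boldsymbol{x})$ and split the claim into the (immediate) numerator comparison and the denominator comparison $\sum_{k}\log\boldsymbol{p}(k|\boldsymbol{x}_1)\le\sum_{k}\log\boldsymbol{p}(k|\boldsymbol{x}_2)$, and both exploit the same two structural facts you single out: mass conservation with the $j$-coordinates equal, and $\boldsymbol{p}(y|\boldsymbol{x}_2)$ being the largest coordinate of sample $2$. Where you differ is in how the denominator inequality is established. The paper views $\boldsymbol{p}(\cdot|\boldsymbol{x}_1)$ as obtained from $\boldsymbol{p}(\cdot|\boldsymbol{x}_2)$ by transferring total mass $D$ from the classes $k\ne j,y$ onto class $y$, bounds each ratio $\boldsymbol{p}_k/(\boldsymbol{p}_k-d_k)$ by $\boldsymbol{p}_y/(\boldsymbol{p}_y-d_k)$ via a mediant-type inequality, solves a constrained optimization over the redistribution $\{d_k\}$ with Lagrange multipliers (worst case $d_k=D/(K-2)$), and finally compares two difference quotients of $\log$ through a mean-value argument to get the strict inequality. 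Your chain gets the same inequality in a few lines by applying $\log t\le t-1$ to the $y$-term and $1-1/t\le\log t$ to each $k$-term, then aligning the denominators with the argmax property; this is more elementary, avoids the optimization step entirely, and only sacrifices strictness of the denominator comparison, which is harmless because the numerator comparison is strict (note $A+\log\boldsymbol{p}(y|\boldsymbol{x}_2)\ge 0$ and $D_1>0$, as you observe, so $N_1/D_1^2>N_2/D_1^2\ge N_2/D_2^2$). Minor housekeeping shared with the paper's own proof: the probabilities must be bounded away from zero (the paper's clipping at $p_{\min}$ handles this), and the case $K=2$ is vacuous since then the index set $\{k\ne j,y\}$ is empty and conservation would force $\boldsymbol{p}(y|\boldsymbol{x}_1)=\boldsymbol{p}(y|\boldsymbol{x}_2)$, contradicting the hypothesis.
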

\Cref{thm:gradient-class,,thm:gradient-sample} show that, for the gradient of samples in non-labeled classes, MAE treats every class and sample equally (\cf~\Cref{eq:grad-mae});
however, our NNCE focuses more on the classes and the samples that have been well learned and this property tends to help the model perform better in noisy label learning.
Some studies \cite{li2020gradient, DBLP:conf/nips/LiuNRF20:ELR} have shown that during the training process, DNNs would first memorize clean samples and then noisy samples;
with the property revealed by \Cref{thm:gradient-class,,thm:gradient-sample}, apart from robustness, our NNLFs potentially help the model to continuously learn the clean samples that the model has memorized in the previous stages of training and ignore the unmemorized noisy samples.

\section{Experiments}
\label{experiments}

In this section, we evaluate our proposed ANL framework on both benchmark datasets, which include symmetric, asymmetric, instance-dependent, and real-world label noise, as well as large-scale real-world datasets.
Additionally, we explore its application in image segmentation.

\subsection{Datasets}

{\bf{CIFAR-10}} \cite{krizhevsky2009learning} dataset consists of 60,000 images of 10 categories.
We train the model on the 50,000 noisy training samples and evaluate it on 10,000 testing samples.

{\bf{CIFAR-100}} \cite{krizhevsky2009learning} including 100 classes belonging to 20 super-classes where each category contains 600 images.
Similarly, the 50,000 noisy training samples are used to train the model, while the remaining 10,000 testing samples are used to evaluate.

{\bf{WebVision}} \cite{wen2017webvision} dataset contains more than 2.4 million web images crawled from the internet by using queries generated from the 1,000 class labels of the ILSVRC 2012 \cite{imagenet} benchmark.
Here, we follow the ``Mini'' setting in \cite{jiang2018mentornet}, and only take the first 50 classes of the Google resized image subset.
We evaluate the trained networks on the same 50 classes of both the ILSVRC 2012 validation set and the WebVision validation set, and these can be considered as clean validation sets.

{\bf{Animal-10N}} \cite{song2019selfie} is a real-world noisy data set of human-labeled online images for 10 confusing animals, with $50,000$ training and $5,000$ testing images, and its noise rate was estimated at 8\%.

{\bf{Clothing-1M}} \cite{xiao2015learning} is a large-scale clothing dataset that is collected from real-world online shopping websites.
It contains 14 categories and 1 million training samples with nearly 40\% mislabeled samples.
This dataset also includes a set of clean samples, which has been divided into a training set ($50k$ samples), a validation set ($14k$ samples), and a test set ($10k$ samples).

{\bf{ISIC-2017}} \cite{isic} is a large-scale public medical benchmark dataset of dermoscopy images, curated for skin cancer detection. 
The dataset includes detailed annotations, such as lesion type, diagnosis, anatomical location, and corresponding boundary masks. 
Additionally, it provides lesion attributes like size, shape, and color, enabling more comprehensive analysis.

{\bf{Noise generation for image classification.}}
We briefly overview the noise generation process used for the CIFAR-10 and CIFAR-100 datasets.
For class-dependent noise, the noisy labels are generated following standard approaches in previous works \cite{patrini2017making, ma2020normalized, zhou2021asymmetric}.
For symmetric noise, we flip the labels in each class randomly to incorrect labels of other classes.
For asymmetric noise, we flip the labels within a specific set of classes.
For CIFAR-10, flipping TRUCK $\to$ AUTOMOBILE, BIRD $\to$ AIRPLANE, DEER $\to$ HORSE, CAT $\leftrightarrow$ DOG.
For CIFAR-100, the 100 classes are grouped into 20 super-classes with each has 5 sub-classes, and each class are flipped within the same super-class into the next in a circular fashion.
We vary the noise rate $\eta \in \{0.2, 0.4, 0.6, 0.8\}$ for symmetric noise and $\eta \in \{0.2,0.4\}$ for asymmetric noise.
For instance-dependent noise, we use the part-dependent noise from PDN \cite{DBLP:conf/nips/XiaL0WGL0TS20:PDN} with noise rate $\eta \in \{0.2, 0.4\}$, where the noise is synthesized based on the DNN prediction error.
For real-world noise, we use the ``Worst'' label set of CIFAR-10N and the ``Noisy'' label set of CIFAR-100N \cite{DBLP:conf/iclr/WeiZ0L0022:CIFAR-10N/-100N}, respectively.
The noise in CIFAR-10N and CIFAR-100N is generated based on human annotation errors.

{\bf{Noise generation for image segmentation.}}
To evaluate the model's robustness against low-quality annotations, we follow the methodology from \cite{DBLP:conf/miccai/LiGH21}. In this approach, a portion of the training data is randomly sampled with probability $\eta_\alpha \in \{0.3, 0.5, 0.7\}$, and morphological transformations are applied, with noise levels controlled by $\eta_\beta \in \{0.5, 0.7\}$. These morphological transformations include erosion, dilation, and affine transformations, which reduce, enlarge, or displace the annotated areas. This process effectively simulates the risk of annotation errors caused by factors such as annotator fatigue or the inherent difficulty of labeling certain images.

\subsection{Experimental Setup}
For CIFAR-10 and CIFAR-100 datasets, we follow the experimental setting in previous work \cite{ma2020normalized, zhou2021asymmetric}.
An 8-layer CNN is used for CIFAR-10, and a ResNet-34 \cite{he2016deep} is used for CIFAR-100.
For CIFAR-10 and CIFAR-100, the networks are trained for 120 and 200 epochs, respectively.
For all the training, we use SGD optimizer with momentum 0.9 and cosine learning rate annealing.
Weight decay is set to $1 \times 10^{-4}$, and $1 \times 10^{-5}$ for  CIFAR-10 and CIFAR-100, respectively.
Particularly, for our proposed ANL methods, weight decay is set to $0$ for all datasets.
The initial learning rate is set to $0.01$ for CIFAR-10 and $0.1$ for CIFAR-100.
Batch size is set to $128$.
For all settings, we clip the gradient norm to $5.0$.
Typical data augmentations including random width/height shift and horizontal flip are applied.

For WebVision dataset, we follow the experimental setting in previous works \cite{zhou2021asymmetric}.
We train a ResNet-50 \cite{he2016deep} using SGD for 250 epochs with initial learning rate 0.4, nesterov momentum 0.9 and weight decay $3 \times 10^{-5}$ (for our proposed ANL methods, the weight decay is set to $0$) and batch size 512.
The learning rate is multiplied by 0.97 after every epoch of training. For all settings, we clip the gradient norm to $5.0$.
All the images are resized to $224 \times 224$.
Typical data augmentations including random width/height shift, color jittering and random horizontal flip are applied.

For Animal-10N dataset, we follow the experimental setting in previous works \cite{song2019selfie}.
We use VGG-19 with batch normalization.
The SGD optimizer is employed.
We train the network for 100 epochs and use an initial learning rate of 0.1, which is divided by 5 at 50\% and 75\% of the total number of epochs.
Batch size is set to 128.
Typical data augmentations including random horizontal flip are applied.

For Clothing-1M dataset, we follow the experimental setting in previous works \cite{tanaka2018joint}.
We use the $14k$ and $10k$ clean data for validation and test, respectively, and we do not use the $50k$ clean training data.
We use ResNet-50 \cite{he2016deep} pre-trained on ImageNet.
For preprocessing, we resized the images to $256 \times 256$, performed mean subtraction, and cropped the middle $224 \times 224$.
We use the SGD optimizer with a momentum of 0.9, a weight decay of $1 \times 10^{-3}$, and batch size of 32.
We train the network for 10 epochs with learning rate $1 \times 10^{-3}$ and  $1 \times 10^{-4}$ for 5 epochs each.
Typical data augmentations including random horizontal flip are applied.

For ISIC-2017, we follow the experimental setting in previous work~\cite{DBLP:conf/miccai/LiGH21, DBLP:conf/miccai/medicalNoiseSegmentation}. 
For preprocessing, we use 2000 training images and 600 test images, all resized to $256 \times 256$ and augmented with random mirroring, flipping, and gamma transformations, as described in \cite{DBLP:conf/miccai/medicalNoiseSegmentation}. 
We use nnU-Net~\cite{nnunet} as a segmentation network.
We train it for 100 epochs, using the Adam \cite{Kingma2014adam} optimizer with an initial learning rate of $1 \times 10^{-3}$, which decays by a factor of $0.8$ after each epoch.
We set the weight decay to $5 \times 10^{-6}$ and use a batch size of 16.

\subsection{Empirical Understandings}
\label{experiment: empirical}
In this subsection, we investigate some properties of our proposed loss functions.
If not otherwise specified, all detailed experimental settings are the same as in Section~\ref{experiment: benchmarks}. 

\begin{figure*}
\centering
\captionsetup[subfigure]{labelformat=parens, labelsep=space}
\subfloat[CE]{\includegraphics[width=0.2\textwidth]{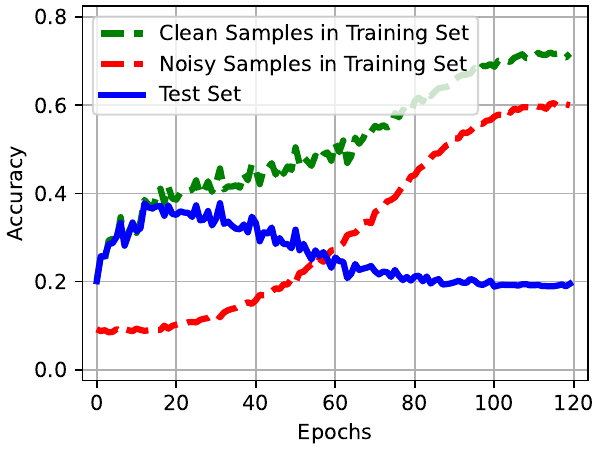}}
\subfloat[MAE]{\includegraphics[width=0.2\textwidth]{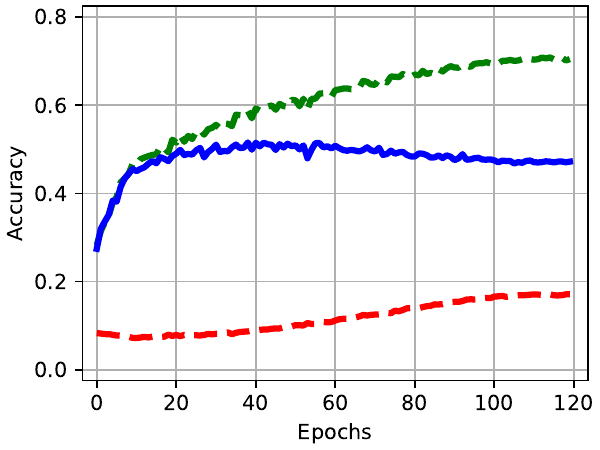}}
\subfloat[NCE+RCE]{\includegraphics[width=0.2\textwidth]{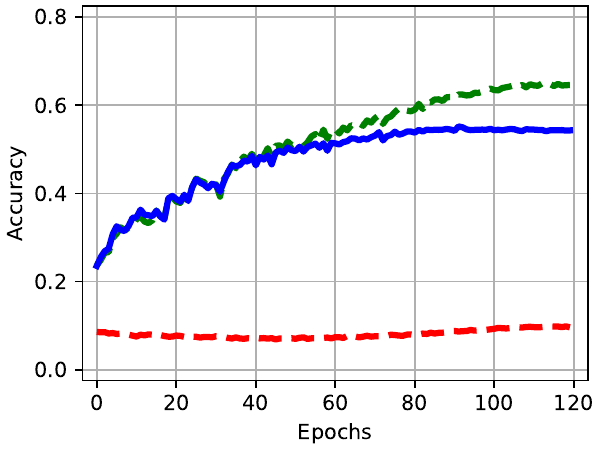}}
\subfloat[ANL-CE (w/ L2)]{\includegraphics[width=0.2\textwidth]{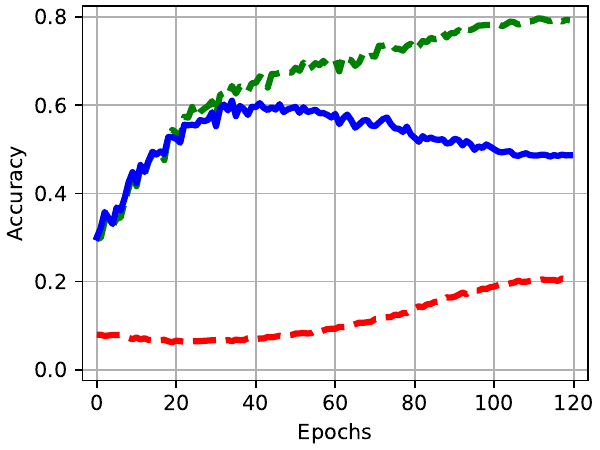}}
\subfloat[ANL-CE (w/ L1)]{\includegraphics[width=0.2\textwidth]{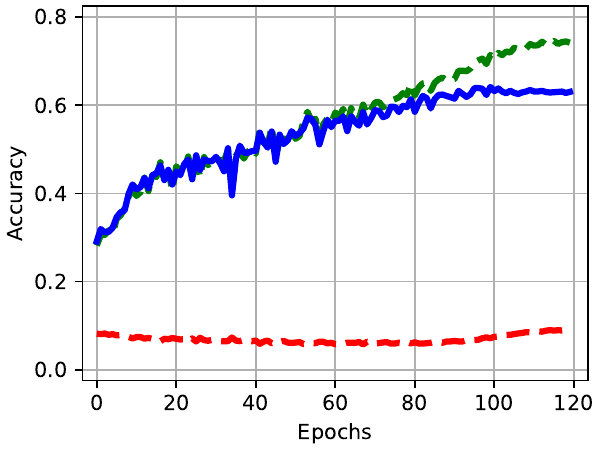}}

\subfloat[CE]{\includegraphics[width=0.2\textwidth]{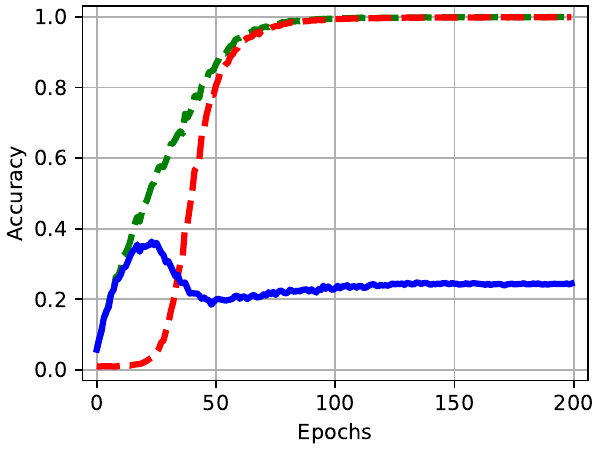}}
\subfloat[MAE]{\includegraphics[width=0.2\textwidth]{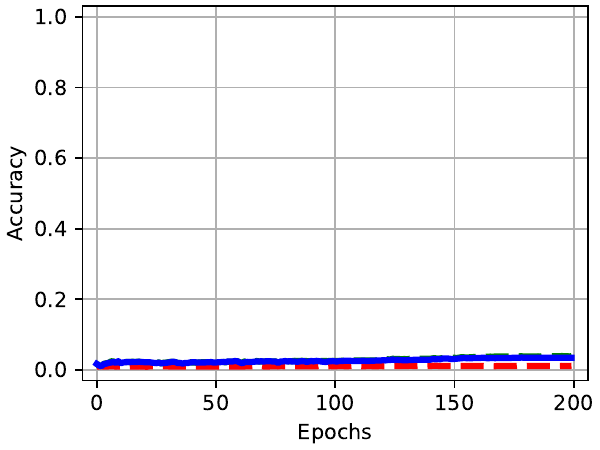}}
\subfloat[NCE+RCE]{\includegraphics[width=0.2\textwidth]{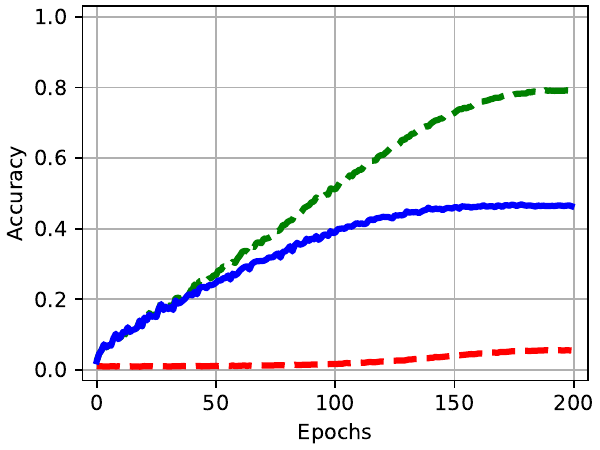}}
\subfloat[ANL-CE (w/ L2)]{\includegraphics[width=0.2\textwidth]{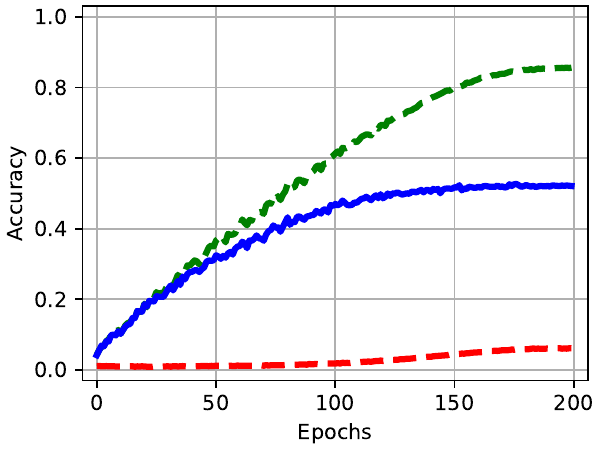}}
\subfloat[ANL-CE (w/ L1)]{\includegraphics[width=0.2\textwidth]{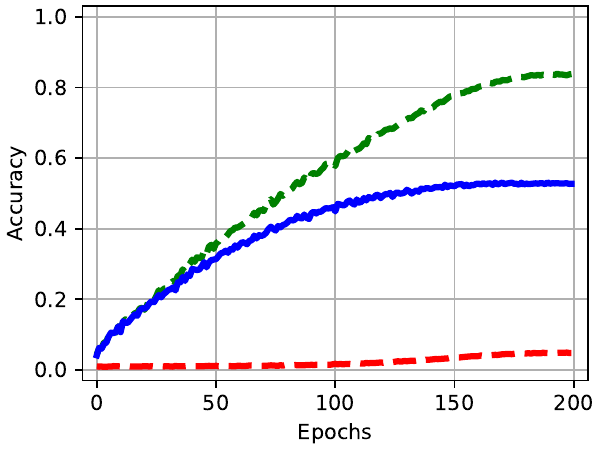}}
\caption{Training and test accuracies of different loss functions. (a) - (e): CIFAR-10 under 0.8 symmetric noise. (f) - (j): CIFAR-100 under 0.6 symmetric noise. The accuracies of noisy samples in training set should be as low as possible, since they are mislabeled.}
\label{fig: overfitting}
\vspace{-3mm}
\end{figure*}

\subsubsection{Overfitting problem}

In practice, we find that ANL can lead to overfitting in some experimental settings.
To motivate this problem, as an example, we train networks using different loss functions on CIFAR-10 under 0.8 symmetric noise and CIFAR-100 under 0.6 symmetric noise, and the experimental results are shown in Figure~\ref{fig: overfitting}.
As can be observed, in the setting of CIFAR-10 under 0.8 symmetric noise, the training set accuracy of ANL-CE (w/ L2) keeps increasing while the test set accuracy keeps decreasing.
We identify this problem as an \emph{overfitting problem}.

It is worth noting that although overfitting occurs, unlike CE, the gap between the clean sample accuracies and the noisy sample accuracies of the training set does not shrink, which indicates that our ANL has some robustness to noisy labels even in the case of overfitting.
Moreover, we conjecture that the overfitting is caused by the property that NNLF is more focus on well-learned samples.
When the noise rate is high, the model can be seen as trained on only a fairly small number of samples, and the lack of sample size leads to overfitting. 

\subsubsection{Robustness and fitting ability}

We conduct a set of experiments on CIFAR-10/-100 to verify the robustness and the fitting ability of our proposed loss functions.
We set the noise type to be symmetric and the noise rate to 0.8 for CIFAR-10 and 0.6 for CIFAR-100.
On each setting, we train network using different loss functions, including: a) CE, b) MAE, c) NCE+RCE, d) ANL-CE (w/ L2), and e) ANL-CE (w/ L1). For ANL-CE (w/ L2), we set its parameters $\alpha$ and $\beta$ to be the same as ANL-CE (w/ L1) and set its weight decay to be the same as NCE+RCE.

As can be observed in Figure~\ref{fig: overfitting}:
a) CE is not robust to noise, the accuracies of clean and noisy samples in the training set are continuously close to each other,
b) MAE is robust to noise, the accuracies of clean and noisy samples in the training set keep moving away from each other, but its fitting ability is insufficient, especially when the dataset becomes complex,
c) NCE+RCE is robust to noise and has better fitting ability compared to MAE,
d) ANL-CE (w/ L2) is robust to noise and has stronger fitting ability, but suffers from over-fitting.
and e) ANL-CE is robust to noise and mitigates the impact of overfitting to achieve the best performance.
To summarize, our proposed loss functions are robust to noise, NNLF shows better fitting ability than MAE, and L1 regularization addresses the overfitting problem of NNLF. 

\subsection{Experiments on CIFAR-10 and CIFAR-100}
\label{experiment: benchmarks}

\begin{table*}[t]
\begin{center}
\caption{
    Test accuracies (\%) of different methods on CIFAR-10 dataset with clean, symmetric ($\eta \in \{0.2, 0.4, 0.6, 0.8\}$), asymmetric ($\eta \in \{0.2, 0.4\}$), dependent (\(\eta \in \{0.4\}\)) and real-world label noise.
    The results (mean$\pm$std) are reported over 3 random runs under different random seeds ($1,2,3$).
    The top-1 results are in \textbf{bold}, while the second- and third-best results are \underline{underlined}.
}
\vspace{-1mm}
\label{tabel: cifar10}
\begin{tabular}{c|c|ccccccccc}
    \toprule
    Methods & Clean & Sym-0.2 & Sym-0.4 & Sym-0.6 & Sym-0.8 & Asym-0.2 & Asym-0.4 & Dep-0.2 & Dep-0.4 & Real \\
    \midrule
    CE & 90.38\tiny$\pm$0.11 & 75.05\tiny$\pm$0.26 & 58.19\tiny$\pm$0.21 & 38.75\tiny$\pm$0.19 & 19.09\tiny$\pm$0.35 & 83.00\tiny$\pm$0.33 & 73.69\tiny$\pm$0.20 & 75.51\tiny$\pm$0.30 & 56.36\tiny$\pm$0.60 & 61.43\tiny$\pm$0.52 \\
    FL \cite{lin2017focal} & 89.84\tiny$\pm$0.28 & 74.52\tiny$\pm$0.10 & 57.54\tiny$\pm$0.75 & 38.83\tiny$\pm$0.49 & 19.33\tiny$\pm$0.58 & 83.03\tiny$\pm$0.10 & 73.78\tiny$\pm$0.16 & 75.47\tiny$\pm$0.11 & 57.62\tiny$\pm$0.40 & 61.55\tiny$\pm$0.27 \\
    MAE \cite{ghosh2017robust} & 89.15\tiny$\pm$0.27 & 87.19\tiny$\pm$0.19 & 81.76\tiny$\pm$3.17 & 76.82\tiny$\pm$0.84 & 46.42\tiny$\pm$3.66 & 79.63\tiny$\pm$0.74 & 57.36\tiny$\pm$2.37 & 79.03\tiny$\pm$0.85 & 64.52\tiny\(\pm\)9.43 & 72.83\tiny$\pm$4.39 \\
    GCE \cite{zhang2018generalized} & 89.66\tiny$\pm$0.20 & 87.17\tiny$\pm$0.01 & 82.44\tiny$\pm$0.26 & 68.62\tiny$\pm$0.35 & 25.45\tiny$\pm$0.51 & 85.55\tiny$\pm$0.24 & 72.83\tiny$\pm$0.17 & 84.77\tiny$\pm$0.18 & 64.22\tiny$\pm$1.11 & 75.19\tiny$\pm$0.23 \\
    SCE \cite{wang2019symmetric} & 91.38\tiny$\pm$0.13 & 87.86\tiny$\pm$0.12 & 79.96\tiny$\pm$0.25 & 62.16\tiny$\pm$0.33 & 27.98\tiny$\pm$0.98 & 86.22\tiny$\pm$0.44 & 74.01\tiny$\pm$0.52 & 82.89\tiny$\pm$0.24 & 64.03\tiny$\pm$0.33 & 73.65\tiny$\pm$0.29 \\
    PHCE \cite{Menon2020clip} & 89.79\tiny$\pm$0.09 & 88.06\tiny$\pm$0.08 & 83.93\tiny$\pm$0.32 & 75.33\tiny$\pm$0.11 & 40.83\tiny$\pm$0.46 & 86.92\tiny$\pm$0.24 & 57.88\tiny$\pm$2.05 & 86.63\tiny$\pm$0.17 & 51.56\tiny$\pm$5.26 & 77.23\tiny$\pm$0.16 \\
    TCE \cite{Feng2020CanCE} & 90.01\tiny$\pm$0.17 & 87.57\tiny$\pm$0.14 & 83.82\tiny$\pm$0.13 & 75.32\tiny$\pm$0.14 & 38.10\tiny$\pm$0.42 & 86.84\tiny$\pm$0.09 & 59.08\tiny$\pm$2.34 & 86.51\tiny$\pm$0.27 & 59.86\tiny$\pm$3.21 & 77.32\tiny$\pm$0.56 \\
    APL \cite{ma2020normalized} & 90.94\tiny$\pm$0.01 & 89.19\tiny$\pm$0.18 &  86.03\tiny$\pm$0.13 & 79.89\tiny$\pm$0.11 & 55.52\tiny$\pm$2.74 & 88.36\tiny$\pm$0.13 & \underline{77.75\tiny$\pm$0.37} & 88.16\tiny$\pm$0.15 & 74.54\tiny$\pm$0.27 & 79.74\tiny$\pm$0.09 \\
    ALF \cite{zhou2021asymmetric} & 91.08\tiny$\pm$0.06 & 89.11\tiny$\pm$0.07 & 86.16\tiny$\pm$0.10 & 80.14\tiny$\pm$0.27 & 55.62\tiny$\pm$4.78 & 88.48\tiny$\pm$0.09 & \underline{78.60\tiny$\pm$0.41} & 88.14\tiny$\pm$0.40 & 74.40\tiny$\pm$1.04 & 79.91\tiny$\pm$0.37 \\
    \midrule
    \textbf{ANL-CE} & 91.66\tiny$\pm$0.04 & \textbf{90.02\tiny$\pm$0.23} & \textbf{87.28\tiny$\pm$0.02} & \underline{81.12\tiny$\pm$0.30} & \underline{61.27\tiny$\pm$0.55} & \underline{89.13\tiny$\pm$0.11} & 77.63\tiny$\pm$0.31 & \underline{88.70\tiny$\pm$0.08} & \underline{74.56\tiny$\pm$0.08} & \underline{80.23\tiny$\pm$0.28} \\
    \textbf{ANL-FL} & 91.79\tiny$\pm$0.19 & \underline{89.95\tiny$\pm$0.20} & \underline{87.25\tiny$\pm$0.11} & \textbf{81.67\tiny$\pm$0.19} & \underline{61.22\tiny$\pm$0.85} & \underline{89.09\tiny$\pm$0.31} & 77.73\tiny$\pm$0.31 & \underline{88.51\tiny$\pm$0.21} & \underline{74.91\tiny$\pm$0.09} & \underline{80.33\tiny$\pm$0.04} \\
    \textbf{ANL-CE*} & 91.52\tiny$\pm$0.06 & \underline{89.87\tiny$\pm$0.29} & \underline{86.66\tiny$\pm$0.15} & \underline{81.33\tiny$\pm$0.24} & \textbf{61.62\tiny$\pm$0.47} & \textbf{89.32\tiny$\pm$0.13} & \textbf{81.47\tiny$\pm$0.31} & \textbf{88.85\tiny$\pm$0.19} & \textbf{80.97\tiny$\pm$0.20} & \textbf{81.16\tiny$\pm$0.13} \\
    \bottomrule
\end{tabular}
\vspace{-3mm}
\end{center}
\end{table*}

\begin{table*}[t]
\begin{center}
\caption{
    Test accuracies (\%) of different methods on CIFAR-100 datasets with clean, symmetric ($\eta \in \{0.2, 0.4, 0.6, 0.8\}$), asymmetric ($\eta \in \{0.2, 0.4\}$), dependent (\(\eta \in \{0.2, 0.4\}\)) and real-world label noise.
    The results (mean$\pm$std) are reported over 3 random runs under different random seeds ($1,2,3$).
    The top-1 results are in \textbf{bold}, while the second- and third-best results are \underline{underlined}.
}
\vspace{-1mm}
\label{tabel: cifar100}
\begin{tabular}{c|c|ccccccccc}
    \toprule
    Methods & Clean & Sym-0.2 & Sym-0.4 & Sym-0.6 & Sym-0.8 & Asym-0.2 & Asym-0.4 & Dep-0.2 & Dep-0.4 & Real \\
    \midrule
    CE & 71.14\tiny$\pm$0.38 & 55.97\tiny$\pm$1.11 & 40.72\tiny$\pm$0.74 & 22.98\tiny$\pm$0.07 & ~7.55\tiny$\pm$0.21 & 58.25\tiny$\pm$1.00 & 41.53\tiny$\pm$0.34 & 57.47\tiny$\pm$0.44 & 43.41\tiny$\pm$0.32 & 48.63\tiny$\pm$0.53 \\
    FL \cite{lin2017focal} & 71.02\tiny$\pm$0.36 & 55.94\tiny$\pm$0.53 & 39.55\tiny$\pm$1.24 & 23.21\tiny$\pm$0.49 & ~7.80\tiny$\pm$0.27 & 58.00\tiny$\pm$1.38 & 41.88\tiny$\pm$0.57 & 58.37\tiny$\pm$0.41 & 42.97\tiny$\pm$0.54 & 49.09\tiny$\pm$0.68 \\
    MAE \cite{ghosh2017robust} & ~7.35\tiny$\pm$1.19 & ~7.91\tiny$\pm$0.66 & ~3.61\tiny$\pm$0.21 & ~3.63\tiny$\pm$0.35 & ~2.83\tiny$\pm$1.35 & ~6.19\tiny$\pm$0.42 & ~3.96\tiny$\pm$0.35 & ~5.80\tiny$\pm$2.41 & ~2.10\tiny$\pm$0.52 & ~3.24\tiny$\pm$1.35 \\
    GCE \cite{zhang2018generalized} & 61.62\tiny$\pm$0.43 & 61.50\tiny$\pm$1.50 & 56.46\tiny$\pm$0.95 & 46.27\tiny$\pm$1.30 & 19.51\tiny$\pm$0.86 & 59.06\tiny$\pm$0.46 & 41.51\tiny$\pm$0.52 & 60.68\tiny$\pm$0.86 & 50.30\tiny$\pm$0.86 & 50.97\tiny$\pm$0.60 \\
    SCE \cite{wang2019symmetric} & 70.80\tiny$\pm$0.37 & 55.04\tiny$\pm$0.37 & 39.84\tiny$\pm$0.19 & 21.97\tiny$\pm$0.92 & ~7.87\tiny$\pm$0.48 & 57.78\tiny$\pm$0.83 & 41.33\tiny$\pm$0.86 & 56.96\tiny$\pm$0.10 & 42.79\tiny$\pm$0.48 & 48.52\tiny$\pm$0.11 \\
    PHCE \cite{Menon2020clip} & 33.89\tiny$\pm$1.47 & 32.77\tiny$\pm$0.47 & 31.52\tiny$\pm$0.71 & 23.60\tiny$\pm$0.98 & 10.98\tiny$\pm$0.81 & 30.00\tiny$\pm$0.29 & 21.58\tiny$\pm$0.59 & 27.41\tiny$\pm$1.04 & 14.08\tiny$\pm$1.84 & 22.63\tiny$\pm$1.01 \\
    TCE \cite{Feng2020CanCE} & 38.14\tiny$\pm$3.25 & 37.08\tiny$\pm$0.71 & 38.62\tiny$\pm$1.41 & 31.48\tiny$\pm$1.38 & 15.48\tiny$\pm$0.63 & 34.78\tiny$\pm$0.82 & 26.84\tiny$\pm$1.24 & 34.80\tiny$\pm$2.52 & 18.84\tiny$\pm$0.37 & 24.18\tiny$\pm$0.83 \\
    APL \cite{ma2020normalized} & 68.22\tiny$\pm$0.28 & 64.20\tiny$\pm$0.47 & 57.97\tiny$\pm$0.30 & 46.26\tiny$\pm$1.07 & 25.65\tiny$\pm$0.51 & 62.77\tiny$\pm$0.53 & 42.46\tiny$\pm$0.42 & 63.84\tiny$\pm$0.29 & 50.37\tiny$\pm$0.37 & 54.27\tiny$\pm$0.09 \\
    ALF \cite{zhou2021asymmetric} & 68.61\tiny$\pm$0.12 & 65.30\tiny$\pm$0.21 & 59.74\tiny$\pm$0.68 & 47.96\tiny$\pm$0.44 & 24.13\tiny$\pm$0.07 & 64.05\tiny$\pm$0.25 & 44.90\tiny$\pm$0.62 & 64.60\tiny$\pm$0.51 & 51.47\tiny$\pm$0.59 & 55.96\tiny$\pm$0.20 \\
    \midrule
    \textbf{ANL-CE} & 70.68\tiny$\pm$0.23 & \underline{66.79\tiny$\pm$0.34} &  \textbf{61.80\tiny$\pm$0.50} & \underline{51.52\tiny$\pm$0.53} & \underline{28.07\tiny$\pm$0.28} & \textbf{66.27\tiny$\pm$0.19} & \underline{45.41\tiny$\pm$0.68} & \textbf{66.31\tiny$\pm$0.20} & \textbf{55.84\tiny$\pm$0.33} & \underline{56.37\tiny$\pm$0.42} \\
    \textbf{ANL-FL} & 70.40\tiny$\pm$0.15 & \underline{66.54\tiny$\pm$0.29} & \underline{61.73\tiny$\pm$0.48} & \underline{51.32\tiny$\pm$0.34} & \underline{27.97\tiny$\pm$0.58} & \underline{66.26\tiny$\pm$0.44} & \textbf{46.65\tiny$\pm$0.04} & \underline{66.02\tiny$\pm$0.29} & \underline{55.40\tiny$\pm$0.37} & \textbf{57.03\tiny$\pm$0.38} \\
    \textbf{ANL-CE*} & 70.27\tiny$\pm$0.24 & \textbf{67.23\tiny$\pm$0.14} & \underline{61.65\tiny$\pm$0.52} & \textbf{51.59\tiny$\pm$0.36} & \textbf{29.45\tiny$\pm$0.42} & \underline{65.71\tiny$\pm$0.17} & \underline{46.57\tiny$\pm$0.26} & \underline{66.15\tiny$\pm$0.47} & \underline{55.33\tiny$\pm$0.70} & \underline{56.85\tiny$\pm$0.39} \\
    \bottomrule
\end{tabular}
\vspace{-4mm}
\end{center}
\end{table*}

\textbf{Baselines.}
We consider several state-of-the-art methods:
a) Generalized Cross Entropy (GCE) \cite{zhang2018generalized},
b) Symmetric Cross Entropy (SCE) \cite{wang2019symmetric},
c) Partially Huberised Cross Entropy (PHCE) \cite{Menon2020clip},
d) Taylor Cross Entropy (TCE) \cite{Feng2020CanCE},
e) Active Passive Loss (APL) \cite{ma2020normalized}, specifically NCE+RCE,
f) Asymmetric Loss Functions (AFL) \cite{zhou2021asymmetric}, specifically NCE+AGCE.
For our proposed ANL framework, we consider three loss functions: a) ANL-CE, b) ANL-FL and c) ANL-CE* (ANL-CE with entropy regularization).
We also train networks using Cross Entropy (CE), Focal Loss (FL) \cite{lin2017focal}, and Mean Absolute Error (MAE) \cite{ghosh2017robust}.

\textbf{Results.}
The main experimental results under symmetric, asymmetric, instance-dependent, and real-world label noise for CIFAR-10 and CIFAR-100 are summarized in Table~\ref{tabel: cifar10} and Table~\ref{tabel: cifar100}, respectively.

Table~\ref{tabel: cifar10} displays the CIFAR-10 results, demonstrating that our proposed ANL-CE and ANL-FL methods consistently outperform baseline approaches across most noise types. 
The only exception occurs under 0.4 asymmetric noise, where they fall slightly behind APL and ALF. 
However, in all other cases, our methods exhibit superior performance. 
Specifically, under 0.8 symmetric noise, ANL-CE achieves a significant improvement, surpassing the state-of-the-art method by more than 5.0\%. 
Additionally, ANL-CE*, ANL-CE
with our entropy regularization
, exceeds all baseline methods across various noise types.
Moreover, ANL-CE* outperforms both ANL-CE and ANL-FL under non-symmetric noise with high noise rates, highlighting the effectiveness of our regularization approach in addressing the vulnerabilities of the original ANL framework to imbalanced label. 
For example, under 0.4 asymmetric noise, ANL-CE* outperforms the previous best methods (ALF) by 2.87\%.

Likewise, Table~\ref{tabel: cifar100} presents the CIFAR-100 results, where ANL-CE, ANL-FL, and ANL-CE* consistently outperform all baseline methods across different noise types. 
Under symmetric noise, as the noise rate increases, the performance improvements of our methods become even more significant. 
For example, under 0.2 and 0.4 symmetric noise, ANL-CE, ANL-FL, and ANL-CE* outperform other methods by less than 2\%, while at 0.6 the margin increases to over 3
\%, and at 0.8, it extends to over 5\%.
Notably, ANL-CE* also surpasses the original ANL-CE under
0.4 asymmetric noise, indicating that our entropy-based regularization technique enhances robustness against imbalanced labels.
Moreover, in scenarios with asymmetric, instance-dependent, and real-world noise, our methods continue to exhibit superior resilience, consistently outperforming the baseline models.

Overall, the results highlight the strong performance of our ANL framework across multiple datasets, noise types, and noise rates, validating the effectiveness and robustness of the proposed approach.

\begin{table}[t]
    \caption{
        Top-1 validation accuracies (\%) on clean ILSVRC12 and WebVision validation set of ResNet-50 models trained on WebVision using different methods. The top-1 results are in \textbf{bold}, and the second-best results are \underline{underlined}.}
    \vspace{-2mm}
    \label{tabel: webvision}
    \begin{center}
    \begin{tabular}{c|cc}
        \toprule
        Methods & ILSVRC12 Val & WebVision Val \\
        \midrule
        CE & 58.64 & 61.20 \\
        GCE \cite{zhang2018generalized} & 56.56 & 59.44 \\
        SCE \cite{wang2019symmetric} & 62.60 & {\underline{68.00}} \\
        APL \cite{ma2020normalized} & 62.40 & 64.92 \\
        ALF \cite{zhou2021asymmetric} & 60.76 & 63.92 \\
        \midrule
        ANL-CE & {\underline{65.00}} & 67.44 \\
        ANL-FL & {\bf{65.56}} & {\bf{68.32}} \\
        \bottomrule
    \end{tabular}
    \vspace{-3mm}
    \end{center}
\end{table}

\begin{table}[t]
    \caption{Test accuracies (\%) of different methods on Animal-10N dataset. The results (mean$\pm$std) are reported over 3 random runs under different random seeds ($1,2,3$) and the top-1 results are in \textbf{bold}.}
    \vspace{-2mm}
    \label{table: animal10n}
    \begin{center}
    \begin{tabular}{c|ccc}
    \toprule
    Methods & CE & GCE \cite{zhang2018generalized} & ANL-CE (ours) \\
    \midrule
    Test Acc. (\%) & 78.92\tiny$\pm$0.76 & 80.39\tiny$\pm$0.17 & \textbf{80.72\tiny$\pm$0.37} \\
    \bottomrule
    \end{tabular}
    \vspace{-4mm}
    \end{center}
\end{table}

\begin{table}[t]
    \caption{Test accuracies (\%) of different methods on Clothing-1M dataset. The top-1 results are in \textbf{bold}.}
    \vspace{-2mm}
    \label{table: clothing1m}
    \begin{center}
    \begin{tabular}{c|cccc}
    \toprule
    Methods & CE & GCE \cite{zhang2018generalized} & APL \cite{ma2020normalized} & ANL-CE (ours) \\
    \midrule
    Test Acc. (\%) & 68.07 & 68.94 & 69.07 & \textbf{69.93} \\
    \bottomrule
    \end{tabular}
    \vspace{-3mm}
    \end{center}
\end{table}

\begin{table*}[t]
    \caption{
        Dice score (\%) of different methods on ISIC-2017 with different noise rates. The results (mean$\pm$std) are reported from 3 random runs under different random seeds over the last 10 epochs. The top-1 results are in \textbf{bold}.}
    \vspace{-2mm}
    \label{table: isic}
    \begin{center}
        \begin{tabular}{c|c|cccccc}
        \toprule
        \multirow{2}{*}{Methods} & \multirow{2}{*}{$\eta_\alpha=0.0$} & \multicolumn{2}{c}{$\eta_\alpha=0.3$} & \multicolumn{2}{c}{$\eta_\alpha=0.5$} & \multicolumn{2}{c}{$\eta_\alpha=0.7$} \\
        & & $\eta_\beta=0.5$ & $\eta_\beta=0.7$ & $\eta_\beta=0.5$ & $\eta_\beta=0.7$ & $\eta_\beta=0.5$ & $\eta_\beta=0.7$ \\ \midrule
        GCE \cite{zhang2018generalized}          & 82.8\tiny$\pm$0.7    & 80.5\tiny$\pm$0.9   & 78.5\tiny$\pm$1.4 & 77.2\tiny$\pm$1.5 & 73.6\tiny$\pm$1.7 & 74.3\tiny$\pm$1.2 & 69.1\tiny$\pm$2.2 \\
        MAE \cite{ghosh2017robust}          & 82.6\tiny$\pm$0.6    & 80.3\tiny$\pm$0.7   & 78.6\tiny$\pm$1.2 & 77.1\tiny$\pm$1.0 & 74.2\tiny$\pm$1.4 & 75.1\tiny$\pm$0.9 & 69.8\tiny$\pm$2.0 \\
        SCE \cite{wang2019symmetric}          & 82.8\tiny$\pm$0.6    & 80.6\tiny$\pm$0.9   & 79.3\tiny$\pm$1.1 & 77.4\tiny$\pm$1.2 & 73.8\tiny$\pm$1.4 & 75.6\tiny$\pm$1.0 & 69.1\tiny$\pm$2.0 \\
        APL \cite{ma2020normalized} & 82.9\tiny$\pm$0.6 & 79.9\tiny$\pm$0.8   & 79.2\tiny$\pm$1.2 & 77.7\tiny$\pm$1.1 & 75.1\tiny$\pm$1.8 & 74.6\tiny$\pm$1.1 & 69.6\tiny$\pm$1.3 \\
        T-Loss \cite{DBLP:conf/miccai/medicalNoiseSegmentation} & 82.5\tiny$\pm$0.5 & 80.9\tiny$\pm$0.6   & 80.4\tiny$\pm$0.5 & 80.0\tiny$\pm$1.1 & \textbf{79.0\tiny$\pm$0.5} & 78.8\tiny$\pm$0.7 & 76.1\tiny$\pm$0.6 \\ \midrule
        ANL-CE & 82.6\tiny$\pm$0.1 & \textbf{81.7\tiny$\pm$0.1} & \textbf{80.8\tiny$\pm$0.5} & \textbf{80.4\tiny$\pm$0.3} & 78.7\tiny$\pm$0.5 & \textbf{79.6\tiny$\pm$0.6} & \textbf{77.3\tiny$\pm$0.4} \\
        \bottomrule
    \end{tabular}
    \vspace{-3mm}
    \end{center}
\end{table*}

\subsection{Experiments on Large-scale Real-world Datasets}

In this section, we evaluate the performance of our proposed ANL-CE and ANL-FL methods on three large-scale, real-world noisy datasets: WebVision\cite{wen2017webvision}, Animal-10N\cite{song2019selfie}, Clothing-1M\cite{xiao2015learning}. We compare our methods against several state-of-the-art approaches. The detailed results are presented in Table~\ref{tabel: webvision}, Table~\ref{table: animal10n}, and Table~\ref{table: clothing1m}. 

\subsubsection{WebVision Results}
As illustrated in Table~\ref{tabel: webvision}, we compare our proposed ANL-CE and ANL-FL methods against several state-of-the-art approaches, including GCE \cite{zhang2018generalized}, SCE \cite{wang2019symmetric}, APL \cite{ma2020normalized}, and AFL \cite{zhou2021asymmetric}. For the ILSVRC12 validation set, ANL-FL achieves the best accuracy of 65.56\%, followed closely by ANL-CE with 65.00\%. For the WebVision validation set, ANL-FL also attains the highest accuracy with 68.32\%. Additionally, ANL-CE achieves 67.44\%, trailing the second-best method, SCE, by only 0.56\%, still exhibiting its competitive performance.

\subsubsection{Animal-10N Results}
Table~\ref{table: animal10n} shows the results on the Animal-10N dataset, where we compare ANL-CE with CE and GCE \cite{zhang2018generalized}. ANL-CE achieves the highest test accuracy at 80.72\%, surpassing both baseline methods. 

\subsubsection{Clothing-1M Results}
For the Clothing-1M dataset, we compare ANL-CE with CE, GCE \cite{zhang2018generalized}, and APL \cite{ma2020normalized}. As shown in Table~\ref{table: clothing1m}, ANL-CE outperforms the best baseline (APL), achieving a top accuracy of 69.93\%.

\subsection{Experiments on Image Segmentation}

\begin{figure}[t]
\begin{center}
\centerline{\includegraphics[width=0.48\textwidth]{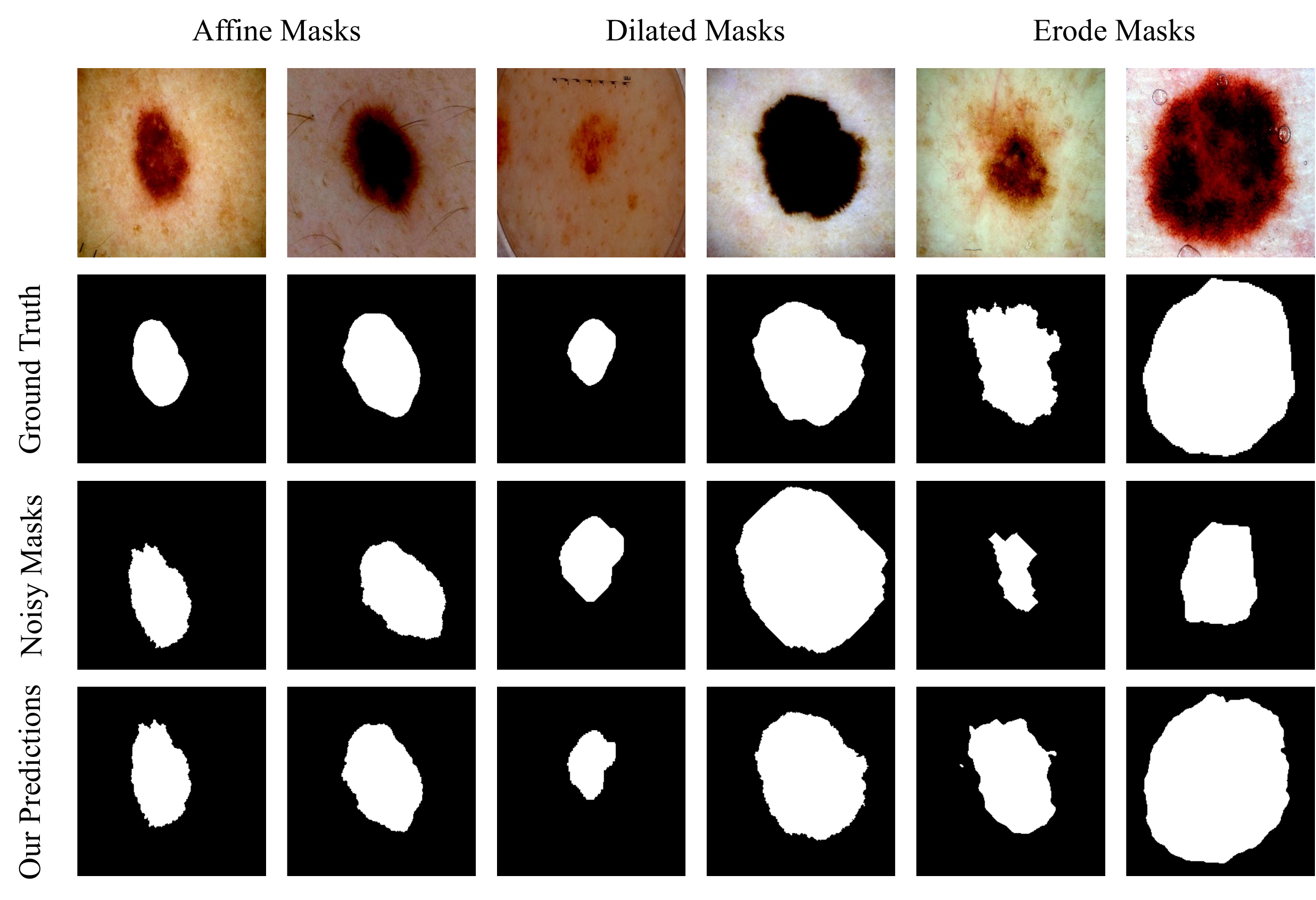}}
\vspace{-2mm}
\caption{
    Example segmentation results using ANL-CE on ISIC-2017 dataset under the noise scenario with $\eta_\alpha = 0.7$ and $\eta_\beta = 0.7$.
    The images show predicted masks under three noise conditions: affine masks, dilated masks, and eroded masks.
    Each row represents, from top to bottom: original images, ground truth masks, noisy masks, and our predicted masks.
}
\label{fig: segment result}
\end{center}
\vspace{-3mm}
\end{figure}

In this section, we evaluate the performance of our proposed ANL-CE methods on the skin lesion segmentation task to validate the noise tolerance of our proposed ANL when facing low-quality annotations in image segmentation.

\textbf{Baselines.}
We consider several state-of-the-art methods:
a) Generalized Cross Entropy (GCE) \cite{zhang2018generalized},
b) Mean Absolute Error \cite{ghosh2017robust},
c) Symmetric Cross Entropy (SCE) \cite{wang2019symmetric},
d) Active Passive Loss (APL) \cite{ma2020normalized}, specifically NCE+RCE,
e) T-Loss \cite{DBLP:conf/miccai/medicalNoiseSegmentation}.
For our proposed ANL framework, we consider loss functions ANL-CE.

\textbf{Results.}
We present experimental results for the skin lesion segmentation task on ISIC-2017 under various noise rates in Table~\ref{table: isic}. 
Our proposed ANL-CE consistently outperforms baseline methods across most noise types, with the only exception being when $\eta_\alpha = 0.7$ and $\eta_\beta = 0.7$, where it falls slightly behind T-Loss. 
Notably, our method achieves a dice score of 77.3\% even in the most extreme noise scenario, surpassing the previous best method by 0.8\%. 
Examples of the predicted masks under this extreme noise condition are shown in Figure~\ref{fig: segment result}. 
As illustrated in both Table~\ref{table: isic} and Figure~\ref{fig: segment result}, ANL-CE exhibits superior performance in high noise scenarios, further validating the robustness of our ANL framework against noisy labels in image segmentation tasks.

\section{Conclusion} \label{conclusion}
In this paper, we propose \emph{Active Negative Loss} (ANL), a framework of robust loss function with better performance.
A key component of this framework is \emph{Normalized Negative Loss Functions} (NNLFs), which integrates complementary label learning, ``vertical flipping'', and normalization. 
This allows us to convert any active loss function into a noise-robust passive one. 
We theoretically demonstrate that our NNLFs prioritize well-learned samples, and the ANL framework as a whole is robust to noisy labels.
Furthermore, considering the imbalance in label distribution caused by asymmetric label noise, 
we introduce an entropy-based regularization technique to further enhance its robustness. 
Along with this improvement, ANL consistently outperforms state-of-the-art methods across various types of label noise, including symmetric, asymmetric, instance-dependent, and real-world noise, as evidenced in our experiments. 
Moreover, we extend the ANL framework from the scenario of image classification to image segmentation, in which ANL keep surpassing current state-of-the-art approaches, showcasing the robustness and versatility of our method.

\section*{Acknowledgments}
This work is supported by the Shanghai Engineering Research Center of Intelligent Computing System (Grant No. 19DZ2252600), the Research Grants Council (RGC) under grant ECS-22303424, and the National Natural Science Foundation of China (Grant No. 62472097).
The authors also thank Prof.\ Cheng Jin for providing the computational resources that significantly contributed to the success of this research.

\bibliographystyle{IEEEtran}
\bibliography{IEEE}

@inproceedings{ma2020normalized,
  author    = {Xingjun Ma and
               Hanxun Huang and
               Yisen Wang and
               Simone Romano and
               Sarah M. Erfani and
               James Bailey},
  title     = {Normalized Loss Functions for Deep Learning with Noisy Labels},
  booktitle = {Proc. Int. Conf. Mach. Learn.},
  pages     = {6543--6553},
  year      = {2020}
}

@inproceedings{zhou2021asymmetric,
  author    = {Xiong Zhou and
               Xianming Liu and
               Junjun Jiang and
               Xin Gao and
               Xiangyang Ji},
  title     = {Asymmetric Loss Functions for Learning with Noisy Labels},
  booktitle = {Proc. Int. Conf. Mach. Learn.},
  pages     = {12846--12856},
  year      = {2021}
}

@techreport{krizhevsky2009learning,
  author = {Alex Krizhevsky and Geoffrey Hinton},
  title = {Learning Multiple Layers of Features from Tiny Images},
  year = {2009},
  institution = {University of Toronto},
}

@inproceedings{zhang2018generalized,
  author    = {Zhilu Zhang and
               Mert R. Sabuncu},
  title     = {Generalized Cross Entropy Loss for Training Deep Neural Networks with Noisy Labels},
  booktitle = {Adv. Neural Inf. Process. Syst.},
  pages     = {8792--8802},
  year      = {2018}
}

@inproceedings{zhang2017understanding,
  author    = {Chiyuan Zhang and
               Samy Bengio and
               Moritz Hardt and
               Benjamin Recht and
               Oriol Vinyals},
  title     = {Understanding Deep Learning Requires Rethinking Generalization},
  booktitle = {Proc. Int. Conf. Learn. Representations},
  year      = {2017}
}

@inproceedings{ghosh2017robust,
  author    = {Aritra Ghosh and Himanshu Kumar and P. S. Sastry},
  title     = {Robust Loss Functions under Label Noise for Deep Neural Networks},
  booktitle = {Proc. AAAI Conf. Artif. Intell.},
  pages     = {1919--1925},
  year      = {2017}
}

@inproceedings{wang2019symmetric,
  author    = {Yisen Wang and
               Xingjun Ma and
               Zaiyi Chen and
               Yuan Luo and
               Jinfeng Yi and
               James Bailey},
  title     = {Symmetric Cross Entropy for Robust Learning with Noisy Labels},
  booktitle = {Proc. IEEE Int. Conf. Comput. Vis.},
  pages     = {322--330},
  year      = {2019}
}

@inproceedings{lin2017focal,
  author    = {Tsung{-}Yi Lin and
               Priya Goyal and
               Ross B. Girshick and
               Kaiming He and
               Piotr Doll{\'{a}}r},
  title     = {Focal Loss for Dense Object Detection},
  booktitle = {Proc. IEEE Int. Conf. Comput. Vis.},
  pages     = {2999--3007},
  year      = {2017}
}

@inproceedings{natarajan2013learning,
  author    = {Nagarajan Natarajan and
               Inderjit S. Dhillon and
               Pradeep Ravikumar and
               Ambuj Tewari},
  title     = {Learning with Noisy Labels},
  booktitle = {Adv. Neural Inf. Process. Syst.},
  pages     = {1196--1204},
  year      = {2013}
}

@article{wen2017webvision,
  author    = {Wen Li and
               Limin Wang and
               Wei Li and
               Eirikur Agustsson and
               Luc Van Gool},
  title     = {WebVision Database: Visual Learning and Understanding from Web Data},
  journal   = {arXiv:1708.02862},
  year      = {2017}
}

@inproceedings{he2016deep,
  author    = {Kaiming He and
               Xiangyu Zhang and
               Shaoqing Ren and
               Jian Sun},
  title     = {Deep Residual Learning for Image Recognition},
  booktitle = {Proc. IEEE Conf. Comput. Vis. Pattern Recognit.},
  pages     = {770--778},
  year      = {2016}
}

@inproceedings{charoenphakdee2019symmetric,
  author    = {Nontawat Charoenphakdee and
               Jongyeong Lee and
               Masashi Sugiyama},
  title     = {On Symmetric Losses for Learning from Corrupted Labels},
  booktitle = {Proc. Int. Conf. Mach. Learn.},
  pages     = {961--970},
  year      = {2019}
}

@article{sokolic2017robust,
  author    = {Jure Sokolic and
               Raja Giryes and
               Guillermo Sapiro and
               Miguel R. D. Rodrigues},
  title     = {Robust Large Margin Deep Neural Networks},
  journal   = {{IEEE} Trans. Signal Process.},
  volume    = {65},
  number    = {16},
  pages     = {4265--4280},
  year      = {2017}
}

@article{hoffman2019robust,
  author    = {Judy Hoffman and
               Daniel A. Roberts and
               Sho Yaida},
  title     = {Robust Learning with Jacobian Regularization},
  journal   = {arXiv:1908.02729},
  year      = {2019},
}

@inproceedings{jiang2018mentornet,
  author    = {Lu Jiang and
               Zhengyuan Zhou and
               Thomas Leung and
               Li{-}Jia Li and
               Li Fei{-}Fei},
  title     = {MentorNet: Learning Data-Driven Curriculum for Very Deep Neural Networks on Corrupted Labels},
  booktitle = {Proc. Int. Conf. Mach. Learn.},
  pages     = {2309--2318},
  year      = {2018}
}

@inproceedings{han2018co,
  author    = {Bo Han and
               Quanming Yao and
               Xingrui Yu and
               Gang Niu and
               Miao Xu and
               Weihua Hu and
               Ivor W. Tsang and
               Masashi Sugiyama},
  title     = {Co-teaching: Robust Training of Deep Neural Networks with Extremely Noisy Labels},
  booktitle = {Adv. Neural Inf. Process. Syst.},
  pages     = {8536--8546},
  year      = {2018}
}

@inproceedings{patrini2017making,
  author    = {Giorgio Patrini and
               Alessandro Rozza and
               Aditya Krishna Menon and
               Richard Nock and
               Lizhen Qu},
  title     = {Making Deep Neural Networks Robust to Label Noise: A Loss Correction Approach},
  booktitle = {Proc. IEEE Conf. Comput. Vis. Pattern Recognit.},
  pages     = {2233--2241},
  year      = {2017}
}

@inproceedings{ishida2017learning,
  author    = {Takashi Ishida and
               Gang Niu and
               Weihua Hu and
               Masashi Sugiyama},
  title     = {Learning from Complementary Labels},
  booktitle = {Adv. Neural Inf. Process. Syst.},
  pages     = {5639--5649},
  year      = {2017}
}

@inproceedings{yu2018learning,
  author    = {Xiyu Yu and
               Tongliang Liu and
               Mingming Gong and
               Dacheng Tao},
  title     = {Learning with Biased Complementary Labels},
  booktitle = {Proc. Eur. Conf. Comput. Vis.},
  pages     = {69--85},
  year      = {2018}
}

@inproceedings{xiao2015learning,
  author    = {Tong Xiao and
               Tian Xia and
               Yi Yang and
               Chang Huang and
               Xiaogang Wang},
  title     = {Learning from Massive Noisy Labeled Data for Image Classification},
  booktitle = {Proc. IEEE Conf. Comput. Vis. Pattern Recognit.},
  pages     = {2691--2699},
  year      = {2015}
}

@inproceedings{Feng2020CanCE,
  title     = {Can Cross Entropy Loss Be Robust to Label Noise?},
  author    = {Lei Feng and Senlin Shu and Zhuoyi Lin and Fengmao Lv and Li Li and Bo An},
  booktitle = {Proc. Int. Joint Conf. Artif. Intell.},
  year      = {2020}
}

@inproceedings{Menon2020clip,
  title     = {Can Gradient Clipping Mitigate Label Noise?},
  author    = {Aditya Krishna Menon and Ankit Singh Rawat and Sanjiv Kumar and Sashank Reddi},
  booktitle = {Proc. Int. Conf. Learn. Representations},
  year      = {2020}
}

@inproceedings{Kingma2014adam,
  author    = {Diederik P. Kingma and Jimmy Ba},
  title     = {Adam: {A} Method for Stochastic Optimization},
  booktitle = {Proc. Int. Conf. Learn. Representations},
  year      = {2015}
}

@inproceedings{li2020gradient,
  title     = {Gradient Descent with Early Stopping Is Provably Robust to Label Noise for Overparameterized Neural Networks},
  author    = {Mingchen Li and Mahdi Soltanolkotabi and Samet Oymak},
  booktitle = {Proc. Int. Conf. Artif. Intell. Stat.},
  pages     = {4313--4324},
  year      = {2020}
}

@inproceedings{song2019selfie,
  title     = {SELFIE: Refurbishing Unclean Samples for Robust Deep Learning},
  author    = {Hwanjun Song and Minseok Kim and Jae-Gil Lee},
  booktitle = {Proc. Int. Conf. Mach. Learn.},
  year      = {2019}
}

@inproceedings{tanaka2018joint,
  title     = {Joint Optimization Framework for Learning with Noisy Labels},
  author    = {Daiki Tanaka and Daiki Ikami and Toshihiko Yamasaki and Kiyoharu Aizawa},
  booktitle = {Proc. IEEE Conf. Comput. Vis. Pattern Recognit.},
  pages     = {5552--5560},
  year      = {2018}
}

@inproceedings{imagenet,
  author    = {Jia Deng and Wei Dong and Richard Socher and Li-Jia Li and Kai Li and Li Fei-Fei},
  title     = {ImageNet: A Large-Scale Hierarchical Image Database},
  booktitle = {Proc. IEEE Conf. Comput. Vis. Pattern Recognit.},
  pages     = {248--255},
  year      = {2009}
}

@inproceedings{Wang-2019-SCE,
  author    = {Yisen Wang and Xingjun Ma and Zaiyi Chen and Yuan Luo and Jinfeng Yi and James Bailey},
  title     = {Symmetric Cross Entropy for Robust Learning with Noisy Labels},
  booktitle = {Proc. IEEE Int. Conf. Comput. Vis.},
  pages     = {322--330},
  year      = {2019}
}

@inproceedings{Ma-2020-APL,
  author    = {Xingjun Ma and Hanxun Huang and Yisen Wang and Simone Romano and Sarah Erfani and James Bailey},
  title     = {Normalized Loss Functions for Deep Learning with Noisy Labels},
  booktitle = {Proc. Int. Conf. Mach. Learn.},
  articleno = {607},
  year      = {2020}
}

@inproceedings{Menon-2020-CanGradClip,
  author    = {Aditya Krishna Menon and Ankit Singh Rawat and Sashank J. Reddi and Sanjiv Kumar},
  title     = {Can Gradient Clipping Mitigate Label Noise?},
  booktitle = {Proc. Int. Conf. Learn. Representations},
  year      = {2020}
}

@inproceedings{Feng-2020-TaylorCE,
  author    = {Lei Feng and Senlin Shu and Zhuoyi Lin and Fengmao Lv and Li Li and Bo An},
  title     = {Can Cross Entropy Loss Be Robust to Label Noise?},
  booktitle = {Proc. Int. Joint Conf. Artif. Intell.},
  articleno = {305},
  year      = {2021}
}

@article{Zhou-2023-asymmetric,
  author    = {Xiong Zhou and Xianming Liu and Deming Zhai and Junjun Jiang and Xiangyang Ji},
  title     = {Asymmetric Loss Functions for Noise-Tolerant Learning: Theory and Applications},
  journal   = {{IEEE} Trans. Pattern Anal. Mach. Intell.},
  volume    = {45},
  number    = {7},
  pages     = {8094--8109},
  year      = {2023}
}

@inproceedings{Kim-2019-NLNL,
  author    = {Youngdong Kim and Junho Yim and Juseung Yun and Junmo Kim},
  title     = {NLNL: Negative Learning for Noisy Labels},
  booktitle = {Proc. IEEE Int. Conf. Comput. Vis.},
  pages     = {101--110},
  year      = {2019}
}

@inproceedings{Kim-2021-JNPL,
  author    = {Youngdong Kim and Juseung Yun and Hyounguk Shon and Junmo Kim},
  title     = {Joint Negative and Positive Learning for Noisy Labels},
  booktitle = {Proc. IEEE Conf. Comput. Vis. Pattern Recognit.},
  pages     = {9437--9446},
  year      = {2021}
}

@article{Wang-2018-impor-rewei,
  author    = {Ruxin Wang and Tongliang Liu and Dacheng Tao},
  title     = {Multiclass Learning with Partially Corrupted Labels},
  journal   = {{IEEE} Trans. Neural Netw. Learn. Syst.},
  volume    = {29},
  number    = {6},
  pages     = {2568--2580},
  year      = {2018}
}

@inproceedings{Chang-2017-active-bias,
  author    = {Haw-Shiuan Chang and Erik Learned-Miller and Andrew McCallum},
  title     = {Active Bias: Training More Accurate Neural Networks by Emphasizing High Variance Samples},
  booktitle = {Adv. Neural Inf. Process. Syst.},
  volume    = {30},
  year      = {2017}
}

@article{Song-2022-noisy-survey,
  title     = {Learning from Noisy Labels with Deep Neural Networks: A Survey},
  author    = {Hwanjun Song and Minseok Kim and Dongmin Park and Yooju Shin and Jae-Gil Lee},
  journal   = {{IEEE} Trans. Neural Netw. Learn. Syst.},
  year      = {2022}
}

@article{zheng2017metaLC,
  title     = {Meta Label Correction for Noisy Label Learning},
  author    = {Guoqing Zheng and Ahmed Hassan Awadallah and Susan Dumais},
  journal   = {Proc. AAAI Conf. Artif. Intell.},
  pages     = {11053--11061},
  year      = {2021}
}

@article{yu2024fatal,
  author    = {X. Yu and S. Zhang and L. Jia and Y. Wang and M. Song and Z. Feng},
  title     = {Noise Is the Fatal Poison: A Noise-Aware Network for Noisy Dataset Classification},
  journal   = {Neurocomputing},
  year      = {2024}
}

@inproceedings{Bekker2016unreliable,
  author    = {A. J. Bekker and J. Goldberger},
  title     = {Training Deep Neural-Networks Based on Unreliable Labels},
  booktitle = {Proc. IEEE Int. Conf. Acoust. Speech Signal Process.},
  year      = {2016}
}

@inproceedings{Misra2016humancentr,
  author    = {Ishan Misra and C. Lawrence Zitnick and M. Mitchell and R. Girshick},
  title     = {Seeing Through the Human Reporting Bias: Visual Classifiers from Noisy Human-Centric Labels},
  booktitle = {Proc. IEEE Conf. Comput. Vis. Pattern Recognit.},
  year      = {2016}
}

@inproceedings{Liu2024temporal,
  title     = {Towards Robust Temporal Activity Localization Learning with Noisy Labels},
  author    = {Daizong Liu and Xiaoye Qu and Xiang Fang and Jianfeng Dong and Pan Zhou and Guoshun Nan and Keke Tang and Wanlong Fang and Yu Cheng},
  booktitle = {Proc. Joint Int. Conf. Comput. Linguistics, Lang. Resour. Eval.},
  pages     = {16630--16642},
  year      = {2024},
  publisher = {ELRA and ICCL}
}

@inproceedings{DBLP:conf/nips/XiaL0WGL0TS20:PDN,
  author    = {Xiaobo Xia and Tongliang Liu and Bo Han and Nannan Wang and Mingming Gong and Haifeng Liu and Gang Niu and Dacheng Tao and Masashi Sugiyama},
  title     = {Part-Dependent Label Noise: Towards Instance-Dependent Label Noise},
  booktitle = {Proc. Adv. Neural Inf. Process. Syst.},
  year      = {2020}
}

@inproceedings{DBLP:conf/iclr/WeiZ0L0022:CIFAR-10N/-100N,
  author    = {Jiaheng Wei and Zhaowei Zhu and Hao Cheng and Tongliang Liu and Gang Niu and Yang Liu},
  title     = {Learning with Noisy Labels Revisited: {A} Study Using Real-World Human Annotations},
  booktitle = {Proc. Int. Conf. Learn. Representations},
  year      = {2022}
}

@article{han2020survey,
  title     = {A Survey of Label-Noise Representation Learning: Past, Present and Future},
  author    = {Bo Han and Quanming Yao and Tongliang Liu and Gang Niu and Ivor W. Tsang and James T. Kwok and Masashi Sugiyama},
  journal   = {arXiv:2011.04406},
  year      = {2020}
}

@inproceedings{arpit2017closer,
  title     = {A Closer Look at Memorization in Deep Networks},
  author    = {Devansh Arpit and Stanis{\l}aw Jastrz{\k{e}}bski and Nicolas Ballas and David Krueger and Emmanuel Bengio and Maxinder S. Kanwal and Tegan Maharaj and Asja Fischer and Aaron Courville and Yoshua Bengio and others},
  booktitle = {Proc. Int. Conf. Mach. Learn. },
  pages     = {233--242},
  year      = {2017},
  organization = {PMLR}
}

@article{ye2023active,
  title     = {Active Negative Loss Functions for Learning with Noisy Labels},
  author    = {Xichen Ye and Xiaoqiang Li and Tong Liu and Yan Sun and Weiqin Tong and others},
  journal   = {Adv. Neural Inf. Process. Syst.},
  volume    = {36},
  pages     = {6917--6940},
  year      = {2023}
}

@inproceedings{DBLP:conf/miccai/medicalNoiseSegmentation,
  author    = {{\'{A}}lvaro Gonz{\'{a}}lez{-}Jim{\'{e}}nez and Simone Lionetti and Philippe Gottfrois and Fabian Gr{\"{o}}ger and Marc Pouly and Alexander A. Navarini},
  title     = {Robust T-Loss for Medical Image Segmentation},
  booktitle = {Proc. Int. Conf. Med. Image Comput. Comput. Assist. Intervent.},
  volume    = {14222},
  pages     = {714--724},
  year      = {2023},
  publisher = {Springer}
}

@inproceedings{long2015fullySemanticSegmentation,
  title     = {Fully Convolutional Networks for Semantic Segmentation},
  author    = {Jonathan Long and Evan Shelhamer and Trevor Darrell},
  booktitle = {Proc. IEEE Conf. Comput. Vis. Pattern Recognit.},
  pages     = {3431--3440},
  year      = {2015}
}

@inproceedings{miccai2015unet,
  title     = {U-Net: Convolutional Networks for Biomedical Image Segmentation},
  author    = {Olaf Ronneberger and Philipp Fischer and Thomas Brox},
  booktitle = {Proc. Int. Conf. Med. Image Comput. Comput. Assist. Intervent.},
  pages     = {234--241},
  year      = {2015},
  organization = {Springer}
}

@inproceedings{he2017maskrcnn,
  title     = {Mask R-CNN},
  author    = {Kaiming He and Georgia Gkioxari and Piotr Doll{\'{a}}r and Ross Girshick},
  booktitle = {Proc. IEEE Int. Conf. Comput. Vis.},
  pages     = {2961--2969},
  year      = {2017}
}

@article{isic,
  author    = {Noel C. F. Codella and David A. Gutman and M. Emre Celebi and Brian Helba and Michael A. Marchetti and Stephen W. Dusza and Aadi Kalloo and Konstantinos Liopyris and Nabin K. Mishra and Harald Kittler and Allan Halpern},
  title     = {Skin Lesion Analysis Toward Melanoma Detection: A Challenge at the 2017 International Symposium on Biomedical Imaging (ISBI), Hosted by the International Skin Imaging Collaboration (ISIC)},
  journal   = {{IEEE} Trans. Biomed. Eng.},
  volume    = {65},
  number    = {7},
  pages     = {168--172},
  year      = {2018},
  publisher = {{IEEE}}
}

@article{DBLP:conf/miccai/LiGH21,
  author    = {Shuailin Li and Zhitong Gao and Xuming He},
  title     = {Superpixel-Guided Iterative Learning from Noisy Labels for Medical Image Segmentation},
  journal   = {{IEEE} Trans. Med. Imaging},
  volume    = {40},
  number    = {9},
  pages     = {525--535},
  year      = {2021},
  publisher = {Springer}
}

@article{nnunet,
  author    = {Fabian Isensee and Paul F. Jaeger and Simon A. A. Kohl and Jens Petersen and Klaus H. Maier-Hein},
  title     = {nnU-Net: A Self-Configuring Method for Deep Learning-Based Biomedical Image Segmentation},
  journal   = {Nat. Methods},
  volume    = {18},
  number    = {2},
  pages     = {203--211},
  year      = {2021},
  publisher = {Nature Publishing Group}
}

@inproceedings{li2021contrastive,
  author    = {Yunfan Li and Peng Hu and Zitao Liu and Dezhong Peng and Joey Tianyi Zhou and Xi Peng},
  title     = {Contrastive Clustering},
  booktitle = {Proc. AAAI Conf. Artif. Intell.},
  volume    = {35},
  number    = {10},
  pages     = {8547--8555},
  year      = {2021}
}

@inproceedings{DBLP:conf/nips/LiuNRF20:ELR,
  author       = {Sheng Liu and
                  Jonathan Niles{-}Weed and
                  Narges Razavian and
                  Carlos Fernandez{-}Granda},
  title        = {Early-Learning Regularization Prevents Memorization of Noisy Labels},
  booktitle    = {Proc. Adv. Neural Inf. Process. Syst.},
  year         = {2020},
}

@inproceedings{DBLP:conf/aaai/OGC,
  author       = {Xichen Ye and
                  Yifan Wu and
                  Weizhong Zhang and
                  Xiaoqiang Li and
                  Yifan Chen and
                  Cheng Jin},
  editor       = {Toby Walsh and
                  Julie Shah and
                  Zico Kolter},
  title        = {Optimized Gradient Clipping for Noisy Label Learning},
  booktitle    = {Proc. AAAI Conf. Artif. Intell.},
  pages        = {9463--9471},
  year         = {2025},
}

@inproceedings{
    ye2025towards,
    title={Towards Robust Influence Functions with Flat Validation Minima},
    author={Xichen Ye and Yifan Wu and WEIZHONG ZHANG and Cheng Jin and Yifan Chen},
    booktitle={Proc. Int. Conf. Mach. Learn. },
    year={2025},
}

@inproceedings{
cho2025dual,
title={Lightweight Dataset Pruning without Full Training via Example Difficulty and Prediction Uncertainty},
author={Yeseul Cho and Baekrok Shin and Changmin Kang and Chulhee Yun},
booktitle={Proc. Int. Conf. Mach. Learn. },
year={2025},
}

@InProceedings{CA2C,
    author    = {Sheng, Mengmeng and Sun, Zeren and Zhou, Tianfei and Shu, Xiangbo and Pan, Jinshan and Yao, Yazhou},
    title     = {CA2C: A Prior-Knowledge-Free Approach for Robust Label Noise Learning via Asymmetric Co-learning and Co-training},
    booktitle = {Proc. IEEE Int. Conf. Comput. Vis.},
    month     = {October},
    year      = {2025},
    pages     = {901-911}
}

@inproceedings{sheng2024enhancing,
    title={Enhancing Robustness in Learning with Noisy Labels: An Asymmetric Co-Training Approach},
    author={Mengmeng Sheng and Zeren Sun and Gensheng Pei and Tao Chen and Haonan Luo and Yazhou Yao},
    booktitle={Proc. ACM Int. Conf. Multimedia},
    year={2024},
}

\begin{IEEEbiographynophoto}{Xichen Ye}
received the B.E. degree in computer science and technology with the School of Information Science and Technology, Hangzhou Normal University, Zhejiang, China, in 2021, and the M.E. degree in computer science and technology with the School of Computer Engineering and Science, Shanghai University, Shanghai, China, in 2024. He is currently a research assistant at the School of Computer Science, Fudan University, Shanghai, China. His research interests include robust machine learning within the field of computer vision.
\end{IEEEbiographynophoto}

\begin{IEEEbiographynophoto}{Yifan Wu} 
received the B.E. degree in intelligent science and technology from the School of Computer Engineering and Science, Shanghai University, Shanghai, China, in 2024. He is currently a research assistant at the School of Computer Science, Fudan University, Shanghai, China. His research interests include anomaly detection, out-of-distribution detection, and noisy label learning.
\end{IEEEbiographynophoto}

\begin{IEEEbiographynophoto}{Yiqi Wang} 
is currently pursuing the B.E. degree in Artificial Intelligence from the School of Computer Engineering and Science, Shanghai University, Shanghai, China. Her research interests include computer vision, data-efficient learning.
\end{IEEEbiographynophoto}

\begin{IEEEbiographynophoto}{Xiaoqiang Li} (Member, IEEE) 
received the Ph.D. degree in computer science from Fudan University, Shanghai, China, in 2004. He is currently an Associate Professor of computer science with Shanghai University, China. He is the Deputy Director of the Multimedia Special Committee of the Shanghai Computer Society. His current research interests include image processing, pattern recognition, computer vision, and machine learning. He has published over 100 conference and journal papers in these areas, including CVPR, ICCV, Nerurips, ICLR, IEEE TCSVT, IEEE TMM, IEEE TIP, \etc
\end{IEEEbiographynophoto}

\begin{IEEEbiographynophoto}{Weizhong Zhang}
received the B.S. and Ph.D. degrees from Zhejiang University in 2012 and 2017, respectively. He is currently a tenure-track Professor with the School of Data Science, Fudan University. His research interests include sparse neural network training, robustness, and out-of-distribution generalization.
\end{IEEEbiographynophoto}

\begin{IEEEbiographynophoto}{Yifan Chen} (Member, IEEE)
received the B.S. degree from Fudan University, Shanghai, China, in 2018, and the PhD degree in Statistics from University of Illinois Urbana-Champaign in 2023. He is currently an assistant professor in computer science and math at Hong Kong Baptist University. He is broadly interested in developing efficient machine learning algorithms, encompassing both statistical and deep learning models.
\end{IEEEbiographynophoto}

\vfill

\appendices
\onecolumn
\clearpage
\pagestyle{plain}  
\setcounter{page}{1}  

\bigskip
\begin{center}
\setlength{\baselineskip}{1.6\baselineskip}
{\LARGE\bf Supplementary Material for ``\mytitle''}
\end{center}

\section{Loss functions}
\label{appendix: loss_functions}

\subsection{Normalized Negative Loss Functions}
\label{appendix: NNLF}
Here, we show how to derive Normalized Negative Loss Functions (NNLFs) into its proper form.
\begin{align}
    \begin{aligned}
        NNLF
        & = \frac{\sum_{k=1}^K (1-\boldsymbol{q}(k|\boldsymbol{x})) (A - \mathcal{L}(f(\boldsymbol{x}),k))}{\sum_{j=1}^K \sum_{k=1}^K (1 - \boldsymbol{q}(k=j|\boldsymbol{x})) (A - \mathcal{L}(f(\boldsymbol{x}),k))} \\
        & = \frac{\sum_{k \ne y} A - \mathcal{L} (f(\boldsymbol{x}),k)}{\sum_{j=1}^K \sum_{k \ne j} A - \mathcal{L} (f(\boldsymbol{x}),k)} \\
        & = \frac{\sum_{k \ne y} A - \mathcal{L} (f(\boldsymbol{x}),k)}{(K-1)\sum_{k=1}^K A - \mathcal{L} (f(\boldsymbol{x}),k)} \\
        & = \frac{1}{K-1} \cdot \Big( 1 - \frac{A-\mathcal{L}(f(\boldsymbol{x}),y)}{\sum_{k=1}^K A - \mathcal{L} (f(\boldsymbol{x}),k)} \Big) \\
        & \propto 1 - \frac{A-\mathcal{L}(f(\boldsymbol{x}),y)}{\sum_{k=1}^K A - \mathcal{L} (f(\boldsymbol{x}),k)}.
    \end{aligned} 
\end{align}

\section{Noise tolerant}
\label{appendix: noise tolerant}

\begin{theoremappendix}
\label{thmapp: symmetric}
Normalized negative loss function $\mathcal{L}_\text{nn}$ is symmetric.
\end{theoremappendix}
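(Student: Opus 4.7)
The plan is to verify Definition~\ref{def:sym} by directly computing $\sum_{k=1}^K \mathcal{L}_\text{nn}(f(\boldsymbol{x}), k)$ and showing it equals a constant that depends only on the number of classes. Substituting the definition of $\mathcal{L}_\text{nn}$ from Eq.~\eqref{eq:nnlf}, I would write
\[
\sum_{k=1}^K \mathcal{L}_\text{nn}(f(\boldsymbol{x}), k) = \sum_{k=1}^K \left(1 - \frac{A - \mathcal{L}(f(\boldsymbol{x}), k)}{\sum_{j=1}^K \bigl(A - \mathcal{L}(f(\boldsymbol{x}), j)\bigr)}\right).
\]

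The central observation is that the denominator of the fraction does not depend on the summation index $k$. I would pull this denominator outside the sum and note that, as $k$ ranges over $\{1, \ldots, K\}$, the numerators trace out exactly the same list of terms as the fixed denominator. The sum of fractions therefore collapses to $1$, and the whole expression evaluates to $K - 1$, a constant independent of both $\boldsymbol{x}$ and $f$. This satisfies the definition of a symmetric loss function with $C = K - 1$.

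The main technicality I anticipate concerns the well-definedness of the fraction, namely ensuring that $\sum_{j=1}^K \bigl(A - \mathcal{L}(f(\boldsymbol{x}), j)\bigr)$ is strictly positive. Since $A$ is chosen as the maximum value of $\mathcal{L}$, each summand is non-negative, and the probability clipping $p_\text{min} = 10^{-7}$ mentioned in the construction of NNLFs guarantees that for instantiations such as cross-entropy (where nominally $A = -\log 0$), the individual terms stay bounded away from the degenerate case. Beyond this well-definedness check, the argument is a direct algebraic manipulation with no deeper obstacle, and its conclusion will then serve as the launching point for \Cref{thm:sym-noise-robust,thm:asym-noise-robust}, whose noise-tolerance arguments for symmetric losses follow the standard template of \cite{ghosh2017robust}.
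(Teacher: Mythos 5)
Your proposal is correct and follows essentially the same route as the paper's proof: both substitute the definition of $\mathcal{L}_\text{nn}$, use that the denominator is independent of the summation index so the summed numerators cancel against it, and conclude $\sum_{k=1}^K \mathcal{L}_\text{nn}(f(\boldsymbol{x}),k) = K-1$, a constant. Your additional remark on the positivity of the denominator (via the choice of $A$ and probability clipping) is a sensible well-definedness check but does not alter the argument.
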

\begin{proof}
For all $\boldsymbol{x} \in \mathcal{X}$ and all $f$, we have:
\begin{align}
    \begin{aligned}
        \sum_{k=1}^K \mathcal{L}_\text{nn}(f(\boldsymbol{x}),k)
        & = \sum_{k=1}^K \Big( 1 - \frac{A-\mathcal{L}(f(\boldsymbol{x}),k)}{\sum_{j=1}^K A - \mathcal{L} (f(\boldsymbol{x}),j)} \Big) \\
        & = K - \frac{\sum_{k=1}^K A-\mathcal{L}(f(\boldsymbol{x}),k)}{\sum_{j=1}^K A - \mathcal{L} (f(\boldsymbol{x}),j)} \\
        & = K-1,
    \end{aligned}
\end{align}
where $K-1$ is a constant and $\mathcal{L}_\text{nn}$ satisfies the definition of the symmetric loss function.
\end{proof}

\begin{theoremappendix}
\label{thmapp: sym-noise-robust}
In a multi-class classification problem, any normalized negative loss function $\mathcal{L}_{nn}$ is noise tolerant under symmetric label noise, if noise rate $\eta< \frac{K-1}{K}$.
\end{theoremappendix}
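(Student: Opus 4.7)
The plan is to leverage the symmetry of $\mathcal{L}_\text{nn}$ established in the preceding Theorem~\ref{thmapp: symmetric} (where we showed the symmetry constant equals $C = K-1$) and reduce the claim to the standard Ghosh-style argument for symmetric loss functions under symmetric label noise. The target is to show that the noisy risk $R^\eta_{\mathcal{L}_\text{nn}}(f)$ differs from the clean risk $R_{\mathcal{L}_\text{nn}}(f)$ only by an affine transformation with a \emph{positive} slope whenever $\eta < \tfrac{K-1}{K}$, which forces the two risks to share the same global minimizer.

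First, I would write out the noisy risk by conditioning on whether a sample's label was flipped. Under the symmetric noise model, with probability $1-\eta$ the noisy label $\hat{y}$ equals the true label $y$, and with probability $\tfrac{\eta}{K-1}$ it is replaced by any specific alternative class $j \ne y$. This gives
\begin{equation*}
R^\eta_{\mathcal{L}_\text{nn}}(f) = \mathbb{E}_{\boldsymbol{x},y}\!\left[(1-\eta)\mathcal{L}_\text{nn}(f(\boldsymbol{x}),y) + \frac{\eta}{K-1}\sum_{j \ne y} \mathcal{L}_\text{nn}(f(\boldsymbol{x}),j)\right].
\end{equation*}
Next I would invoke symmetry: by Theorem~\ref{thmapp: symmetric}, $\sum_{k=1}^K \mathcal{L}_\text{nn}(f(\boldsymbol{x}),k) = K-1$, so $\sum_{j \ne y}\mathcal{L}_\text{nn}(f(\boldsymbol{x}),j) = (K-1) - \mathcal{L}_\text{nn}(f(\boldsymbol{x}),y)$. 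Substituting and simplifying yields
\begin{equation*}
R^\eta_{\mathcal{L}_\text{nn}}(f) = \eta + \frac{K-1-K\eta}{K-1}\, R_{\mathcal{L}_\text{nn}}(f).
\end{equation*}

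From here the conclusion is immediate. Since $\eta < \tfrac{K-1}{K}$ is equivalent to $K-1-K\eta > 0$, the coefficient in front of $R_{\mathcal{L}_\text{nn}}(f)$ is strictly positive. Hence for any $f$ we have $R^\eta_{\mathcal{L}_\text{nn}}(f^\ast) - R^\eta_{\mathcal{L}_\text{nn}}(f) = \tfrac{K-1-K\eta}{K-1}\bigl(R_{\mathcal{L}_\text{nn}}(f^\ast) - R_{\mathcal{L}_\text{nn}}(f)\bigr) \le 0$, where $f^\ast$ is the clean-risk minimizer. Therefore $f^\ast$ also minimizes $R^\eta_{\mathcal{L}_\text{nn}}$, and by the symmetric argument any minimizer $f^\ast_\eta$ of the noisy risk minimizes the clean risk, satisfying Definition~\ref{def:tolerance}.

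The proof involves no substantial obstacle: the only nontrivial ingredient is the symmetry constant $K-1$, which is already available from Theorem~\ref{thmapp: symmetric}. The rest is a direct specialization of the Ghosh et al.~\cite{ghosh2017robust} template to our specific constant, with the threshold $\eta < \tfrac{K-1}{K}$ arising naturally as the condition that preserves the sign of the affine transformation between clean and noisy risks.
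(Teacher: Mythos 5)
Your proof is correct: the risk decomposition under symmetric noise, the use of the symmetry constant $C=K-1$ from Theorem~\ref{thmapp: symmetric}, and the resulting affine relation $R^\eta_{\mathcal{L}_\text{nn}}(f) = \eta + \tfrac{K-1-K\eta}{K-1}R_{\mathcal{L}_\text{nn}}(f)$ are all right, and the positivity of the slope under $\eta<\tfrac{K-1}{K}$ does give coincidence of minimizers in both directions. The route differs from the paper's in presentation rather than substance: the paper does not redo this computation but instead observes that $\tfrac{1}{K-1}\mathcal{L}_\text{nn}$ is symmetric and valued in $[0,1]$, hence qualifies as a ``normalized loss function'' in the sense of Ma et al., and then invokes Lemma~1 of that work (whose own proof is exactly the Ghosh-style argument you wrote out) to conclude noise tolerance. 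Your version is self-contained and makes explicit where the threshold $\eta<\tfrac{K-1}{K}$ comes from, and it also shows that the boundedness check the paper performs is not actually needed in the symmetric-noise case (symmetry alone suffices); the paper's version is shorter and keeps the appendix uniform, since the asymmetric-noise theorem genuinely needs the scaling-and-boundedness reduction to the prior lemmas. Either proof is acceptable.
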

\begin{proof}
From Lemma 1 in \cite{Ma-2020-APL}, a normalized loss function $\mathcal{L}_{\text{norm}}$ is noise-tolerant under symmetric label noise if the noise rate satisfies $\eta < \frac{K-1}{K}$.
By Theorem 1 in our paper, the normalized negative loss function  $\mathcal{L}_{\text{nn}}$ is symmetric, and the scaled loss $\frac{1}{K-1} \mathcal{L}_\text{nn}$ satisfies the property $\frac{1}{K-1} \mathcal{L}_\text{nn} \in [0,1]$, as required for a normalized loss function in \cite{Ma-2020-APL}.
Substituting $\mathcal{L}_{\text{norm}} = \frac{1}{K-1} \mathcal{L}_{\text{nn}}$ into Lemma 1 of \cite{Ma-2020-APL}, we conclude that $\mathcal{L}_{\text{nn}}$ is noise-tolerant under symmetric noise for $\eta < \frac{K-1}{K}$.
\end{proof}

\begin{theoremappendix}
\label{thmapp: asym-noise-robust}
In a multi-class classification problem, suppose $\mathcal{L}_{nn}$ satisfies $0 \le \mathcal{L}_{nn}(f(\boldsymbol{x}),k) \le 1$, $\forall k \in \{1,\cdots,K\}$.
If $R(f^*)=0$, then, any normalized negative loss function $\mathcal{L}_{nn}$ is noise tolerant under asymmetric label noise when noise rate $\eta_{yk}<1-\eta_y$.
\end{theoremappendix}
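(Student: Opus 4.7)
The plan is to adapt the classical Ghosh--Kumar--Sastry argument for asymmetric noise tolerance to our symmetric and bounded loss $\mathcal{L}_{nn}$. Decomposing the noisy risk under the class-dependent corruption model of \Cref{sec: label_noise_model} yields
\begin{equation*}
R^\eta_{\mathcal{L}_{nn}}(f) = \mathbb{E}_{\boldsymbol{x},y}\Bigl[(1-\eta_y)\,\mathcal{L}_{nn}(f(\boldsymbol{x}),y) + \sum_{k\ne y}\eta_{yk}\,\mathcal{L}_{nn}(f(\boldsymbol{x}),k)\Bigr],
\end{equation*}
and the aim is to show $R^\eta_{\mathcal{L}_{nn}}(f) \ge R^\eta_{\mathcal{L}_{nn}}(f^*)$ for every classifier $f$, so that any noisy-risk minimizer $f^*_\eta$ must itself achieve $R(f^*_\eta) = 0$ and therefore coincide with $f^*$ at the level of clean risk.

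First I would pin down the value of $\mathcal{L}_{nn}(f^*,\cdot)$ pointwise. Interpreting $R(f^*) = 0$ as $R_{\mathcal{L}_{nn}}(f^*) = 0$, the non-negativity of $\mathcal{L}_{nn}$ forces $\mathcal{L}_{nn}(f^*(\boldsymbol{x}),y) = 0$ almost surely. Invoking the symmetry of $\mathcal{L}_{nn}$ established in \Cref{thm:symmetric}, the remaining $K-1$ terms satisfy $\sum_{k \ne y}\mathcal{L}_{nn}(f^*(\boldsymbol{x}),k) = K-1$; since each of them is also bounded above by $1$ by assumption, every such term must in fact equal $1$. Consequently, $R^\eta_{\mathcal{L}_{nn}}(f^*) = \mathbb{E}_{\boldsymbol{x},y}[\eta_y]$, which is an exact, $f$-independent value that the comparison will be benchmarked against.

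Next I would introduce the reparametrization $a_k \defeq 1-\mathcal{L}_{nn}(f(\boldsymbol{x}),k) \in [0,1]$; by symmetry the vector $(a_1,\dots,a_K)$ sums to $1$ and is thus a probability distribution over labels. Substituting back, the gap becomes
\begin{equation*}
R^\eta_{\mathcal{L}_{nn}}(f) - R^\eta_{\mathcal{L}_{nn}}(f^*) = \mathbb{E}_{\boldsymbol{x},y}\Bigl[(1-\eta_y)(1-a_y) - \sum_{k\ne y}\eta_{yk}\,a_k\Bigr].
\end{equation*}
I would then bound $\sum_{k\ne y}\eta_{yk}a_k \le \bigl(\max_{k\ne y}\eta_{yk}\bigr)\sum_{k\ne y}a_k = \bigl(\max_{k\ne y}\eta_{yk}\bigr)(1-a_y)$ and invoke the hypothesis $\eta_{yk} < 1-\eta_y$ to conclude that the integrand is non-negative and strictly positive whenever $a_y < 1$. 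Therefore $f^*$ is a minimizer of $R^\eta_{\mathcal{L}_{nn}}$; moreover, any competing minimizer $f^*_\eta$ must force $a_y = 1$ almost surely, i.e.\ $\mathcal{L}_{nn}(f^*_\eta,y) = 0$ a.s., returning $R(f^*_\eta) = 0 = R(f^*)$, which is exactly noise tolerance in the sense of \Cref{def:tolerance}.

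The step I expect to be the main obstacle is the upgrade from the scalar condition $R(f^*) = 0$ to the pointwise claim that $\mathcal{L}_{nn}(f^*,y) = 0$ and $\mathcal{L}_{nn}(f^*,k) = 1$ for every $k \ne y$ almost surely. This is precisely where the symmetry of $\mathcal{L}_{nn}$ and the two-sided bound $\mathcal{L}_{nn} \in [0,1]$ must cooperate: without the sum-to-$(K-1)$ constraint the off-label losses cannot be pinned to $1$, and without the upper bound the tightening argument collapses, leaving only a weak inequality that is insufficient to dominate the perturbation term $\sum_{k\ne y}\eta_{yk}a_k$. Once this lemma is in hand, the remaining work is the elementary one-line bound on $\sum_{k\ne y}\eta_{yk}a_k$ that exploits $(a_k)$ being a probability vector together with the noise-rate hypothesis.
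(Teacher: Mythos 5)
Your proposal is correct, but it takes a genuinely different route from the paper. The paper proves this theorem by reduction: it invokes Lemma~2 of \cite{Ma-2020-APL} (asymmetric-noise tolerance of normalized losses under $R(f^*)=0$, boundedness by $\tfrac{1}{K-1}$, and $\eta_{yk}<1-\eta_y$), observes via \Cref{thm:symmetric} that the rescaled loss $\tfrac{1}{K-1}\mathcal{L}_{nn}$ satisfies the hypotheses of that lemma, and concludes immediately. You instead reprove the underlying Ghosh-style argument from scratch for $\mathcal{L}_{nn}$ itself: decompose the noisy risk, use $R(f^*)=0$ together with symmetry ($\sum_k \mathcal{L}_{nn}=K-1$) and the upper bound $\mathcal{L}_{nn}\le 1$ to pin $\mathcal{L}_{nn}(f^*,y)=0$ and $\mathcal{L}_{nn}(f^*,k)=1$ for $k\ne y$ almost surely, then reparametrize $a_k=1-\mathcal{L}_{nn}(f,k)$ as a probability vector and bound $\sum_{k\ne y}\eta_{yk}a_k\le(\max_{k\ne y}\eta_{yk})(1-a_y)$ so that $\eta_{yk}<1-\eta_y$ makes the risk gap nonnegative, with strictness forcing any noisy-risk minimizer to have zero clean risk. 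Both arguments are sound; the paper's is shorter and leans on prior results, while yours is self-contained, makes explicit exactly where the symmetry constant $K-1$ and the two-sided bound enter (the paper's write-up even contains a small slip, writing $0\le\tfrac{1}{K}\mathcal{L}_{nn}\le\tfrac{1}{K-1}$ where the scaling should uniformly be $\tfrac{1}{K-1}$), and additionally delivers the pointwise conclusion that any noisy-risk minimizer attains $\mathcal{L}_{nn}(f^*_\eta(\boldsymbol{x}),y)=0$ almost surely, which the citation-based proof leaves implicit.
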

\begin{proof}
According to Lemma 2 in \cite{Ma-2020-APL}, a normalized loss function $\mathcal{L}_{\text{norm}}$ is noise-tolerant under asymmetric (class-conditional) label noise if: (1) $R(f^*) = 0$, (2) $0 \leq \mathcal{L}_{\text{norm}}(f(x), k) \leq \frac{1}{K-1}, \forall k$ and (3) the noise rates satisfy $\eta_{yk}<1-\eta_y$.
Assuming that $0 \le \mathcal{L}_{nn}(f(\boldsymbol{x}),k) \le 1$, the scaled normalized negative loss function $\frac{1}{K-1} \mathcal{L}_{\text{nn}}$ satisfies the required boundedness conditions: $0 \leq \frac{1}{K} \mathcal{L}_{\text{nn}}(f(x), k) \leq \frac{1}{K-1}, \forall k$.
Furthermore, $\frac{1}{K-1} \mathcal{L}_{\text{nn}}$ is symmetric (from Theorem 1) and satisfies the general property ($\frac{1}{K-1} \mathcal{L}_\text{nn} \in [0,1]$) of a normalized loss function.
Given that $R(f^*) = 0$, substituting $\mathcal{L}_{\text{norm}} = \frac{1}{K-1} \mathcal{L}_{\text{nn}}$ into Lemma 2 of \cite{Ma-2020-APL} allows us to conclude that $\mathcal{L}_{\text{nn}}$ is noise-tolerant under asymmetric label noise when $\eta_{yk}<1-\eta_y$.
\end{proof}

\begin{theoremappendix}
\label{thmapp: anl-noise-robust}
$\forall \alpha, \beta$, the combination $\mathcal{L}_\text{ANL} = \alpha  \mathcal{L}_\text{norm} + \beta  \mathcal{L}_\text{nn}$ is noise tolerant under both symmetric and asymmetric label noise.
\end{theoremappendix}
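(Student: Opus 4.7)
The plan is to leverage the individual noise-tolerance results already established for $\mathcal{L}_{\text{norm}}$ (via Lemmas~1 and~2 of Ma et al., 2020) and for $\mathcal{L}_{\text{nn}}$ (\Cref{thmapp: sym-noise-robust,thmapp: asym-noise-robust}) and combine them through linearity of expectation. The central observation is that the risk functional is linear in the loss, so for every classifier $f$,
\begin{equation*}
R^\eta_{\mathcal{L}_\text{ANL}}(f) = \alpha\,R^\eta_{\mathcal{L}_\text{norm}}(f) + \beta\,R^\eta_{\mathcal{L}_\text{nn}}(f), \qquad R_{\mathcal{L}_\text{ANL}}(f) = \alpha\,R_{\mathcal{L}_\text{norm}}(f) + \beta\,R_{\mathcal{L}_\text{nn}}(f).
\end{equation*}

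For the symmetric-noise case with $\eta < (K-1)/K$, both $\mathcal{L}_{\text{norm}}$ (by the standard normalization argument) and $\mathcal{L}_{\text{nn}}$ (by \Cref{thmapp: symmetric}) are symmetric loss functions. The classical computation of Ghosh et al.\ (2017) then shows that for any symmetric $\mathcal{L}$ with $\sum_k \mathcal{L}(f(\boldsymbol{x}),k) = C$,
\begin{equation*}
R^\eta_\mathcal{L}(f) = \Bigl(1 - \tfrac{\eta K}{K-1}\Bigr) R_\mathcal{L}(f) + \tfrac{\eta C}{K-1}.
\end{equation*}
Applying this identity to each of the two components and taking the $\alpha,\beta$-weighted sum shows $R^\eta_{\mathcal{L}_\text{ANL}}(f)$ is an increasing affine function of $R_{\mathcal{L}_\text{ANL}}(f)$ (the slope $1-\eta K/(K-1)$ is strictly positive), so the two risks share the same global minimizer, proving noise tolerance.

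For the asymmetric-noise case I would use a simultaneous-minimizer argument. Under the assumptions carried over from \Cref{thmapp: asym-noise-robust} and the parallel Lemma~2 of Ma et al.\ (namely $R(f^*)=0$, the appropriate boundedness of each loss, and $\eta_{yk}<1-\eta_y$), individual noise tolerance implies that the clean-risk minimizer $f^*$ is \emph{also} a minimizer of $R^\eta_{\mathcal{L}_\text{norm}}$ and of $R^\eta_{\mathcal{L}_\text{nn}}$. Consequently, for any minimizer $\hat f$ of $R^\eta_{\mathcal{L}_\text{ANL}}$,
\begin{equation*}
\alpha R^\eta_{\mathcal{L}_\text{norm}}(\hat f) + \beta R^\eta_{\mathcal{L}_\text{nn}}(\hat f) \;\le\; \alpha R^\eta_{\mathcal{L}_\text{norm}}(f^*) + \beta R^\eta_{\mathcal{L}_\text{nn}}(f^*),
\end{equation*}
yet each summand on the left is individually bounded below by its counterpart on the right. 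With $\alpha,\beta>0$ this forces term-by-term equality, so $\hat f$ is a minimizer of both individual noisy risks; one more invocation of individual noise tolerance then yields $R(\hat f)=R(f^*)=0$, giving noise tolerance of $\mathcal{L}_\text{ANL}$.

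The main obstacle is really bookkeeping in the asymmetric case: we must verify that the boundedness and support hypotheses required by \Cref{thmapp: asym-noise-robust} and by Lemma~2 of Ma et al.\ (2020) are jointly satisfied, so that both $\mathcal{L}_{\text{norm}}$ and $\mathcal{L}_{\text{nn}}$ qualify as noise-tolerant under the \emph{same} noise regime; once that compatibility is granted, the linear-combination argument closes the proof cleanly without any further calculation.
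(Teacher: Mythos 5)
Your proposal is correct, but it takes a different route from the paper. The paper's proof is essentially a citation argument: it invokes Lemma~3 of Ma et al.\ (2020), which states that a weighted combination of a noise-tolerant active loss and a noise-tolerant passive loss remains noise tolerant, and then plugs in $\mathcal{L}_\text{norm}$ (noise tolerant by Lemmas~1--2 of that paper) and $\mathcal{L}_\text{nn}$ (noise tolerant by the paper's Theorems~2--3). You instead re-derive the combination property from scratch: in the symmetric case via linearity of the risk plus the Ghosh-style affine identity $R^\eta_\mathcal{L}(f)=\bigl(1-\tfrac{\eta K}{K-1}\bigr)R_\mathcal{L}(f)+\tfrac{\eta C}{K-1}$ applied to each summand (valid since $\sum_k\mathcal{L}_\text{norm}(f(\boldsymbol{x}),k)=1$ and $\sum_k\mathcal{L}_\text{nn}(f(\boldsymbol{x}),k)=K-1$), and in the asymmetric case via a simultaneous-minimizer argument forcing term-by-term equality. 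This buys self-containedness and makes the mechanism explicit, whereas the paper's version is shorter and delegates the work to the cited lemma. One point to flag in your asymmetric step: you use the fact that the clean minimizer $f^*$ also minimizes each component's \emph{noisy} risk, which is strictly more than the bare definition of noise tolerance; it does hold, but only as a byproduct of the proofs of the component results under $R(f^*)=0$, the boundedness condition, and $\eta_{yk}<1-\eta_y$, so you should state that you are importing those proofs (and their hypotheses, which the combined statement leaves implicit) rather than just their statements.
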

\begin{proof}
As stated in Lemma 3 of \cite{Ma-2020-APL}, the weighted combination $\alpha \mathcal{L}_\text{Active} + \beta \mathcal{L}_\text{Passive}$ of two noise-tolerant loss functions $\mathcal{L}_\text{Active}$ and $\mathcal{L}_\text{Passive}$ remains noise-tolerant under both symmetric and asymmetric label noise.
From Lemma 1 and Lemma 2 in \cite{Ma-2020-APL}, the normalized loss function $\mathcal{L}_\text{norm}$ is known to be noise-tolerant.
Moreover, Theorem 2 and Theorem 3 in our paper establish that the normalized negative loss function $\mathcal{L}_\text{nn}$ is also noise-tolerant.
By substituting $\mathcal{L}_{\text{Active}} = \mathcal{L}_{\text{norm}}$ and $\mathcal{L}_{\text{Passive}} = \mathcal{L}_{\text{nn}}$ into Lemma 3 of \cite{Ma-2020-APL}, we conclude that the combined loss $\mathcal{L}_\text{ANL} = \alpha  \mathcal{L}_\text{norm} + \beta  \mathcal{L}_\text{nn}$ is noise-tolerant under both symmetric and asymmetric label noise.
\end{proof}

\section{Gradient analysis}
\label{appendix: gradient}
\setcounter{subsection}{0}

\subsection{Gradient of MAE}
The complete derivation of the Mean Absolute Error (MAE) with respect to the classifier's output probabilities is as follows:

In the case of $j \ne y$:
\begin{align}
    \begin{aligned}
        \frac{\partial \mathcal{L}_\text{MAE}}{\partial \boldsymbol{p}(j|\boldsymbol{x})}
        & = \frac{\partial}{\partial \boldsymbol{p}(j|\boldsymbol{x})} \sum_{k=1}^K |\boldsymbol{p}(k|\boldsymbol{x}) - \boldsymbol{q}(k|\boldsymbol{x})| \\
        & = \frac{1}{\partial \boldsymbol{p}(j|\boldsymbol{x})} \big( (1-\boldsymbol{p}(y|\boldsymbol{x})) +\sum_{k \ne y} \boldsymbol{p}(k|\boldsymbol{x}) \big) \\
        & = 1.
    \end{aligned}
\end{align}

In the case of $j = y$:
\begin{align}
    \begin{aligned}
        \frac{\partial \mathcal{L}_\text{MAE}}{\partial \boldsymbol{p}(j|\boldsymbol{x})}
        & = \frac{\partial}{\partial \boldsymbol{p}(j|\boldsymbol{x})} \sum_{k=1}^K |\boldsymbol{p}(k|\boldsymbol{x}) - \boldsymbol{q}(k|\boldsymbol{x})| \\
        & = \frac{1}{\partial \boldsymbol{p}(j|\boldsymbol{x})} \big( (1-\boldsymbol{p}(y|\boldsymbol{x})) +\sum_{k \ne y} \boldsymbol{p}(k|\boldsymbol{x}) \big) \\
        & = -1.
    \end{aligned}
\end{align}

\subsection{Gradient of NNCE}

The complete derivation of our proposed Normalized Negative Cross Entropy (NNCE) with respect to the classifier's output probabilities is as follows:

In the case of $j \ne y$:
\begin{align}
    \begin{aligned}
        \frac{\partial \mathcal{L}_\text{NNCE}}{\partial \boldsymbol{p}(j|\boldsymbol{x})}
        & = \frac{\partial}{\partial \boldsymbol{p}(j|\boldsymbol{x})} \Big( 1 - \frac{ A - (- \log{\boldsymbol{p}(y|\boldsymbol{x})}) }{ \sum^K_{k=1} A - (- \log{\boldsymbol{p}(k|\boldsymbol{x})}) } \Big) \\
        & = - \frac{\partial}{\partial \boldsymbol{p}(j|\boldsymbol{x})} \Big( \frac{A+\log \boldsymbol{p}(y|\boldsymbol{x})}{ \sum^K_{k=1} A + \log{\boldsymbol{p}(k|\boldsymbol{x})} } \Big) \\
        & = - \frac{0-(A+\log \boldsymbol{p}(y|\boldsymbol{x}))\frac{1}{\boldsymbol{p}(j|\boldsymbol{x})}}{\big(\sum^K_{k=1} A + \log{\boldsymbol{p}(k|\boldsymbol{x})}\big)^2} \\
        & = \frac{1}{\boldsymbol{p}(j|\boldsymbol{x})} \cdot \frac{A+\log\boldsymbol{p}(y|\boldsymbol{x})}{\big(\sum^K_{k=1} A + \log{\boldsymbol{p}(k|\boldsymbol{x})}\big)^2}.
    \end{aligned}
\end{align}

In the case of $j = y$:
\begin{align}
    \begin{aligned}
        \frac{\partial \mathcal{L}_\text{NNCE}}{\partial \boldsymbol{p}(j|\boldsymbol{x})}
        & = \frac{\partial}{\partial \boldsymbol{p}(j|\boldsymbol{x})} \Big( 1 - \frac{ A - (- \log{\boldsymbol{p}(y|\boldsymbol{x})}) }{ \sum^K_{k=1} A - (- \log{\boldsymbol{p}(k|\boldsymbol{x})}) } \Big) \\
        & = - \frac{\partial}{\partial \boldsymbol{p}(j|\boldsymbol{x})} \Big( \frac{A+\log \boldsymbol{p}(y|\boldsymbol{x})}{ \sum^K_{k=1} A + \log{\boldsymbol{p}(k|\boldsymbol{x})} } \Big) \\
        & = - \frac{\frac{1}{\boldsymbol{p}(y|\boldsymbol{x})}\big(\sum^K_{k=1} A + \log{\boldsymbol{p}(k|\boldsymbol{x})}\big)-(A+\log \boldsymbol{p}(y|\boldsymbol{x}))\frac{1}{\boldsymbol{p}(y|\boldsymbol{x})}}{\big(\sum^K_{k=1} A + \log{\boldsymbol{p}(k|\boldsymbol{x})}\big)^2} \\
        & = - \frac{1}{\boldsymbol{p}(y|\boldsymbol{x})} \cdot \frac{\sum_{k \ne y} A + \log{\boldsymbol{p}(k|\boldsymbol{x})}}{\big(\sum^K_{k=1} A + \log{\boldsymbol{p}(k|\boldsymbol{x})}\big)^2}.
    \end{aligned}
\end{align}

\subsection{Properties of NNCE}
\setcounter{theoremappendix}{4}

\begin{theoremappendix}
    Given the classifier's output probability $\boldsymbol{p}(\cdot|\boldsymbol{x})$ for sample $\boldsymbol{x}$ and normalized negative cross entropy $\mathcal{L}_\text{NNCE}$. If $\boldsymbol{p}(j_1|\boldsymbol{x})$ $<$ $\boldsymbol{p}(j_2|\boldsymbol{x})$, $j_1 \ne j_2 \ne y$, then \[\frac{\partial \mathcal{L}_\text{NNCE}}{\partial \boldsymbol{p}(j_1|\boldsymbol{x})} > \frac{\partial \mathcal{L}_\text{NNCE}}{\partial \boldsymbol{p}(j_2|\boldsymbol{x})}.\]
\end{theoremappendix}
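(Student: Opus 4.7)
The plan is to invoke the closed-form gradient derived just above in Appendix C. For any index $j \ne y$, that derivation yields
$$\frac{\partial \mathcal{L}_\text{NNCE}}{\partial \boldsymbol{p}(j|\boldsymbol{x})} = \frac{1}{\boldsymbol{p}(j|\boldsymbol{x})} \cdot \frac{A+\log\boldsymbol{p}(y|\boldsymbol{x})}{\left(\sum_{k=1}^K A + \log{\boldsymbol{p}(k|\boldsymbol{x})}\right)^2}.$$
Crucially, the second factor depends only on $\boldsymbol{p}(\cdot|\boldsymbol{x})$ and on the true label $y$; it is independent of the differentiation index $j$. Since the hypothesis requires $j_1 \ne y$ and $j_2 \ne y$, I can apply this formula to both $j_1$ and $j_2$ and treat this second factor as a common prefactor.

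First I would isolate that prefactor; after cancelling it, the target inequality reduces to
$$\frac{1}{\boldsymbol{p}(j_1|\boldsymbol{x})} > \frac{1}{\boldsymbol{p}(j_2|\boldsymbol{x})},$$
which is immediate from $0 < \boldsymbol{p}(j_1|\boldsymbol{x}) < \boldsymbol{p}(j_2|\boldsymbol{x})$, where positivity is guaranteed by the clipping convention $\boldsymbol{p}(\cdot|\boldsymbol{x}) \ge p_\text{min} = 10^{-7}$ introduced earlier. To obtain the strict direction of the overall inequality, one further needs the common prefactor to be strictly positive.

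The denominator of the common prefactor is a non-zero square, so only the numerator $A + \log \boldsymbol{p}(y|\boldsymbol{x})$ requires attention. By the definition of $A$ as the maximum value of $\mathcal{L}$, we have $A - \mathcal{L}(f(\boldsymbol{x}),k) \ge 0$ for every $k$, so in particular $A + \log \boldsymbol{p}(y|\boldsymbol{x}) \ge 0$, with strict positivity whenever $\boldsymbol{p}(y|\boldsymbol{x})$ lies strictly above the clipping floor $p_\text{min}$. Under this mild non-degeneracy, the strict gradient inequality follows.

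I expect no substantive analytic obstacle: the theorem is essentially a monotonicity remark about $1/\boldsymbol{p}(j|\boldsymbol{x})$ given the prior gradient computation. The only point that merits careful bookkeeping is verifying that the prefactor is both well-defined and strictly positive so that the direction of inequality transfers intact from $\boldsymbol{p}(j_1|\boldsymbol{x}) < \boldsymbol{p}(j_2|\boldsymbol{x})$ to the claimed gradient comparison.
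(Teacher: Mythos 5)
Your proposal is correct and follows essentially the same route as the paper's proof: both apply the closed-form gradient for $j \ne y$, observe that the factor $\frac{A+\log\boldsymbol{p}(y|\boldsymbol{x})}{\left(\sum_{k=1}^K A + \log\boldsymbol{p}(k|\boldsymbol{x})\right)^2}$ is common to $j_1$ and $j_2$, and reduce the claim to $\frac{1}{\boldsymbol{p}(j_1|\boldsymbol{x})} > \frac{1}{\boldsymbol{p}(j_2|\boldsymbol{x})}$. Your explicit check that the common prefactor is strictly positive is a point the paper leaves implicit, but it does not change the argument.
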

\begin{proof}
As per the assumption that \(\boldsymbol{p}(j_1|x) < \boldsymbol{p}(j_2|x)\), we have:
\begin{align}
    \frac{1}{\boldsymbol{p}(j_1|x)} > \frac{1}{\boldsymbol{p}(j_2|x)}.
\end{align}
Therefore, we have:
\begin{align}
    \begin{aligned}
    \frac{\partial \mathcal{L}_\text{NNCE}}{\partial \boldsymbol{p}(j_1|x)} & = \frac{1}{\boldsymbol{p}(j_1|x)} \cdot \frac{A + \log \boldsymbol{p}(y|x)}{\left( \sum_{k=1}^K A + \log \boldsymbol{p}(k|x) \right)^2} \\
    & > \frac{1}{\boldsymbol{p}(j_2|x)} \cdot \frac{A + \log \boldsymbol{p}(y|x)}{\left( \sum_{k=1}^K A + \log \boldsymbol{p}(k|x) \right)^2} \\
    & = \frac{\partial \mathcal{L}_\text{NNCE}}{\partial \boldsymbol{p}(j_2|x)}.
    \end{aligned}
\end{align}
This completes the proof.
\end{proof}

\begin{theoremappendix}
    Given the classifier's output probabilities $\boldsymbol{p}(\cdot|\boldsymbol{x}_1)$ and $\boldsymbol{p}(\cdot |\boldsymbol{x}_2)$ of sample $\boldsymbol{x}_1$ and $\boldsymbol{x}_2$, where $\boldsymbol{p}(y|\boldsymbol{x}_1)$ $\ge$ $\boldsymbol{p}(k|\boldsymbol{x}_1)$, $\boldsymbol{p}(y|\boldsymbol{x}_2)$ $\ge$ $\boldsymbol{p}(k|\boldsymbol{x}_2)$, $\forall k \in \{1, \cdots, K \}$, $\boldsymbol{p}(j|\boldsymbol{x}_1)$ $=$ $\boldsymbol{p}(j|\boldsymbol{x}_2)$, and normalized negative cross entropy $\mathcal{L}_\text{NNCE}$.
    If $\boldsymbol{p}(y|\boldsymbol{x}_1)$ $>$ $\boldsymbol{p}(y|\boldsymbol{x}_2)$, $j \ne y$, 
    and $\boldsymbol{p}(k|\boldsymbol{x}_1) \le \boldsymbol{p}(k|\boldsymbol{x}_2)$, $\forall k $ $\in$ $\{1,$ $\cdots,$ $K\}$ $\setminus \{j,y\}$, 
    then \[\frac{\partial \mathcal{L}_\text{NNCE}}{\partial \boldsymbol{p}(j|\boldsymbol{x}_1)}>\frac{\partial \mathcal{L}_\text{NNCE}}{\partial \boldsymbol{p}(j|\boldsymbol{x}_2)}.\]
\end{theoremappendix}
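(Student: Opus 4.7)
The plan is to substitute the closed-form gradient expression for $j \ne y$ derived earlier in the appendix, namely
\[
\frac{\partial \mathcal{L}_\text{NNCE}}{\partial \boldsymbol{p}(j|\boldsymbol{x})} = \frac{1}{\boldsymbol{p}(j|\boldsymbol{x})} \cdot \frac{A + \log \boldsymbol{p}(y|\boldsymbol{x})}{S(\boldsymbol{x})^2}, \quad S(\boldsymbol{x}) \defeq \sum_{k=1}^K \bigl(A + \log \boldsymbol{p}(k|\boldsymbol{x})\bigr),
\]
into both sides. Since $\boldsymbol{p}(j|\boldsymbol{x}_1) = \boldsymbol{p}(j|\boldsymbol{x}_2)$, the common prefactor $1/\boldsymbol{p}(j|\boldsymbol{x}_i)$ cancels and the claim reduces to proving
\[
\frac{A + \log \boldsymbol{p}(y|\boldsymbol{x}_1)}{S(\boldsymbol{x}_1)^2} \;>\; \frac{A + \log \boldsymbol{p}(y|\boldsymbol{x}_2)}{S(\boldsymbol{x}_2)^2}.
\]
Each summand $A + \log \boldsymbol{p}(k|\boldsymbol{x}_i)$ is non-negative because $A = -\log p_{\text{min}}$ upper bounds the per-class loss, so $S(\boldsymbol{x}_i) > 0$ outside the degenerate configuration where every probability attains $p_{\text{min}}$.

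The numerator comparison is immediate: $\boldsymbol{p}(y|\boldsymbol{x}_1) > \boldsymbol{p}(y|\boldsymbol{x}_2)$ and the monotonicity of $\log$ give $A + \log \boldsymbol{p}(y|\boldsymbol{x}_1) > A + \log \boldsymbol{p}(y|\boldsymbol{x}_2) \ge 0$. The crux is therefore to establish $S(\boldsymbol{x}_1) \le S(\boldsymbol{x}_2)$, equivalently $\sum_k \log \boldsymbol{p}(k|\boldsymbol{x}_1) \le \sum_k \log \boldsymbol{p}(k|\boldsymbol{x}_2)$. This is the main obstacle of the proof.

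For the denominator step I will use a mass-transport argument that leverages the concavity of $\log$. The simplex constraint together with $\boldsymbol{p}(j|\boldsymbol{x}_1) = \boldsymbol{p}(j|\boldsymbol{x}_2)$ gives $\boldsymbol{p}(y|\boldsymbol{x}_1) - \boldsymbol{p}(y|\boldsymbol{x}_2) = \sum_{k \ne j,y}\bigl[\boldsymbol{p}(k|\boldsymbol{x}_2) - \boldsymbol{p}(k|\boldsymbol{x}_1)\bigr] \ge 0$, so moving from $\boldsymbol{x}_2$ to $\boldsymbol{x}_1$ transports positive probability mass from non-$\{j,y\}$ classes into the argmax class $y$. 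Interpolating linearly along $\boldsymbol{p}_t \defeq (1-t)\boldsymbol{p}(\cdot|\boldsymbol{x}_2) + t\,\boldsymbol{p}(\cdot|\boldsymbol{x}_1)$ and writing $\Delta_k \defeq \boldsymbol{p}(k|\boldsymbol{x}_1) - \boldsymbol{p}(k|\boldsymbol{x}_2)$ (so $\Delta_j = 0$, $\Delta_k \le 0$ for $k \ne j,y$, and $\Delta_y = -\sum_{k \ne j,y}\Delta_k \ge 0$), a direct calculation yields
\[
\frac{d}{dt}\sum_{k=1}^K \log \boldsymbol{p}_t(k) \;=\; \sum_{k \ne j,y} \Delta_k \Bigl[\tfrac{1}{\boldsymbol{p}_t(k)} - \tfrac{1}{\boldsymbol{p}_t(y)}\Bigr] \;\le\; 0,
\]
since each $\Delta_k$ is non-positive while the bracket is non-negative: $\boldsymbol{p}_t(y)$ remains the argmax along the entire path (it is non-decreasing in $t$ while each $\boldsymbol{p}_t(k)$ with $k \ne j, y$ is non-increasing, so $\boldsymbol{p}_t(y) \ge \boldsymbol{p}_t(k)$ is preserved). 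Integrating in $t$ from $0$ to $1$ delivers $S(\boldsymbol{x}_1) \le S(\boldsymbol{x}_2)$.

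Combining the strictly larger positive numerator with the weakly smaller positive denominator squared closes the proof. The hard part is precisely the Schur-concavity step above: without the maximality hypothesis $\boldsymbol{p}(y|\boldsymbol{x}_i) = \max_k \boldsymbol{p}(k|\boldsymbol{x}_i)$ the sign of the integrand is indeterminate, so both the argmax assumption and the specific direction of mass flow (from smaller-probability classes into $y$) are essential to make the one-line concavity calculation go through.
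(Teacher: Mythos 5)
Your proof is correct, and while its outer reduction coincides with the paper's (substitute the closed-form gradient for $j \ne y$, cancel the common factor $1/\boldsymbol{p}(j|\boldsymbol{x}_1)=1/\boldsymbol{p}(j|\boldsymbol{x}_2)$, use $A+\log\boldsymbol{p}(y|\boldsymbol{x}_1) > A+\log\boldsymbol{p}(y|\boldsymbol{x}_2)\ge 0$, and reduce everything to $\sum_k \log\boldsymbol{p}(k|\boldsymbol{x}_1)\le\sum_k\log\boldsymbol{p}(k|\boldsymbol{x}_2)$), your treatment of that central log-sum inequality takes a genuinely different route. The paper handles it as a discrete mass-transfer problem: it writes $\boldsymbol{p}(\cdot|\boldsymbol{x}_1)$ as $\boldsymbol{p}(\cdot|\boldsymbol{x}_2)$ with mass $D=\boldsymbol{p}(y|\boldsymbol{x}_1)-\boldsymbol{p}(y|\boldsymbol{x}_2)$ moved out of the classes $k\notin\{j,y\}$ into class $y$, lower-bounds the difference $f_1-f_2$ by replacing each $\boldsymbol{p}_k$ with $\boldsymbol{p}_y$ (the $\tfrac{a}{b}\ge\tfrac{a+c}{b+c}$ step, which is where the argmax hypothesis enters there), determines the worst-case allocation $d_k=D/(K-2)$ via a Lagrange-multiplier computation, and concludes by comparing two difference quotients of $\log$ around $\boldsymbol{p}_y$. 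You instead integrate $\frac{d}{dt}\sum_k\log\boldsymbol{p}_t(k)=\sum_{k\ne j,y}\Delta_k\bigl[\boldsymbol{p}_t(k)^{-1}-\boldsymbol{p}_t(y)^{-1}\bigr]\le 0$ along the linear path, the sign following in one line because $\Delta_k\le 0$ and $\boldsymbol{p}_t(y)\ge\boldsymbol{p}_t(k)$ is preserved as a convex combination of the two argmax assumptions (your monotonicity argument for this preservation also works). Your route is shorter, avoids the constrained optimization and its bookkeeping entirely, and isolates exactly where the maximality of $\boldsymbol{p}(y|\cdot)$ is needed; the paper's worst-case computation, in exchange, yields a strict and in principle quantitative gap $f_1>f_2$ in the log-sum itself, whereas you obtain only the weak bound $S(\boldsymbol{x}_1)\le S(\boldsymbol{x}_2)$ --- which, as you correctly note, suffices since the strictness of the final gradient comparison is carried by the numerators together with the equal, positive prefactor. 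One cosmetic remark: the positivity of $S(\boldsymbol{x}_i)$ needs no ``degenerate configuration'' caveat, because the probabilities sum to one, so some class has $\boldsymbol{p}(k|\boldsymbol{x}_i)\ge 1/K > p_{\text{min}}$ and hence $S(\boldsymbol{x}_i)>0$ always (the paper implicitly relies on the same fact when inverting the squared sums).
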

\begin{proof}
    We first consider two functions $f_1$ and $f_2$. The function $f_1$ is defined as follows:
    \begin{equation}
        f_1(\boldsymbol{p}) = \sum_{k=1}^K \log \boldsymbol{p}_k,
    \end{equation}
    where the input $\boldsymbol{p}$ is a discrete probability distribution, $\sum_{k=1}^K \boldsymbol{p}_k=1$ and $0 \le\boldsymbol{p}_k \le 1$, $\forall k$ $\in$ $\{1,$ $\cdots,$ $K\}$.
    Given a discrete probability distribution $\boldsymbol{p}$ and a real number $D$ as input, the function $f_2$ is defined as follows:
    \begin{equation}
        f_2(\boldsymbol{p}, D) = \log (\boldsymbol{p}_y+D) + \log \boldsymbol{p}_j + \sum_{k \ne j \ne y} \log (\boldsymbol{p}_k - d_k),
    \end{equation}
    where $\boldsymbol{p}_y \ge \boldsymbol{p}_k$, $\forall k \in \{1, \cdots, K\}$, $0< D \le 1 - \boldsymbol{p}_y$.
    And $\{d_k\},$ $k \in \{1, \cdots, K\}$ $\setminus$ $\{j, y\}$ is a set of random variables which satisfy following conditions: $0 \le \boldsymbol{p}_k - d_k \le \boldsymbol{p}_k$ and $\sum_{k \ne j \ne y} d_k = D$.

    Next, we consider whether $f_1(\boldsymbol{p}) - f_2(\boldsymbol{p}, D) > 0$.
    Given $\boldsymbol{p}$ and $D$, we have,
    \begin{align}
        \label{eq39}
        \begin{aligned}
            f_1(\boldsymbol{p}) - f_2(\boldsymbol{p}, D)
            & = \Big( \sum_{k=1}^K \log \boldsymbol{p}_k \Big) - \Big( \log (\boldsymbol{p}_y+D) + \log \boldsymbol{p}_j + \sum_{k \ne j \ne y} \log (\boldsymbol{p}_k - d_k) \Big) \\
            & = \log \boldsymbol{p}_y - \log (\boldsymbol{p}_y + D) + \sum_{k \ne j \ne y} \log \boldsymbol{p}_k - \log (\boldsymbol{p}_k - d_k) \\
            & = \log \frac{\boldsymbol{p_y}}{\boldsymbol{p}_y + D} + \sum_{k \ne j \ne y} \log \frac{\boldsymbol{p}_k}{\boldsymbol{p}_k - d_k} \\
            & \ge \log \frac{\boldsymbol{p_y}}{\boldsymbol{p}_y + D} + \sum_{k \ne j \ne y} \log \frac{\boldsymbol{p}_y}{\boldsymbol{p}_y - d_k} \\
            & = \log \boldsymbol{p}_y - \log (\boldsymbol{p}_y + D) + \sum_{k \ne j \ne y} \log \boldsymbol{p}_y - \log (\boldsymbol{p}_y - d_k) \\
            & = \Big( \log \boldsymbol{p}_y + \sum_{k \ne j \ne y} \log \boldsymbol{p}_y \Big) - \Big( \log (\boldsymbol{p}_y +D) + \sum_{k \ne j \ne y} \log (\boldsymbol{p}_y - d_k) \Big) \\
            & \ge \Big(\log \boldsymbol{p}_y + \sum_{k \ne j \ne y} \log \boldsymbol{p}_y \Big) - \sup_{\{d_k\}, k \ne j \ne y} \Big( \log (\boldsymbol{p}_y +D) + \sum_{k \ne j \ne y} \log (\boldsymbol{p}_y - d_k) \Big).
        \end{aligned}
    \end{align}
    The first inequality holds because for a fraction $\frac{a}{b} \ge 1$, we always have $\frac{a}{b}$ $\ge$ $\frac{a+c}{b+c}$, where $c \ge 0$.
    
    To get the maximum value of $\log (\boldsymbol{p}_y +D) + \sum_{k \ne j \ne y} \log (\boldsymbol{p}_y - d_k)$, we must solve the following minimization problem subject to constraints:
    \begin{equation}
    \begin{aligned}
        \min_{\{d_k\}, k \neq j \neq y} &\quad - \Big( \log (\boldsymbol{p}_y +D) + \sum_{k \ne j \ne y} \log (\boldsymbol{p}_y - d_k) \Big), \\
        \text{s.t.} &\quad \sum_{k \neq j \neq y} d_k = D.
    \end{aligned}
    \end{equation}
    We can define the Lagrange function $L$ as follows:
    \begin{equation}
        L(d_1, \cdots, d_K, \lambda) = - \Big( \log (\boldsymbol{p}_y +D) + \sum_{k \ne j \ne y} \log (\boldsymbol{p}_y + d_k) \Big) + \lambda \cdot \Big( \sum_{k \ne j \ne y} d_k - D \Big).
    \end{equation}
    Now we can calculate the gradients:
    \begin{equation}
    \begin{cases}
        \frac{\partial L}{\partial d_k} = - \frac{1}{\boldsymbol{p}_y + d_k} + \lambda, & k \ne j \ne y, \\
        \frac{\partial L}{\partial \lambda} = \sum_{k \neq j \neq y} d_k - D.
    \end{cases}
    \end{equation}
    Let $\frac{\partial L}{\partial d_k} = 0$ and $\frac{\partial L}{\partial \lambda} = 0$, we have:
    \begin{equation}
    \begin{cases}
        - \frac{1}{\boldsymbol{p}_y + d_k} + \lambda = 0, & k \ne j \ne y , \\
        \sum_{k \ne j \ne y} d_k - D = 0,
    \end{cases}
    \end{equation}
    and thus, we have:
    \begin{equation}
        d_k = \frac{D}{K-2}, \quad k \ne j \ne y.
    \end{equation}
    So, the minimization value is:
    \[
        - \Big( \log (\boldsymbol{p}_y +D) + \sum_{k \ne j \ne y} \log (\boldsymbol{p}_y - \frac{D}{K-2}) \Big).
    \]
    
    Now, combining with Eq.~\eqref{eq39}, we have:
    \begin{align}
        \begin{aligned}
            f_1(\boldsymbol{p}) - f_2(\boldsymbol{p}, D)
            & \ge \log \boldsymbol{p}_y + \sum_{k \ne j \ne y} \log \boldsymbol{p}_y - \sup_{\{d_k\}, k \ne j \ne y} \Big( \log (\boldsymbol{p}_y +D) + \sum_{k \ne j \ne y} \log (\boldsymbol{p}_y - d_k) \Big) \\
            & = \log \boldsymbol{p}_y + \sum_{k \ne j \ne y} \log \boldsymbol{p}_y - \Big( \log (\boldsymbol{p}_y +D) + \sum_{k \ne j \ne y} \log (\boldsymbol{p}_y - \frac{D}{K-2}) \Big) \\
            & = \log \boldsymbol{p}_y - \log (\boldsymbol{p}_y + D) + (K-2) \Big( \log \boldsymbol{p}_y - \log (\boldsymbol{p}_y - \frac{D}{K-2}) \Big) \\
            & = (K-2) \Big( \log \boldsymbol{p}_y - \log (\boldsymbol{p}_y - \frac{D}{K-2}) \Big) - \Big( \log (\boldsymbol{p}_y + D) - \log \boldsymbol{p}_y \Big) \\
            & = D \cdot \Big( \frac{\log \boldsymbol{p}_y - \log (\boldsymbol{p}_y - \frac{D}{K-2})}{\frac{D}{K-2}} - \frac{\log (\boldsymbol{p}_y + D) - \log \boldsymbol{p}_y}{D} \Big).
        \end{aligned}
    \end{align}
    Recall the nature of the difference, we have $\frac{\log \boldsymbol{p}_y - \log (\boldsymbol{p}_y - \frac{D}{K-2})}{\frac{D}{K-2}} = \frac{d \log x}{dx} \Big|_{x = \boldsymbol{p}_y - \alpha} = \frac{1}{\boldsymbol{p}_y - \alpha}$, where $0 \le \alpha \le \frac{D}{K-2}$, and $\frac{\log (\boldsymbol{p}_y + D) - \log \boldsymbol{p}_y}{D} = \frac{d \log x}{dx} \Big|_{x = \boldsymbol{p}_y + \beta} = \frac{1}{\boldsymbol{p}_y + \beta}$, where $0 \le \beta \le D$.
    Therefore, it follows:
    \begin{equation}
        \frac{\log \boldsymbol{p}_y - \log (\boldsymbol{p}_y - \frac{D}{K-2})}{\frac{D}{K-2}} = \frac{1}{\boldsymbol{p}_y - \alpha} > \frac{1}{\boldsymbol{p}_y} > \frac{1}{\boldsymbol{p}_y + \beta} = \frac{\log (\boldsymbol{p}_y + D) - \log \boldsymbol{p}_y}{D},
    \end{equation}
    which leads to the inequality:
    \begin{align}
        \frac{\log \boldsymbol{p}_y - \log (\boldsymbol{p}_y - \frac{D}{K-2})}{\frac{D}{K-2}} & > \frac{\log (\boldsymbol{p}_y + D) - \log \boldsymbol{p}_y}{D}.
    \end{align}
    Thus, we have:
    \begin{equation}
        f_1(\boldsymbol{p}) > f_2(\boldsymbol{p}, D).
        \label{eq49}
    \end{equation}
    
    Now, considering $\boldsymbol{p}_k = \boldsymbol{p}(k|\boldsymbol{x}_2), \forall k \in \{1, \cdots, K\}$, $D = \boldsymbol{p}(y|\boldsymbol{x}_1) - \boldsymbol{p}(y|\boldsymbol{x}_2)$, and $d_k$ $=$ $\boldsymbol{p}(k|\boldsymbol{x}_2) - \boldsymbol{p}(k|\boldsymbol{x}_1)$, $k \ne j \ne y$, for the derivatives of $f_1(\boldsymbol{p})$ and $f_2(\boldsymbol{p}, D)$, we have:
    \begin{equation}
        f_1(\boldsymbol{p}) = \sum_{k=1}^K \log \boldsymbol{p}_k = \sum_{k=1}^K \log \boldsymbol{p}(k|\boldsymbol{x}_2),
    \end{equation}
    and:
    \begin{equation}
    \begin{aligned}
        f_2(\boldsymbol{p}, D) &= \log (\boldsymbol{p}_y+D) + \log \boldsymbol{p}_j + \sum_{k \ne j \ne y} \log (\boldsymbol{p}_k - d_k) = \sum_{k=1}^K \log \boldsymbol{p}(k|\boldsymbol{x}_1)
    \end{aligned}
    \end{equation}
    Therefore, combining with Eq.~\eqref{eq49}, we obtain:
    \begin{equation}
        \frac{1}{\big( \sum_{k=1}^K A + \log \boldsymbol{p}(k|\boldsymbol{x}_1) \big)^2} > \frac{1}{\big( \sum_{k=1}^K A + \log \boldsymbol{p}(k|\boldsymbol{x}_2) \big)^2} \\
    \end{equation}
    Under the assumption that $\boldsymbol{p}(j|\boldsymbol{x}_1) = \boldsymbol{p}(j|\boldsymbol{x}_2)$, and $\log \boldsymbol{p}(y|\boldsymbol{x}_1) > \log \boldsymbol{p}(y|\boldsymbol{x}_2)$, \ie, $\boldsymbol{p}(y|\boldsymbol{x}_1) > \boldsymbol{p}(y|\boldsymbol{x}_2)$, we have:
    \begin{align}
        \frac{1}{\boldsymbol{p}(j|\boldsymbol{x}_1)} \cdot \frac{A+\log\boldsymbol{p}(y|\boldsymbol{x}_1)}{\big(\sum^K_{k=1} A + \log{\boldsymbol{p}(k|\boldsymbol{x}_1)}\big)^2} > \frac{1}{\boldsymbol{p}(j|\boldsymbol{x}_2)} \cdot \frac{A+\log\boldsymbol{p}(y|\boldsymbol{x}_2)}{\big(\sum^K_{k=1} A + \log{\boldsymbol{p}(k|\boldsymbol{x}_2)}\big)^2}
    \end{align}
    Thus, we conclude:
    \begin{align}
        \frac{\partial \mathcal{L}_\text{NNCE}}{\partial \boldsymbol{p}(j|\boldsymbol{x}_1)} > \frac{\partial \mathcal{L}_\text{NNCE}}{\partial \boldsymbol{p}(j|\boldsymbol{x}_2)}.
    \end{align}
    This completes the proof.
\end{proof}

\newpage
\section{Experiments}
\setcounter{subsection}{0}

\subsection{Empirical Understandings}
\label{appendix: empirical understandings}

\textbf{Active and passive parts separately}

In Table~\ref{table: active passive separately}, we show the results of the active and passive parts separately.
We train NCE and NNCE separately on CIFAR-10 with different symmetric noise rates while keeping the same parameters as ANL-CE.
Specifically, for $\alpha \mathcal{L}_{\text{NCE}}$, we set $\alpha$ to $5.0$ and for $\beta  \mathcal{L}_{\text{NNCE}}$, we set $\beta$ to $5.0$, while $\delta$ is set to $5 \times 10^{-5}$ for both.
As shown in the results, the test set accuracies of NNCE are very close to that of ANL-CE, except for the case with a 0.8 noise rate.
This suggests that NNCE performs well on its own at low noise rates.
However, at very high noise rates, a combination of active losses is needed to achieve better performance. 

\begin{table}[htbp]
\caption{Test accuracies (\%) of different methods on CIFAR-10 datasets with clean and symmetric
($\eta \in \{0.2, 0.4, 0.6, 0.8\}$) label noise. The top-1 best results are in \textbf{bold}.}
\label{table: active passive separately}
\begin{center}
\begin{tabular}{c|c|cccc}
    \toprule
    Methods & Clean & $\eta=0.2$ & $\eta=0.4$ & $\eta=0.6$ & $\eta=0.8$ \\
    \midrule
    NCE & 88.68 & 81.65 & 74.80 & 63.14 & 37.52 \\
    NNCE & 91.51 & 90.09 & 86.91 & 82.16 & 57.06 \\
    ANL-CE & 91.66 & \textbf{90.02} & \textbf{87.28} & \textbf{81.12} & \textbf{61.27} \\
    \bottomrule
\end{tabular}
\end{center}
\end{table}

\textbf{The choice of regularization method}

We also find that the commonly used L2 regularization may struggle to mitigate the overfitting problem.
To address this, we chose to try using other different regularization methods.
We consider 2 other regularization methods: L1 and Jacobian \cite{sokolic2017robust, hoffman2019robust}.
To compare the performance of these methods, we apply them to ANL-CE for training on CIFAR-10 under 0.8 symmetric noise.
For simplicity and fair comparison, we use $\delta$ as the coefficient of the regularization term and consider it as an additional parameter of ANL.
We keep $\alpha$ and $\beta$ the same as in Section~\ref{experiment: benchmarks}, tune $\delta$ for all three methods by following the parameter tuning setting in Appendix~\ref{appendix: benchmark}.
We also train networks without using any regularization method.
The results are reported in Figure~\ref{fig: reg compare}.
As can be observed, among the three regularization methods, only L1 can somewhat better mitigate the overfitting problem.
If not otherwise specified, all ANLs in this paper use L1 regularization. 

\begin{figure*}[htbp]
\centering
\captionsetup[subfigure]{labelformat=parens, labelsep=space, font=scriptsize}
\subfloat[w/o regularizar]{\includegraphics[width=0.25\textwidth]{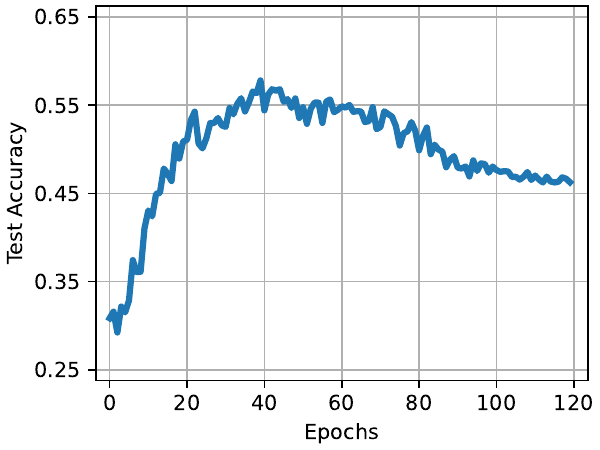}}
\hspace{0.04\textwidth}
\subfloat[L2]{\includegraphics[width=0.25\textwidth]{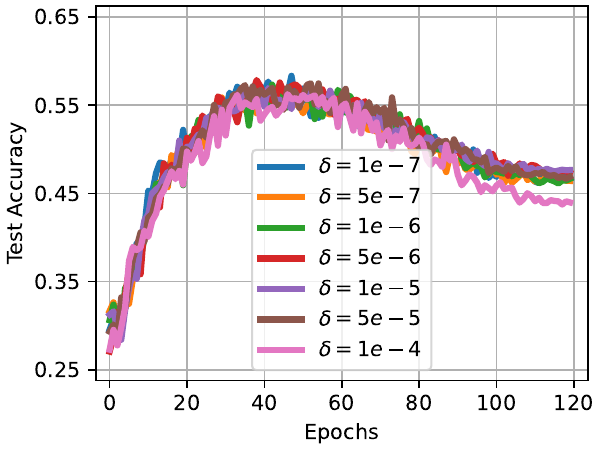}}

\subfloat[Jacobian]{\includegraphics[width=0.25\textwidth]{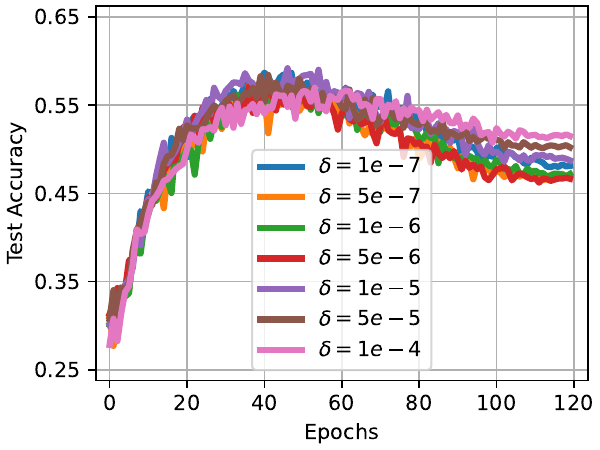}}
\hspace{0.04\textwidth}
\subfloat[L1]
{\includegraphics[width=0.25\textwidth]{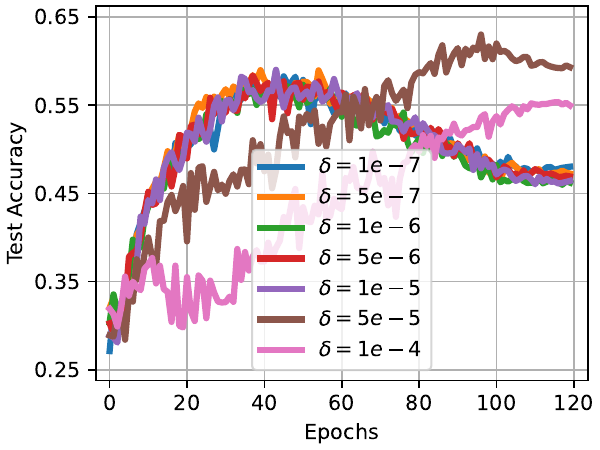}}
\caption{Test accuracies of ANL-CE on CIFAR-10 under 0.8 symmetric noise with different
regularization methods and different parameters. $\delta$ is the weight of regularization term of ANL-CE.}
\label{fig: reg compare}
\end{figure*}

\textbf{Can APL outperform our ANL by changing the regularization method?}
We apply different regularization methods to NCE+RCE for training on CIFAR-10 under 0.8 symmetric noise and compare the results with our ANL-CE.
Specifically, we train networks using loss functions in the form of $\alpha  NCE + \beta  RCE + \delta  \text{Reg}$, where $\alpha, \beta, \delta > 0$ are parameters and $\text{Reg}$ is the regularization term.
Similarly, we consider 3 regularization methods: a) L2, b) L1, and c) Jacobian \cite{sokolic2017robust, hoffman2019robust}.
We keep the $\alpha$ and $\beta$ of NCE+RCE the same as in Section~\ref{experiment: benchmarks} and tune $\delta \in \{1 \times 10^{-7}, 5 \times 10^{-7}, 1 \times 10^{-6}, 5 \times 10^{-6}, 1 \times 10^{-5},5 \times 10^{-5}, 1 \times 10^{-4}\}$ for all three methods.
As can be observed in Figure~\ref{fig: apl reg}, the performance of NCE+RCE can be improved by adapting the regularization method.
However, the improvement from changing the regularization method is limited, and the test accuracies of NCE+RCE are consistently lower than our ANL-CE, no matter how the regularization method is changed and how the parameter $\delta$ is varied.
This verifies that APL cannot outperform our ANL by changing the regularization method and also verifies the performance of our proposed NNLFs.

\begin{figure*}[htbp]
\centering
\captionsetup[subfigure]{labelformat=parens, labelsep=space, font=scriptsize}
\subfloat[L2]{\includegraphics[width=0.25\textwidth]{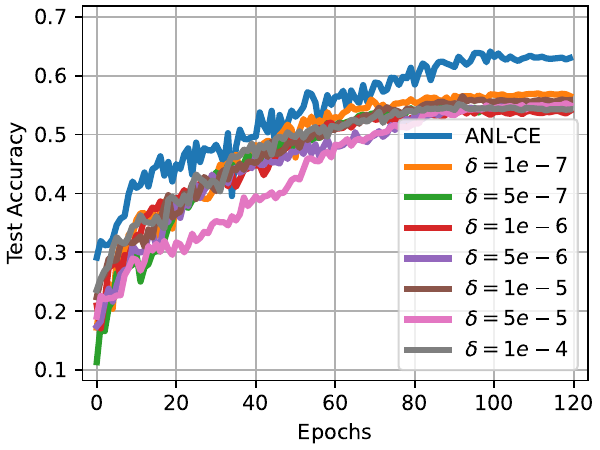}}
\hspace{0.04\textwidth}
\subfloat[L1]{\includegraphics[width=0.25\textwidth]{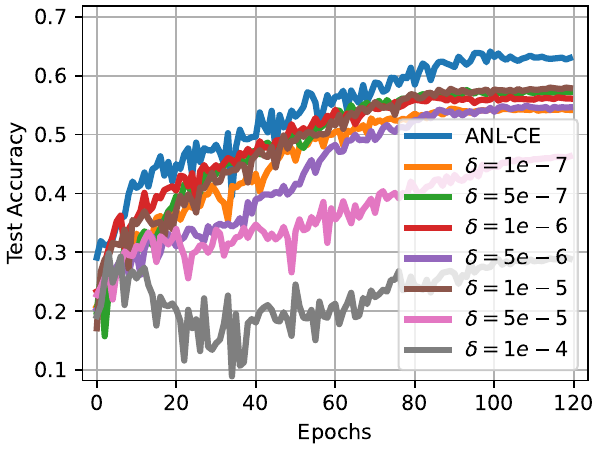}}
\hspace{0.04\textwidth}
\subfloat[Jacobian]{\includegraphics[width=0.25\textwidth]{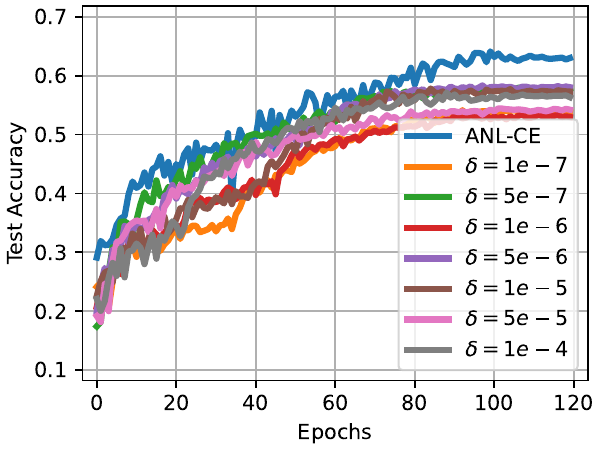}}
\caption{Test accuracies of NCE+RCE on CIFAR-10 under 0.8 symmetric noise with different regularization methods and different parameters. $\delta$ is the weight of regularization term of NCE+RCE. As a comparison, the blue line shows the test accuracy of our ANL-CE with the same data set and noise setting.}
\label{fig: apl reg}
\end{figure*}

\textbf{Gradients of NNCE.}
We conduct experiments with ANL-CE on CIFAR-10 under 0.6 and 0.8 symmetric noise.
For each sample, we calculate the gradient of our NNCE with respect to the predicted probability of each non-labeled class and take their maximum value, which is $\max_{j, j \ne y} \big( \frac{\partial \mathcal{L}_\text{NNCE}}{\partial \boldsymbol{p}(j|\boldsymbol{x})} \big)$.
The results are shown in Figure~\ref{fig: NNCE gradient} in the form of box plots.
As can be observed, as the number of training epochs increases, the gradients of the clean samples are concentrated at larger values and the gradients of the noisy samples are concentrated at smaller values.
The means of the gradients for noisy samples are higher than the medians and smaller than the means for clean samples’ gradients. 
This indicates that although the model is still incorrectly learning some noisy samples, these learned noisy samples constitute only a small fraction of all noisy samples in the training set.
In general, this result verifies that our NNLFs have the ability to focus more on clean samples and ignore noisy samples during the training process, and also verifies our discussion in Section \ref{sec: analysis_NNLFs}.

\begin{figure*}[htbp]
\centering
\captionsetup[subfigure]{labelformat=parens, labelsep=space, font=scriptsize}
\subfloat[CIFAR-10 under 0.6 symmetric noise]{\includegraphics[width=0.7\textwidth]{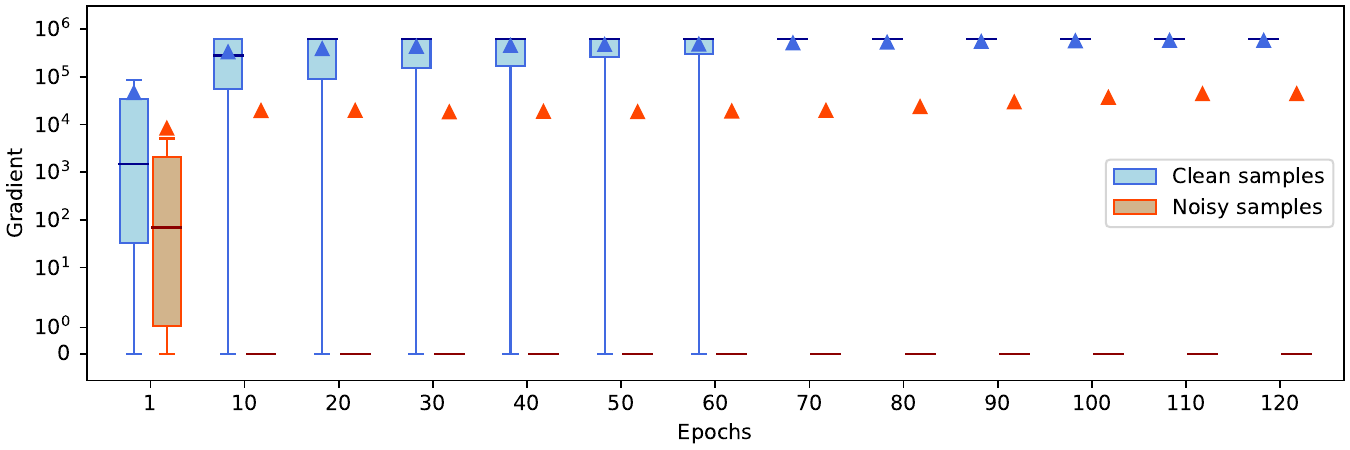}}

\subfloat[CIFAR-10 under 0.8 symmetric noise]{\includegraphics[width=0.7\textwidth]{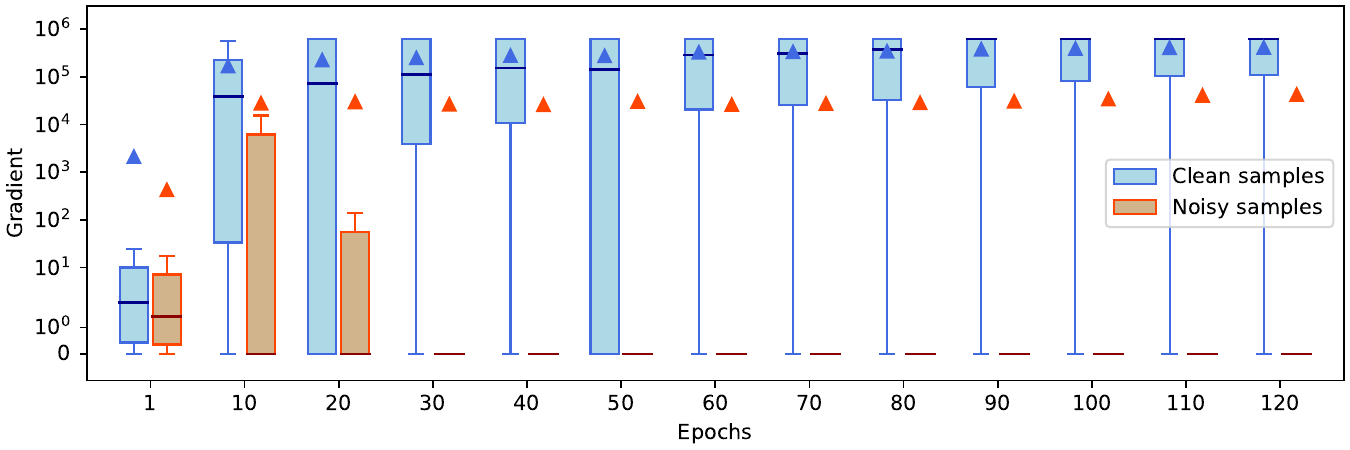}}
\caption{The gradient $\max_{j, j \ne y} \big( \frac{\partial \mathcal{L}_\text{NNCE}}{\partial \boldsymbol{p}(j|\boldsymbol{x})} \big)$ for clean and noisy samples. Blue boxes indicate clean samples, red boxes indicate noisy samples, blue triangles indicate the means of clean samples, and red triangles indicate the means of noisy samples. The outliers are ignored.}
\label{fig: NNCE gradient}
\end{figure*}

\clearpage
\textbf{Parameter Analysis.}
We apply different values of $\alpha$ and $\beta$ to NCE+NNCE for training on CIFAR-10 under 0.8 symmetric noise.
We test the combinations between $\alpha \in \{0.1, 0.5, 1.0, 5.0, 10.0\}$ and $\beta \in \{0.1, 0.5, 1.0, 5.0, 10.0\}$, where $\alpha$ is the weight of NCE and $\beta$ is the weight of NNCE.
The weight decay of NCE+NNCE is set to be the same as NCE+RCE.
As can be observed in Figure~\ref{fig: param analysis}, regardless of how $\alpha$ (the weight of NCE) varies, when $\beta$ (the weight of NNCE) increases, both the robustness and the fitting ability of the model improve, although overfitting occurs.
This verifies that our NNLFs are robust to noisy labels and have good fitting ability.
Although in ANL we use L1 regularization instead of L2 regularization (weight decay), the effect of $\alpha$ and $\beta$ in the loss functions created by ANL should be similar to that in NCE+NNCE.

\begin{figure*}[htbp]
\centering
\captionsetup[subfigure]{labelformat=parens, labelsep=space, font=scriptsize}
\subfloat[\begin{scriptsize}
    $\alpha=0.1, \beta=0.1$
\end{scriptsize}]{\includegraphics[width=0.19\textwidth]{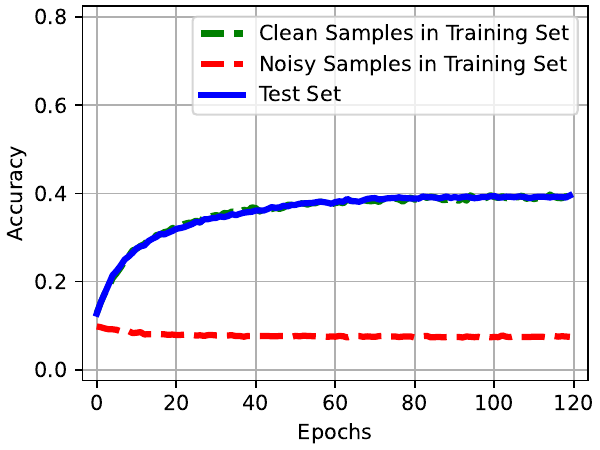}}
\subfloat[\begin{scriptsize}
    $\alpha=0.1, \beta=0.5$
\end{scriptsize}]{\includegraphics[width=0.19\textwidth]{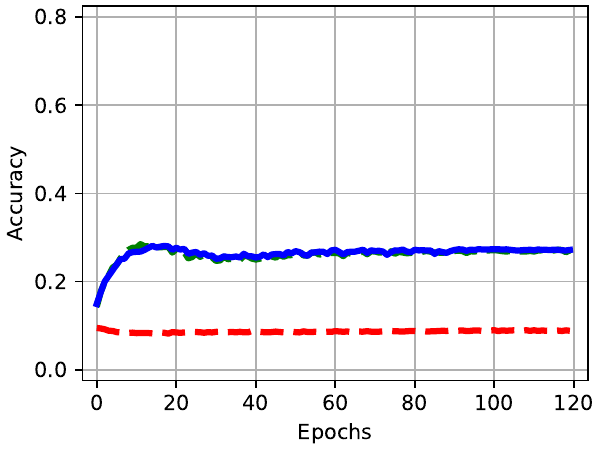}}
\subfloat[\begin{scriptsize}
    $\alpha=0.1, \beta=1.0$
\end{scriptsize}]{\includegraphics[width=0.19\textwidth]{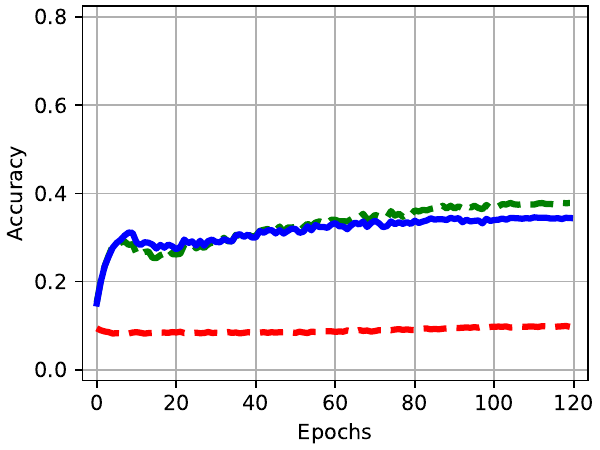}}
\subfloat[\begin{scriptsize}
    $\alpha=0.1, \beta=5.0$
\end{scriptsize}]{\includegraphics[width=0.19\textwidth]{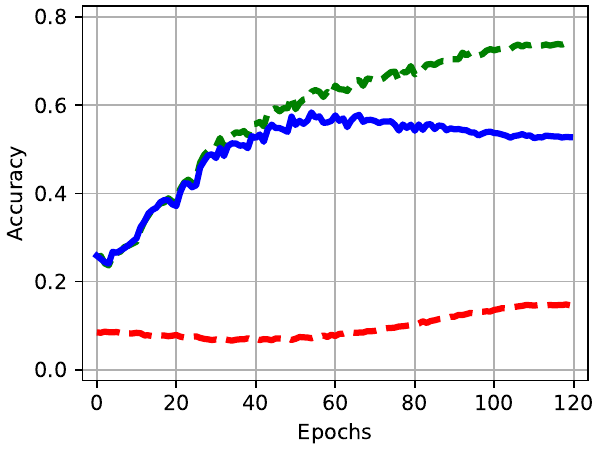}}
\subfloat[\begin{scriptsize}
    $\alpha=0.1, \beta=10.0$
\end{scriptsize}]{\includegraphics[width=0.19\textwidth]{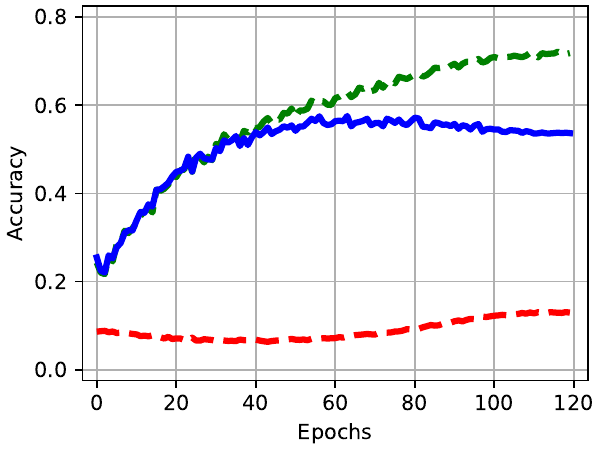}}

\subfloat[\begin{scriptsize}
    $\alpha=0.5, \beta=0.1$
\end{scriptsize}]{\includegraphics[width=0.19\textwidth]{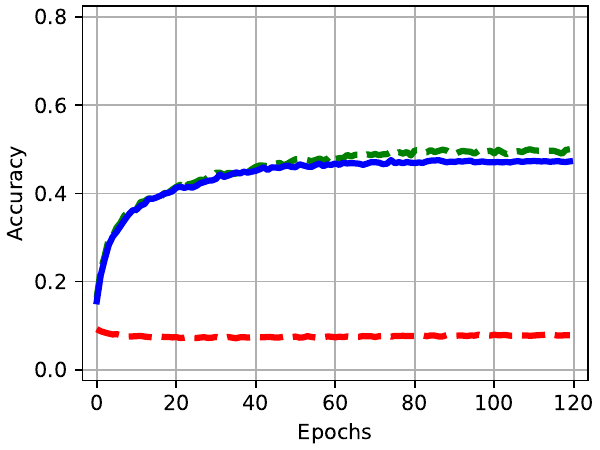}}
\subfloat[\begin{scriptsize}
    $\alpha=0.5, \beta=0.5$
\end{scriptsize}]{\includegraphics[width=0.19\textwidth]{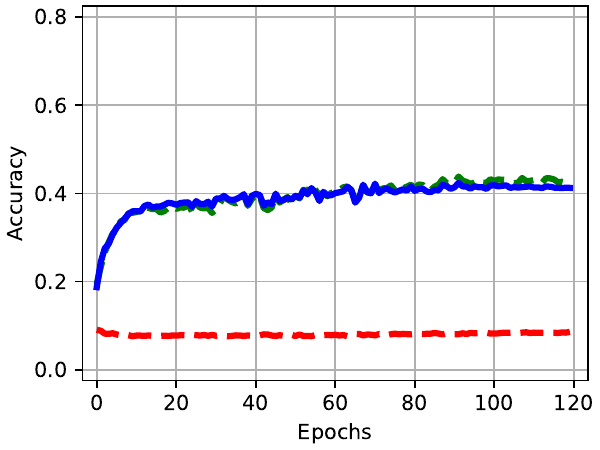}}
\subfloat[\begin{scriptsize}
    $\alpha=0.5, \beta=1.0$
\end{scriptsize}]{\includegraphics[width=0.19\textwidth]{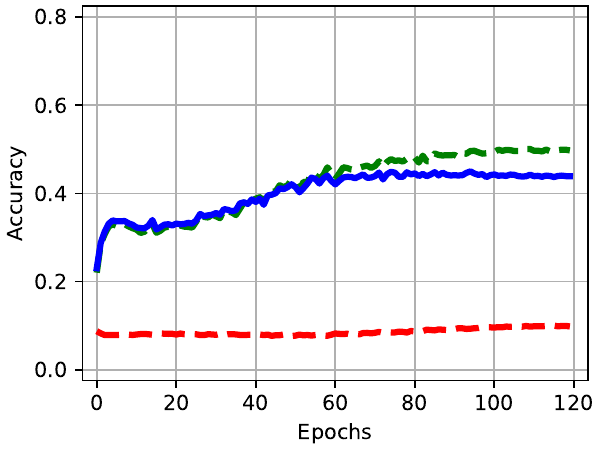}}
\subfloat[\begin{scriptsize}
    $\alpha=0.5, \beta=5.0$
\end{scriptsize}]{\includegraphics[width=0.19\textwidth]{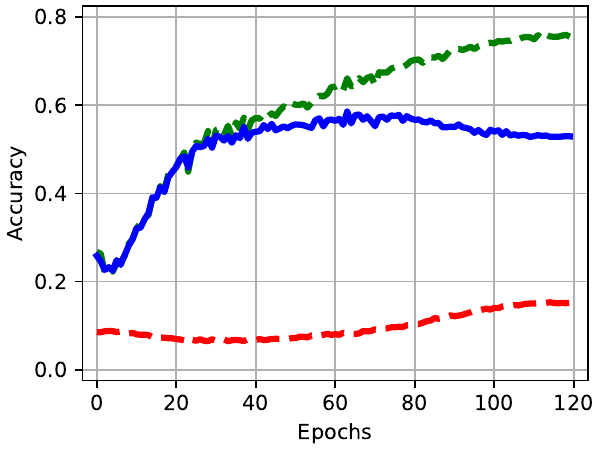}}
\subfloat[\begin{scriptsize}
    $\alpha=0.5, \beta=10.0$
\end{scriptsize}]{\includegraphics[width=0.19\textwidth]{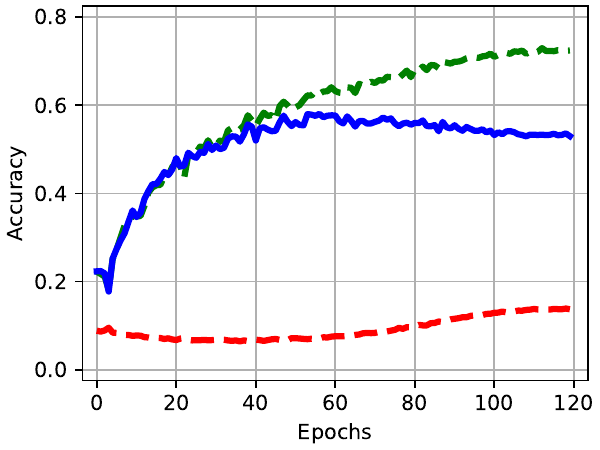}}

\subfloat[\begin{scriptsize}
    $\alpha=1.0, \beta=0.1$
\end{scriptsize}]{\includegraphics[width=0.19\textwidth]{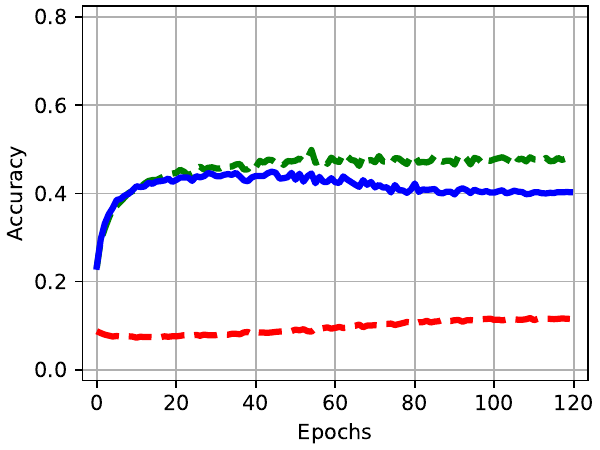}}
\subfloat[\begin{scriptsize}
    $\alpha=1.0, \beta=0.5$
\end{scriptsize}]{\includegraphics[width=0.19\textwidth]{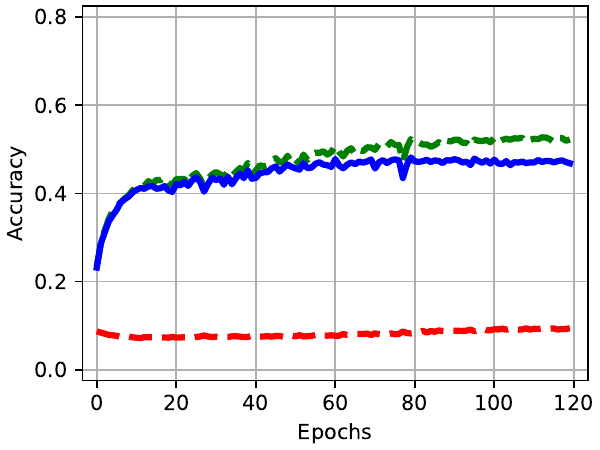}}
\subfloat[\begin{scriptsize}
    $\alpha=1.0, \beta=1.0$
\end{scriptsize}]{\includegraphics[width=0.19\textwidth]{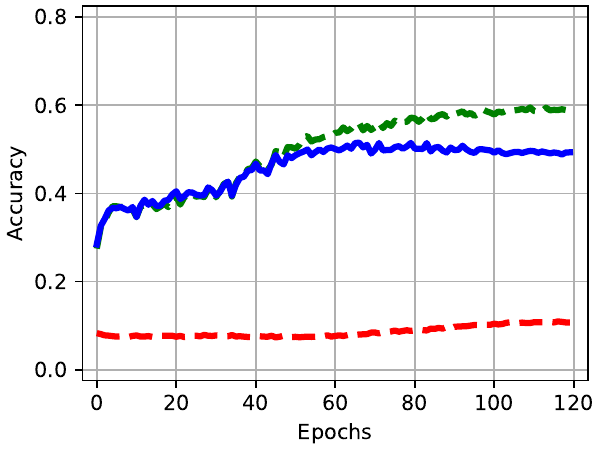}}
\subfloat[\begin{scriptsize}
    $\alpha=1.0, \beta=5.0$
\end{scriptsize}]{\includegraphics[width=0.19\textwidth]{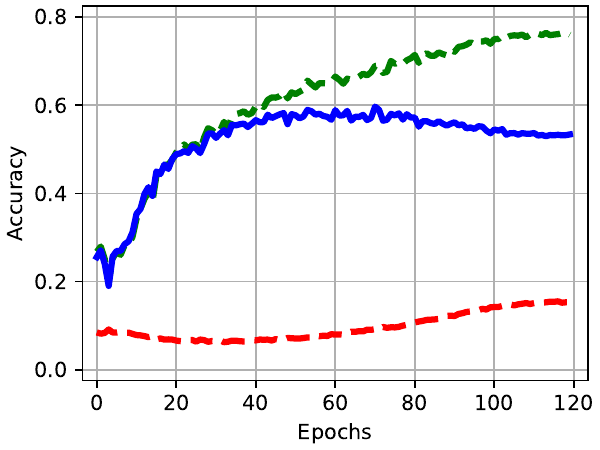}}
\subfloat[\begin{scriptsize}
    $\alpha=1.0, \beta=10.0$
\end{scriptsize}]{\includegraphics[width=0.19\textwidth]{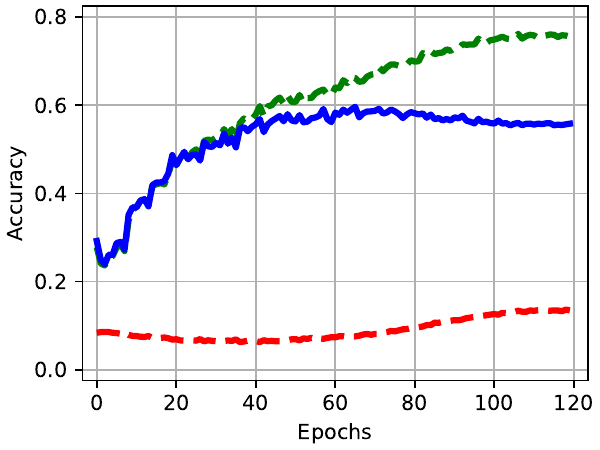}}

\subfloat[\begin{scriptsize}
    $\alpha=5.0, \beta=0.1$
\end{scriptsize}]{\includegraphics[width=0.19\textwidth]{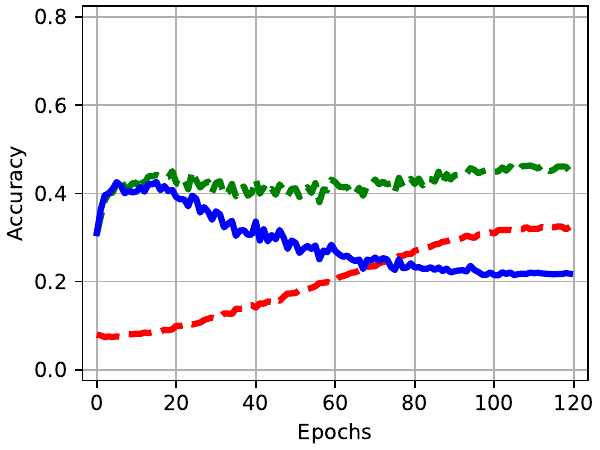}}
\subfloat[\begin{scriptsize}
    $\alpha=5.0, \beta=0.5$
\end{scriptsize}]{\includegraphics[width=0.19\textwidth]{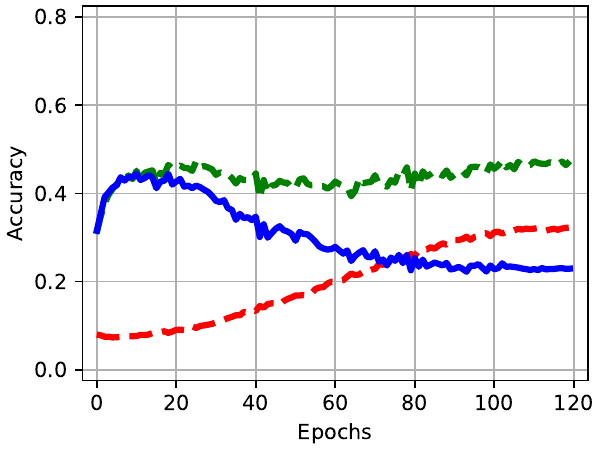}}
\subfloat[\begin{scriptsize}
    $\alpha=5.0, \beta=1.0$
\end{scriptsize}]{\includegraphics[width=0.19\textwidth]{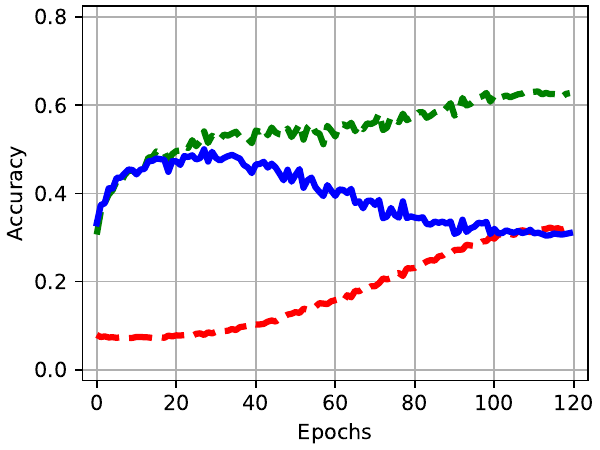}}
\subfloat[\begin{scriptsize}
    $\alpha=5.0, \beta=5.0$
\end{scriptsize}]{\includegraphics[width=0.19\textwidth]{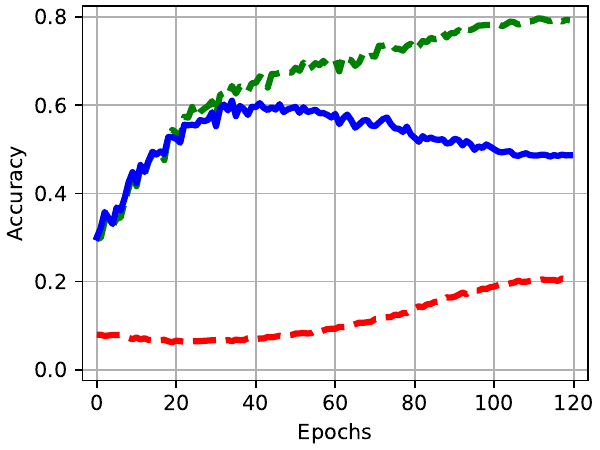}}
\subfloat[\begin{scriptsize}
    $\alpha=5.0, \beta=10.0$
\end{scriptsize}]{\includegraphics[width=0.19\textwidth]{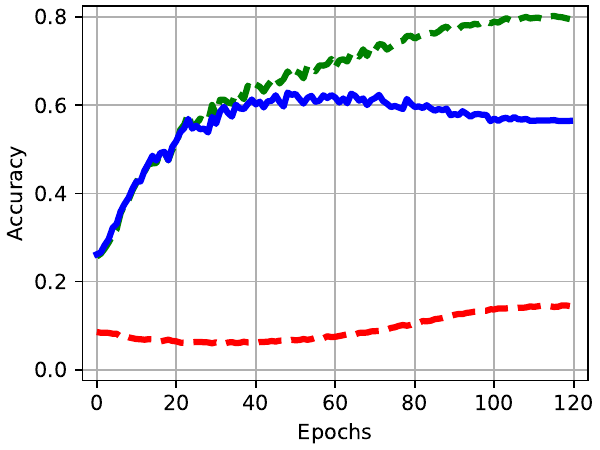}}

\subfloat[\begin{scriptsize}
    $\alpha=10.0, \beta=0.1$
\end{scriptsize}]{\includegraphics[width=0.19\textwidth]{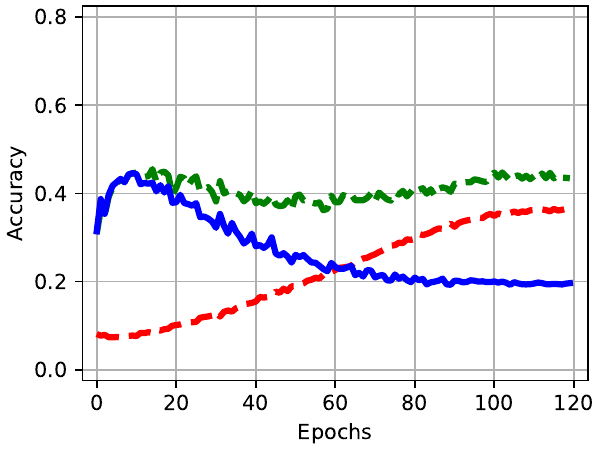}}
\subfloat[\begin{scriptsize}
    $\alpha=10.0, \beta=0.5$
\end{scriptsize}]{\includegraphics[width=0.19\textwidth]{figure/param_analysis/cifar10_sym_0.8_nce_nnce_a10.0_b0.1.pdf}}
\subfloat[\begin{scriptsize}
    $\alpha=10.0, \beta=1.0$
\end{scriptsize}]{\includegraphics[width=0.19\textwidth]{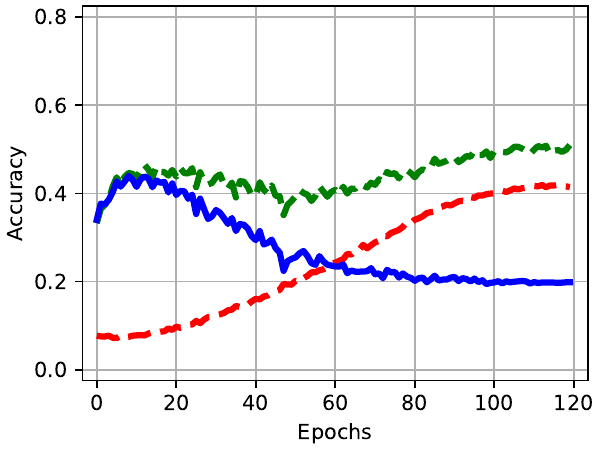}}
\subfloat[\begin{scriptsize}
    $\alpha=10.0, \beta=5.0$
\end{scriptsize}]{\includegraphics[width=0.19\textwidth]{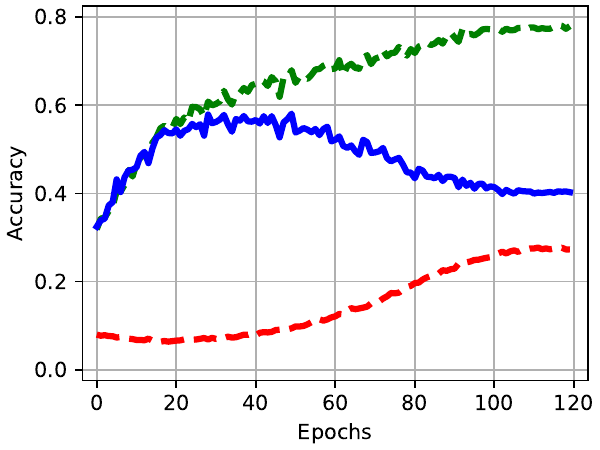}}
\subfloat[\begin{scriptsize}
    $\alpha=10.0, \beta=10.0$
\end{scriptsize}]{\includegraphics[width=0.19\textwidth]{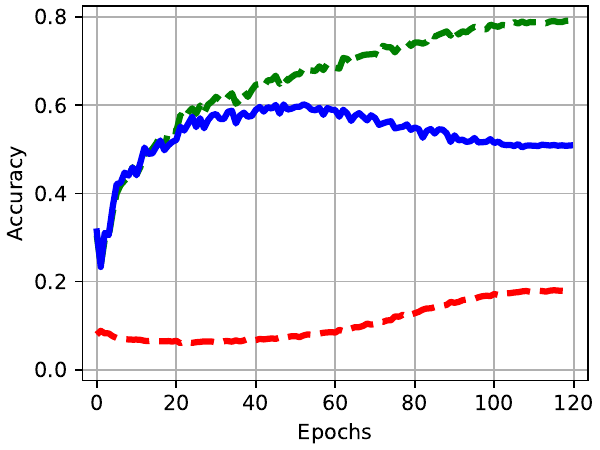}}
\caption{Training and test accuracies of NCE+NNCE on CIFAR-10 under 0.8 symmetric noise with different parameters. $\alpha$ is the weight of NCE and $\beta$ is the weight of NNCE. The accuracies of noisy samples in training set (red dashed line) should be as low as possible, since they are mislabeled.}
\label{fig: param analysis}
\end{figure*}

\clearpage
\subsection{Evaluation on Benchmark Datasets}
\label{appendix: benchmark}

For all baseline methods, the parameters are set to match their original papers.
Detailed parameter settings for CIFAR-10, CIFAR-100 and WebVision can be found in Table~\ref{table: param}.

For Animal-10N dataset, we use L2 regularization (weight decay) for GCE, and L1 regularization for ANL-CE.
We denote the regularization coefficient by $\delta$.
We tune the parameters $\{\delta\}$, $\{q, \delta\}$, $\{\alpha, \beta, \delta\}$ for CE, GCE and ANL-CE respectively.
We use the best parameters $\{1 \times 10^{-3}\}$, $\{0.5, 1 \times 10^{-4}\}$, $\{0.5, 1.0, 1 \times 10^{-6}\}$ for each method in our experiments.

For Clothing-1M dataset, we use L2 regularization (weight decay) for ANL-CE and set the coefficient the same as those for CE, GCE and NCE+RCE.
We tune the parameters \(\{q\}\), $\{\alpha, \beta\}$ and $\{\alpha, \beta\}$ for GCE, NCE+RCE and ANL-CE respectively.
We use the best parameters \(\{0.6\}\), $\{10.0, 1.0\}$ and $\{5.0, 0.1\}$ for each method in our experiments.

\begin{table}[htbp]
\caption{Parameters settings for different methods on CIFAR-10/-100 datasets}
\label{table: param}
\begin{center}
\begin{tabular}{c|ccc}
    \toprule
    Method & CIFAR-10 & CIFAR-100 & WebVision \\
    \midrule
    CE (-) & (-) & (-) & (-)  \\
    FL ($\gamma$) & (0.5) & (0.5) & - \\
    MAE (-) & (-) & (-) & - \\
    GCE ($q$) & (0.7) & (0.7) & (0.7) \\
    SCE ($\alpha, \beta$) & (0.1, 1.0) & (6.0, 0.1) & (10.0, 1.0) \\
    PHCE ($\tau$) & (2) & (8) & -  \\
    TCE ($t$) & (2) & (8) & - \\
    APL ($\alpha, \beta$) & (1.0, 1.0) & (10.0, 0.1) & (50.0, 0.1) \\
    ALF ($\alpha, \beta, a, q$) & (1.0, 4.0, 6.0, 1.5) & (10.0, 0.1, 1.8, 3.0) & (50.0, 0.1, 2.5, 3.0) \\
    ANL-CE ($\alpha, \beta, \delta$) & (5.0, 5.0, $5 \times 10^{-5}$) & (10.0, 1.0, $5 \times 10^{-7}$) & (20.0, 1.0, $6 \times 10^{-6}$) \\
    ANL-FL ($\alpha, \beta, \delta, \gamma$) & (5.0, 5.0, $5 \times 10^{-5}$, 0.5) & (10.0, 1.0, $5 \times 10^{-7}$, 0.5) & (20.0, 1.0, $6 \times 10^{-6}$, 0.5) \\
    ANL-CE* ($\alpha, \beta, \delta, \lambda$) & (5.0, 5.0, $5 \times 10^{-5}$, 2) & (10.0, 1.0, $5 \times 10^{-7}$, 0.01) & - \\
    \bottomrule
\end{tabular}
\end{center}
\end{table}

\end{document}